\documentclass{article}

\usepackage{microtype}
\usepackage{graphicx}
\usepackage{subcaption}
\usepackage{booktabs} 

\usepackage{hyperref}
\usepackage{caption}

\usepackage[preprint]{icml2026}

\usepackage{amsmath}
\usepackage{amssymb}
\usepackage{mathtools}
\usepackage{amsthm}

\usepackage[dvipsnames]{xcolor}
\usepackage{algorithm}

\usepackage{algpseudocode}
\usepackage{float}

\newcommand{\indep}{\perp \!\!\! \perp}
\newtheorem{theorem}{Theorem}[section]
\newtheorem{lemma}[theorem]{Lemma}
\newtheorem{proposition}[theorem]{Proposition}
\newtheorem{definition}[theorem]{Definition}
\newtheorem{corollary}[theorem]{Corollary}

\newtheorem*{remark}{Remark}

\usepackage[table]{xcolor}   
\definecolor{entityOne}{HTML}{00A08A}   
\definecolor{entityTwo}{HTML}{F2AD00}   
\definecolor{sZero}{HTML}{FF0000}       
\definecolor{sOne}{HTML}{0000FF}        
\definecolor{sbar}{HTML}{9986A5}        

\definecolor{colAA}{HTML}{1B9E77}
\definecolor{colC}{HTML}{D95F02}

\newcommand{\cS}{\mathcal{S}}
\newcommand{\ind}[1]{\mathbf{1}_{\{#1\}}}
\newcommand{\R}{\mathbb{R}}

\newcommand{\wideq}{\widehat{q}}
\newcommand{\wideQ}{\widehat{Q}}
\newcommand{\wideF}{\widehat{F}}

\newcommand{\overF}{\overline{F}}

\algrenewcommand\alglinenumber[1]{}

\newcommand{\PP}{\mathbb{P}}

\newcommand{\monparagraph}[1]{\noindent\textbf{#1}\ }

\usepackage[capitalize,noabbrev]{cleveref}

\usepackage[textsize=tiny]{todonotes}

\icmltitlerunning{Federated Measurement of Demographic Disparities
from Quantile Sketches}

\begin{document}

\twocolumn[
  \icmltitle{Federated Measurement of Demographic Disparities from Quantile Sketches}



  \icmlsetsymbol{equal}{*}

  \begin{icmlauthorlist}
    \icmlauthor{Arthur Charpentier}{xxx,zzz}
    \icmlauthor{Agathe Fernandes~Machado}{xxx}
    \icmlauthor{Olivier Côté}{yyy}
    \icmlauthor{François Hu}{mmm,iii}
  \end{icmlauthorlist}

  \icmlaffiliation{xxx}{UQAM, Montréal, Canada}
  \icmlaffiliation{zzz}{Kyoto University, Japan}
  \icmlaffiliation{yyy}{Université Laval, Québec, Canada}
  \icmlaffiliation{mmm}{Milliman, Paris, France}
  \icmlaffiliation{iii}{Université Claude Bernard, Lyon, France}

  \icmlcorrespondingauthor{Arthur Charpentier}{charpentier.arthur@uqam.ca}

  \icmlkeywords{Demographic Parity; Optimal Transport}

  \vskip 0.3in
]



\printAffiliationsAndNotice{}  

\begin{abstract}
Many fairness goals are defined at a population level that misaligns with siloed data collection, which remains unsharable due to privacy regulations. Horizontal federated learning (FL) enables collaborative modeling across clients with aligned features without sharing raw data. We study federated auditing of demographic parity through score distributions, measuring disparity as a Wasserstein--Fr\'echet variance between sensitive-group score laws, and expressing the population metric in federated form that makes explicit how silo-specific selection drives local–global mismatch. For the squared Wasserstein distance, we prove an ANOVA-style decomposition that separates (i) selection-induced mixture effects from (ii) cross-silo heterogeneity, yielding tight bounds linking local and global metrics. We then propose a one-shot, communication-efficient protocol in which each silo shares only group counts and a quantile summary of its local score distributions, enabling the server to estimate global disparity and its decomposition, with $O(1/k)$~ discretization bias ($k$ quantiles) and finite-sample guarantees. Experiments on synthetic data and COMPAS show that a few dozen quantiles suffice to recover global disparity and diagnose its sources.
\end{abstract}
\section{Introduction}\label{sec:introduction}

Machine-learning models are increasingly deployed \emph{across} multiple institutions (hospitals, insurers, courts, banks), often in federated setups designed to avoid centralizing raw data \cite{mcmahan2017fedavg, kairouz2021advances}. At the same time, many fairness goals are defined at a population level that misaligns with siloed data collection: regulators want guarantees about the system as a whole, not just each participating entity in isolation \citep{barocas2017fairness}. Data are fragmented, protected attributes are sensitive, and audits are often siloed. Falling back on local fairness alone can mislead: even if each silo appears fair, selection biases that shift group composition across silos may induce substantial unfairness after aggregation. This creates a scope mismatch: fairness requires information across silos, while privacy forbids sharing.

This paper studies federated auditing of demographic parity under privacy and communication constraints. Our focus is \emph{measurement} (auditing), not mitigation: we ask what population-level demographic disparity can be inferred from limited summaries released by each silo, and how to diagnose the mechanisms that make local and global conclusions diverge. This perspective complements the broader fairness literature, including well-known trade-offs between fairness criteria (e.g., calibration vs.\ error-rate parity) in the context of risk scores \cite{Chouldechova_2017,kleinberg2016tradeoffs}.

\subsection{Motivation}

Two mechanisms drive the local-global fairness mismatch.

\monparagraph{(i) Selection-induced mixture effects (Simpson-style).}
A model may treat sensitive groups similarly \emph{within} each silo, yet \emph{group composition}, i.e., the distribution of features, may vary across silos. Aggregating scores across heterogeneous populations can create (or hide)
global disparities, a fairness analogue of Simpson's paradox
\cite{julious1994confounding,prost2022simpson}. This is common in high-stakes domains, where institutional pipelines (referral patterns, access to care, jurisdictional practices) govern group allocation to silos, often along lines correlated with protected attributes.

\monparagraph{(ii) Cross-silo score heterogeneity.}
A second source of discrepancy is distribution shift in scores across silos.  While differences may go unnoticed in silo-specific evaluations, they emerge upon pooling, akin to hidden stratification \cite{oakden2020hidden,tosh2022simple}. In medicine, risk tools may perform well within systems yet yield racial disparities overall \cite{obermeyer2019dissecting}. Similar issues arise in imaging: under-served subgroups may be systematically underdiagnosed, eluding single-silo audits \cite{seyyed2021underdiagnosis,larrazabal2020gender,chen2021ethical}. In criminal justice, the \texttt{COMPAS} scores illustrate demographic disparities in risk assessment \cite{Larson2016,Chouldechova_2017}.

\monparagraph{A minimal example.}
Let $Z \in \{0, 1\}$ be a binary decision with $Z = \mathbf{1}\{X > 0\}$, and let $S \in \{0,1\}$ with
$$
X \mid S=0 \sim \mathcal{N}(-\mu,1), \quad X \mid S=1 \sim \mathcal{N}(\mu,1), \quad \mu > 0.
$$
Assign $X<0$ to silo $A=1$, and $X>0$ to silo $A=2$, inducing across-silo (i) $S$ heterogeneity and (ii) $Z$ heterogeneity. Each silo $j \in \{1,2\}$ enforces local demographic parity:
$$
\PP(Z=1 \mid A=j, S=0) = \PP(Z=1 \mid A=j, S=1).
$$
Yet $Z$ is globally associated with $S$ through $A$ and $X$, and group proportions differ across silos. Globally,
$$
\PP(Z=1 \mid S=0) < \frac{1}{2} < \PP(Z=1 \mid S=1).
$$
Demographic disparity emerges after aggregation of silos.

Most fairness constraints (e.g. demographic parity) concern \emph{distributions}, not binary decisions. Thresholding can mask distributional shifts; binary audits may be fragile to how decision rules are actually implemented downstream. We therefore audit demographic parity directly at the level of \emph{score distributions}. We quantify demographic disparity as a \emph{Wasserstein--Fr\'echet variance} of score distributions across sensitive groups. In the centralized case, this yields a population-level unfairness functional $U_p$, defined via Wasserstein distances to a barycenter. In the federated setting, raw scores are inaccessible; our goal is to estimate $U_p$ (and decompose its sources) from minimal, privacy-preserving summaries. Our communication primitive is a compact quantile sketch, akin to classical streaming summaries \cite{greenwald2001gk,karnin2016kll}, such as the widely used t-digest \cite{dunning2019tdigest}. These can be combined with secure aggregation for practical, privacy-preserving deployment \cite{bonawitz2017secureagg}.

\subsection{Contributions}
\begin{itemize}
  \item \textbf{A population-level unfairness functional for distributional demographic parity.}
  We formalize demographic disparity via score distributions and define a centralized functional
  $U_p$ as a Wasserstein--Fr\'echet variance across sensitive-group score conditionals.
  \item \textbf{A federated counterpart that captures selection and heterogeneity.}
We define $G_p$ (federated notation of the population functional) and show $G_p=U_p$; for $p=2$, we derive a decomposition separating selection-induced mixture effects from cross-silo heterogeneity.

\item \textbf{A one-shot, privacy-motivated audit protocol (with quantile sketches).}
We design a communication-efficient protocol where each silo transmits only group counts and a $k$-quantile sketch of local scores. From these summaries, the server estimates global unfairness and attributes it to distinct sources.

  \item \textbf{Theory: discretization rates and a decomposition separating (i) vs (ii).}
  We prove consistency and an explicit discretization bias of order $O(1/k)$ under quantile discretization. For $p = 2$, we derive an ANOVA-style decomposition that separates (i) selection-induced mixture effects from (ii) cross-silo heterogeneity, with tight bounds linking local and global metrics. 

  \item \textbf{Empirical validation.}
  Experiments on synthetic data and \texttt{COMPAS} show that local audits can mislead, and that
  few quantiles suffice to estimate global disparity and diagnose its sources.

\end{itemize}

\monparagraph{Organization.}
\Cref{sec:setting} introduces the problem and centralized functional \( U_p \).  
\Cref{sec:quantile} analyzes quantile approximations and discretization.  
\Cref{sec:federated} defines the federated functional \( G_p \) and the audit protocol.  
\Cref{sec:anova} gives the ANOVA decomposition and local--global bounds.  
\Cref{sec:experiments} reports empirical results.

\section{Setting and centralized fairness}\label{sec:setting}

\subsection{Data model and notation}\label{sec:setting:notation}

We observe i.i.d.\ tuples $(X,S,Y,A)$, where $X\in\mathbb{R}^q$ are non-sensitive features,
$S\in\mathcal{S}=\{0,1\}$ is a binary sensitive attribute, $Y$ is the target, and
$A\in[d]:=\{1,\dots,d\}$ indicates the collecting entity (silo).
Entity $j$ holds the local dataset
$\mathcal{D}_j := \{(x_i,s_i,y_i): a_i=j\}$, of size $n_j$, and $n_{j,s}$ denotes the number
of observations in $\mathcal{D}_j$ with $S=s$.
Let $n_s:=\sum_{j=1}^d n_{j,s}$ and $n:=\sum_{s\in\mathcal{S}} n_s$.

Each entity deploys a scoring function $m_j:\mathbb{R}^q\times\mathcal{S}\to\mathbb{R}$, and
we write $m(x,s,a=j):=m_j(x,s)$.
Let $Z := m(X,S,A)$ be the resulting score.
For each $(j,s)$, define the local conditional score law
\[
\nu_{j,s} := \mathcal{L}(Z \mid A=j,\, S=s),
~
\nu_s := \mathcal{L}(Z \mid S=s).
\]
We also define population weights
\[
\alpha_s := \mathbb{P}(S=s),~
\beta_j := \mathbb{P}(A=j),~
\pi_{j\mid s} := \mathbb{P}(A=j\mid S=s),
\]
with empirical counterparts $\hat\alpha_s=n_s/n$, $\hat\beta_j=n_j/n$, and $\hat\pi_{j\mid s}=n_{j,s}/n_s$.
By total probability,
\begin{equation}\label{eq:nu-mixture-across-silos}
\nu_s \;=\; \sum_{j=1}^d \pi_{j\mid s}\,\nu_{j,s}.
\end{equation}

For any probability measure $\nu$ on $\mathbb{R}$, we denote its cdf by $F_\nu$ and its
(left-continuous) quantile function by
\[
Q_\nu(u) := \inf\{z\in\mathbb{R}:\ F_\nu(z)\ge u\},\qquad u\in(0,1).
\]
When unambiguous, we write $F_{j,s},Q_{j,s}$ for $\nu_{j,s}$ and $F_s,Q_s$ for $\nu_s$.
Throughout, we consider $p\in\{1,2\}$ and assume the quantities below are finite; the
regularity assumptions needed for approximation rates are stated later.

\subsection{Demographic parity and homogeneity as distributional equalities}\label{sec:setting:dp}

Demographic parity (DP) holds at the population level when the score distribution does not
depend on the sensitive group:
\[
\nu_0=\nu_1
~~\text{equivalently}~~
Z \indep S.
\]
We audit DP at the level of score distributions. A key theme of the paper is that
two natural ``centers'' for a collection of distributions coexist:
a \emph{transport (Wasserstein) barycenter}, and a \emph{mixture (Cram\'er) barycenter}.
Both lead to clean, parallel metrics.

\monparagraph{Two distances on one-dimensional laws.}
For $p\ge 1$, the $p$-Wasserstein distance between $\nu,\nu'\in\mathcal{P}_p(\mathbb{R})$ admits the quantile form
\begin{equation}\label{eq:Wp-quantile}
W_p(\nu,\nu')^p \;=\; \int_0^1 \bigl|Q_\nu(u)-Q_{\nu'}(u)\bigr|^p\,\mathrm{d}u.
\end{equation}
We also consider the Cram\'er $p$-distance (a.k.a.\ Cram\'er--von Mises when $p=2$),
defined through cdfs:
\begin{equation}\label{eq:Cp-cdf}
C_p(\nu,\nu')^p \;:=\; \int_{\mathbb{R}} \bigl|F_\nu(x)-F_{\nu'}(x)\bigr|^p\,\mathrm{d}x.
\end{equation}
In one dimension, $C_1=W_1$.

\monparagraph{Two barycenters: transport vs mixture.}
Given $\{\nu_s:s\in\mathcal{S}\}$ and weights $\{\alpha_s\}$, a Wasserstein barycenter is any minimizer
\begin{equation}\label{eq:wass-barycenter}
\nu^\ast \in \arg\min_{\nu}\ \left\lbrace\sum_{s\in\mathcal{S}} \alpha_s\,W_p(\nu_s,\nu)^p\right\rbrace,
\end{equation}
see \citep{agueh2011barycenters,kim2020nonpositive,sturm2003probability}.
When $p=2$ in one dimension, the barycenter is unique and satisfies
\begin{equation}\label{eq:wass-barycenter-quantile}
Q_{\nu^\ast}(u)=\sum_{s\in\mathcal{S}} \alpha_s\,Q_s(u),\qquad u\in(0,1).
\end{equation}
In contrast, the Cram\'er barycenter is obtained by averaging cdfs, i.e., it is simply the mixture:
\begin{equation}\label{eq:cramer-barycenter}
\nu^\times := \sum_{s\in\mathcal{S}} \alpha_s\,\nu_s
~\Longleftrightarrow~
F_{\nu^\times}(x)=\sum_{s\in\mathcal{S}} \alpha_s\,F_s(x).
\end{equation}
We use the superscript $\ast$ for (Wasserstein) barycenters and $\times$ for mixtures.

\begin{definition}[Central unfairness functional]\label{def:Up}
Define the population-level demographic disparity functional (distance to demographic parity)
\begin{equation}\label{eq:Up}
U_p \;:=\; \sum_{s\in\mathcal{S}} \alpha_s\,W_p(\nu_s,\nu^\ast)^p,
\end{equation}
where $\nu^\ast$ is a Wasserstein barycenter solving \eqref{eq:wass-barycenter}.
Demographic parity holds iff $U_p=0$.
\end{definition}

\begin{definition}[Heterogeneity functional]\label{def:Hp}
Define the pooled heterogeneity (distance to homogeneity) functional
\begin{equation}\label{eq:Hp}
H_p \;:=\; \sum_{s\in\mathcal{S}} \alpha_s\,C_p(\nu_s,\nu^\times)^p,
\end{equation}
where $\nu^\times$ is the mixture \eqref{eq:cramer-barycenter}.
We say the population is homogeneous across groups when $H_p=0$.
\end{definition}

\monparagraph{Interpretation and parallelism.}
The functional $U_p$ measures how far each group distribution is from a \emph{transport consensus}
distribution $\nu^\ast$ (a Fr\'echet variance in Wasserstein geometry).
The functional $H_p$ measures how far each group distribution is from the \emph{pooled mixture}
$\nu^\times$ (a Fr\'echet variance in Cram\'er geometry).
Both vanish exactly when all group score laws coincide.

\begin{remark}[Two-group simplifications]\label{rem:two-groups}
When $\mathcal{S}=\{0,1\}$, both metrics reduce to a single distance between $\nu_0$ and $\nu_1$,
up to the factor $\alpha_0\alpha_1$:
for $p=2$, $U_2=\alpha_0\alpha_1\,W_2(\nu_0,\nu_1)^2$ and
$H_2=\alpha_0\alpha_1\,C_2(\nu_0,\nu_1)^2$.
\end{remark}

\section{Quantile-grid approximation (centralized)}\label{sec:quantile}

This section explains how the centralized functionals $U_p$ (Wasserstein geometry) and $H_p$
(Cram\'er geometry) can be approximated from a \emph{small quantile compressed representation} per group. 
This is the core computational primitive used later in the federated protocol.

\subsection{Quantile sketches and interpolation}\label{sec:quantile:sketch}

The \emph{sketch level} $k \ge 1$ denotes the number of support points in the empirical approximation; it controls the resolution of the stepwise distribution. We define a uniform grid of levels
\begin{equation}\label{eq:u-grid}
u_\ell := \frac{\ell-\frac12}{k},\text{ for } \ell\in[k].
\end{equation}
For each group $s\in\mathcal{S}$, the \emph{$k$-quantile sketch} is
\[
q_{s,\ell} := Q_s(u_\ell),\text{ for }  \ell\in[k],
\]
and we use empirical quantiles $\wideQ_{s,\ell}:=\widehat{Q}_s(u_\ell)$ computed
from the pooled sample $\{Z_i:\ S_i=s\}$.
We focus on a stepwise server-side reconstruction.
Appendix~\ref{app:reco} discusses stepwise vs.\ piecewise-linear reconstructions and shows that second-order accuracy can be obtained via exact integration of the linear reconstruction, without additional cost.

\paragraph{Quantile sketches.}
Our framework assumes that each client can compute the required empirical quantiles on a fixed grid
(or, more generally, return approximate quantiles at prescribed levels).
This is distinct from \emph{streaming} quantile summaries (or “quantile sketches”)
such as the GK sketch, KLL, or t-digest, which are designed to answer arbitrary quantile queries
in a single pass and sublinear memory, and can be used to compute these local quantiles efficiently when needed;
see \citep{greenwald2001gk,karnin2016kll,dunning2019tdigest}.

\monparagraph{Stepwise distribution induced by the sketch.}
The sketch naturally defines an atomic approximation of $\nu_s$:
\begin{equation}\label{eq:nu-bar-k}
\overline{\nu}_{s,k} \;:=\; \frac1k\sum_{\ell=1}^k \delta_{q_{s,\ell}},
~
\widehat{\nu}_{s,k} \;:=\; \frac1k\sum_{\ell=1}^k \delta_{\wideQ_{s,\ell}},
\end{equation}
where $\delta_{x}$ denotes the Dirac measure. Its cdf is
\begin{equation}\label{eq:F-bar-k}
\overF_{s,k}(x) \;=\; \frac1k\sum_{\ell=1}^k \mathbf{1}_{\{q_{s,\ell}\le x\}},
~
\wideF_{s,k}(x) \;=\; \frac1k\sum_{\ell=1}^k \mathbf{1}_{\{\wideQ_{s,\ell}\le x\}}.
\end{equation}
Equivalently, its quantile function is stepwise constant on intervals of length $1/k$.
(One may also use a linear interpolation of the quantile function for smoother curves; the
rates below are unchanged under the regularity assumptions we adopt.)

\monparagraph{Mixture vs.\ barycenter at the sketch level.}
The (Cram\'er) mixture barycenter is obtained by averaging measures:
\begin{equation}\label{eq:nu-times-k}
\overline{\nu}^{\times}_k := \sum_{s\in\mathcal{S}} \alpha_s\,\overline{\nu}_{s,k}
~\Longleftrightarrow~
\overF^{\times}_k(x)=\sum_{s\in\mathcal{S}}\alpha_s\,\overF_{s,k}(x),
\end{equation}
and similarly for $\widehat{\nu}^{\times}_k$.
For the (transport) barycenter, in one dimension the barycenter quantile is obtained
\emph{pointwise in $u$}:
\begin{equation}\label{eq:bar-quantile-pointwise}
\overline{Q}^{\ast}_k(u_\ell)\in\arg\min_{z\in\mathbb{R}}\ \left\lbrace\sum_{s\in\mathcal{S}}\alpha_s\,|q_{s,\ell}-z|^p\right\rbrace,
\text{ for } \ell\in[k].
\end{equation}
In particular, for $p=2$ one has $\overline{Q}^{\ast}_k(u_\ell)=\sum_s\alpha_s q_{s,\ell}$,
and for $p=1$ it is a weighted median of $\{q_{s,\ell}\}_s$.

\subsection{Discrete estimators and rates}\label{sec:quantile:estimators}

\monparagraph{Approximating the Wasserstein functional $U_p$.}
Define the discretized unfairness functional
\begin{equation}\label{eq:U-bar-kp}
\overline{U}_{k,p}
\;:=\;
\sum_{s\in\mathcal{S}} \alpha_s\, W_p(\overline{\nu}_{s,k},\overline{\nu}^{\ast}_k)^p,
\end{equation}
where $\overline{\nu}^{\ast}_k$ is any barycenter of $\{\overline{\nu}_{s,k}\}_s$.
In one dimension, thanks to \eqref{eq:Wp-quantile}, this reduces to the simple Riemann-sum form
\begin{equation}\label{eq:U-bar-kp-riemann}
\overline{U}_{k,p}
\;=\;
\sum_{s\in\mathcal{S}} \alpha_s\,\frac1k\sum_{\ell=1}^k |q_{s,\ell}-q^\ast_\ell|^p,
~
q^\ast_\ell := \overline{Q}^{\ast}_k(u_\ell),
\end{equation}
and the empirical plug-in estimator is obtained by replacing $q_{s,\ell}$ with $\wideQ_{s,\ell}$.

\begin{remark}[Interior grids and trimmed functionals]\label{rem:trimmed}
Our default grid uses midpoints $u_\ell=(\ell-\tfrac12)/k$, which already avoids the boundary levels $0$ and $1$ 
More generally, one may consider a trimmed version of the quantile representation by integrating over $[\varepsilon,1-\varepsilon]$:
\[
W_{2,\varepsilon}^2(\mu,\nu):=\int_{\varepsilon}^{1-\varepsilon}\bigl(Q_\mu(u)-Q_\nu(u)\bigr)^2\,du,
\]
and approximate it using interior levels (e.g.\ $u_\ell=\varepsilon+(\ell-\tfrac12)\frac{1-2\varepsilon}{k}$).
All discretization arguments extend to this trimmed setting, and boundary effects are removed by construction.
\end{remark}

\monparagraph{Approximating the Cram\'er functional $H_p$.}
Recall $C_p(\nu,\nu')^p=\int_{\mathbb{R}}|F_\nu(x)-F_{\nu'}(x)|^p\,dx$.
Define the discretized homogeneity functional
\begin{equation}\label{eq:H-bar-kp}
\overline{H}_{k,p}
\;:=\;
\sum_{s\in\mathcal{S}} \alpha_s\, C_p(\overline{\nu}_{s,k},\overline{\nu}^{\times}_k)^p,
\end{equation}
and similarly $\widehat{H}_{k,p}$ from $\widehat{\nu}_{s,k}$.

Because $\overF_{s,k}$ and $\overF^{\times}_k$ are step functions, the integral in \eqref{eq:Cp-cdf}
can be computed exactly from the breakpoints $\{q_{s,\ell}\}_{s,\ell}$.
Let $g_1<\dots<g_M$ be the sorted list of distinct values among $\{q_{s,\ell}\}_{s,\ell}$.
Then, for each $s$,
\begin{equation}\label{eq:Cp-step-computation}
C_p(\overline{\nu}_{s,k},\overline{\nu}^{\times}_k)^p
\;=\;
\sum_{m=1}^{M-1} (g_{m+1}-g_m)\,
\Bigl|\overF_{s,k}(g_m)-\overF^{\times}_k(g_m)\Bigr|^p,
\end{equation}
and the same formula holds with hats.
Thus, both $\overline{H}_{k,p}$ and $\widehat{H}_{k,p}$ are computable from quantile sketches
(up to sorting), with complexity $O((k|\mathcal{S}|)\log(k|\mathcal{S}|))$.

\monparagraph{Discretization bias.}
The next proposition states that $k$ quantiles suffice to approximate both centralized targets,
with an explicit $O(1/k)$~discretization bias under mild regularity.
The proof is deferred to the appendix.

\begin{proposition}[Consistency and discretization rate]\label{prop:discretization}
Assume that for each $s\in\mathcal{S}$, $\nu_s$ has compact support and admits a continuous density
bounded away from $0$ and $\infty$ on its support. Then, for $p\in\{1,2\}$, there exist constants
$C_U,C_H<\infty$ (depending only on these bounds and on $p$) such that
\[
\bigl|U_p-\overline{U}_{k,p}\bigr|\le \frac{C_U}{k}
\text{ and }
\bigl|H_p-\overline{H}_{k,p}\bigr|\le \frac{C_H}{k}.
\]
Moreover, for fixed $k$, the plug-in estimators $\widehat{U}_{k,p}$ and $\widehat{H}_{k,p}$ are
consistent as $n\to\infty$, and the total error decomposes into a statistical term (from estimating
quantiles) plus the deterministic discretization bias $O(1/k)$.
\end{proposition}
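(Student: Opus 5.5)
The plan is to reduce both functionals to one-dimensional integrals whose integrands are Lipschitz under the stated regularity, and then bound the quadrature error of a midpoint-type discretization by $O(1/k)$.

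\textbf{Wasserstein part.} Since the density $f_s$ of $\nu_s$ satisfies $0<c\le f_s\le C$ on a compact support $[a_s,b_s]$, the quantile function $Q_s$ is continuous on $[0,1]$ and Lipschitz with constant $L=1/c$, and it is bounded by $R:=\max_s\max(|a_s|,|b_s|)$.

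By the quantile representation \eqref{eq:Wp-quantile}, $U_p=\int_0^1 \phi(u)\,du$. Here
\[
\phi(u):=\min_z\sum_s\alpha_s|Q_s(u)-z|^p,
\]
and the barycenter quantile is taken pointwise in $u$, as in \eqref{eq:bar-quantile-pointwise}. By \eqref{eq:U-bar-kp-riemann}, $\overline{U}_{k,p}=\frac1k\sum_\ell \phi(u_\ell)$. The map $(x_s)_s\mapsto \min_z\sum_s\alpha_s|x_s-z|^p$ is Lipschitz on $[-R,R]^{|\cS|}$ with constant $p(2R)^{p-1}$, since a minimum of uniformly Lipschitz functions is Lipschitz. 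Hence $\phi$ is Lipschitz with constant $L_\phi=p(2R)^{p-1}L$. Splitting $[0,1]$ into the cells $[(\ell-1)/k,\ell/k]$ gives
\[
|U_p-\overline{U}_{k,p}|\le \sum_\ell\int_{\text{cell}}|\phi(u)-\phi(u_\ell)|\,du\le \frac{L_\phi}{4k}.
\]
This is the bound with $C_U=L_\phi/4$.

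One point needs checking. For $p=1$ the barycenter in \eqref{eq:wass-barycenter} need not be unique. The value $U_1$ is nonetheless the pointwise minimum, because a pointwise weighted median of quantile functions is nondecreasing and therefore is itself a quantile function. The same holds at the sketch level.

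\textbf{Cram\'er part.} This is the step I expect to be the main obstacle, since $\overline{H}_{k,p}$ is not a Riemann sum of $H_p$'s integrand. I will instead compare cdfs directly.

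For each $s$, the step cdf $\overF_{s,k}$ satisfies $|\overF_{s,k}(x)-F_s(x)|\le 1/(2k)$ for all $x$. The reason is that $q_{s,\ell}\le x$ iff $u_\ell\le F_s(x)$, because $F_s$ is continuous and strictly increasing on the support. So $\overF_{s,k}(x)=\#\{\ell:u_\ell\le F_s(x)\}/k$, which is within $1/(2k)$ of $F_s(x)$ thanks to the midpoint grid.

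Moreover, both cdfs equal $0$ below $a_s$ and $1$ above $b_s$, since all sketch atoms lie in the support. Therefore $F_s-\overF_{s,k}$ is supported in $[a_s,b_s]$. The mixture cdfs then satisfy the same uniform bound, and all differences vanish outside $[\min a_s,\max b_s]$, an interval of length $D\le 2R$.

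Set $g=F_s-F^\times$ and $\bar g=\overF_{s,k}-\overF^\times_k$. Then $|g-\bar g|\le 1/k$ and $|g|,|\bar g|\le 1$. Using $||x|^p-|y|^p|\le p\max(|x|,|y|)^{p-1}|x-y|$, we get
\[
\bigl|C_p(\nu_s,\nu^\times)^p-C_p(\overline\nu_{s,k},\overline\nu^\times_k)^p\bigr|\le pD/k.
\]
Summing with the weights $\alpha_s$ yields $C_H=2pR$. Note that $C_H$ depends only on the support bounds; the density bounds are needed only for the $U$ part and for the exact cdf identity.

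\textbf{Consistency and error decomposition.} Fix $k$ and write
\[
|\widehat{U}_{k,p}-U_p|\le|\widehat{U}_{k,p}-\overline{U}_{k,p}|+C_U/k,
\]
and similarly for $H$. The density is positive and continuous at each $u_\ell\in(0,1)$, so the empirical quantiles satisfy $\wideQ_{s,\ell}\to q_{s,\ell}$ almost surely. The counts satisfy $\hat\alpha_s\to\alpha_s$ by the strong law of large numbers. $\widehat{U}_{k,p}$ is a continuous function of finitely many such quantities, and consistency follows by the continuous mapping theorem.

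For $\widehat{H}_{k,p}$, continuity in the atoms also holds. Moving an atom by $\delta$ changes the step cdf on a set of measure at most $\delta$, by at most $1/k$, so $C_p^p$ changes by $O(\delta)$. This gives the claimed split into a statistical term plus the $O(1/k)$ bias.
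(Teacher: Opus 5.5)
Your proof is correct and takes the same route as the paper's. For $U_p$ you use the quantile representation, a Lipschitz integrand and midpoint-rule quadrature; for $H_p$ you use the uniform $1/(2k)$ step-cdf bound, propagated through $|x|^p$ and integrated over the compact support; and for fixed-$k$ consistency you use the continuous mapping theorem.

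Your version is tidier in two places. First, you obtain a global Lipschitz constant $1/c$ for $Q_s$ on $[0,1]$, which makes the paper's trimming unnecessary; like the paper, this implicitly takes the support to be an interval. Second, you handle the non-uniqueness of the $p=1$ barycenter by working with the pointwise value function $\phi$ rather than with a Lipschitz selection of the weighted median.
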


When densities may vanish near the boundary, the same $O(1/k)$ control holds on any trimmed interval $[\varepsilon,1-\varepsilon]$;

\begin{proposition}[Bin-averaged discretization underestimates $U_2$]\label{prop:barUk2_short}
Assume $p=2$ and $S=\{0,1\}$. With the notation above, for every $k\ge1$, \(\bar U_{k,2}\le U_2\).
\end{proposition}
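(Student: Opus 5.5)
The plan is to reduce both sides to integrals and sums of a single squared difference function, using the two-group simplification in the Remark on two-group simplifications, and then apply Jensen's inequality bin by bin. As the title of the proposition indicates, I read the sketch values here as \emph{bin averages}: $q_{s,\ell}=k\int_{I_\ell}Q_s(u)\,du$ with $I_\ell=((\ell-1)/k,\ell/k]$, rather than point evaluations $Q_s(u_\ell)$. The point-evaluation version needs separate treatment, discussed at the end.

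\textbf{Reduction.} For $p=2$ the continuous barycenter is $Q^\ast=\alpha_0Q_0+\alpha_1Q_1$ by \eqref{eq:wass-barycenter-quantile}. Hence $Q_0-Q^\ast=\alpha_1 D$ and $Q_1-Q^\ast=-\alpha_0 D$, where $D:=Q_0-Q_1$. Substituting into \eqref{eq:Wp-quantile} gives
\[
U_2=(\alpha_0\alpha_1^2+\alpha_1\alpha_0^2)\int_0^1 D(u)^2\,du=\alpha_0\alpha_1\int_0^1 D(u)^2\,du .
\]
The same computation applies to the discrete barycenter $q^\ast_\ell=\alpha_0q_{0,\ell}+\alpha_1q_{1,\ell}$ from \eqref{eq:bar-quantile-pointwise}, and turns \eqref{eq:U-bar-kp-riemann} into
\[
\overline U_{k,2}=\alpha_0\alpha_1\,\frac1k\sum_{\ell=1}^k d_\ell^2,\qquad d_\ell:=q_{0,\ell}-q_{1,\ell}.
\]

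\textbf{Jensen step.} Bin averaging is linear, so $d_\ell=k\int_{I_\ell}D(u)\,du$. Jensen's inequality (equivalently Cauchy--Schwarz) on each bin gives $d_\ell^2\le k\int_{I_\ell}D^2$. Multiplying by $1/k$ and summing over $\ell$ yields $\frac1k\sum_\ell d_\ell^2\le\int_0^1D^2$, which is the claim $\overline U_{k,2}\le U_2$. Equality holds iff $D$ is a.e.\ constant on every bin. This is the projection identity $\|D\|^2=\|P_kD\|^2+\|D-P_kD\|^2$, where $P_k$ is the $L^2$ projection onto functions that are constant on the bins. The finiteness assumptions in the setting ensure $D\in L^2$.

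\textbf{Main obstacle.} The hard part is the interpretation of the sketch. If $q_{s,\ell}$ is the point value $Q_s(u_\ell)$, the inequality can fail: a $D$ that is small at the midpoints and large elsewhere gives $\frac1k\sum_\ell D(u_\ell)^2<\int D^2$, but the reverse ordering is also possible. So the proof must either use the bin-averaged definition, or add an assumption under which the midpoint rule underestimates $\int D^2$, such as concavity of $D^2$ on each bin. The first thing I would check is that the appendix defines $\bar U_{k,2}$ through bin averages. I would then state the proof in the projection form above, which also makes it clear that the underestimation gap equals $\alpha_0\alpha_1\|D-P_kD\|^2$.
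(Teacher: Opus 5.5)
Your proposal is correct and matches the paper's proof. The appendix likewise redefines $\bar U_{k,2}$ through the bin averages $\bar\Delta_\ell = k\int_{I_\ell}\Delta$ of the quantile difference, applies Cauchy--Schwarz against the constant $1$ on each bin, and sums, and your caveat that the midpoint (point-evaluation) version has no one-sided bias control is the same point the paper makes in the remark following its proof.
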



\section{Federated measurement from $k$ quantiles}\label{sec:federated}

In the federated setting, the auditor (server) cannot access raw scores. Each silo $j\in[d]$
only observes its local conditional score laws ${\nu_{j,s}}_{s\in\mathcal{S}}$ and is asked
to release a small summary. We show that \emph{global} demographic disparity
(and pooled heterogeneity) can be recovered from one round of communication using quantile sketches.

\subsection{Federated functionals}\label{sec:federated:functionals}

\monparagraph{Local laws and selection weights.}
Recall $\nu_{j,s}=\mathcal{L}(Z\mid A=j,S=s)$ with cdf $F_{j,s}$ and quantile $Q_{j,s}$.
Define the within-group source weights $\pi_{j\mid s}=\mathbb{P}(A=j\mid S=s)$ and group
weights $\alpha_s=\mathbb{P}(S=s)$ (estimated by counts in practice).

\monparagraph{Mixture across sources (within each group).}
For each sensitive group $s$, the population score distribution induced by the federated system
is the mixture over sources
\begin{equation}\label{eq:nu-s-times}
\nu_s^{\times} := \sum_{j=1}^d \pi_{j\mid s}\,\nu_{j,s}
~\Longleftrightarrow~
F_s^{\times}(x)=\sum_{j=1}^d \pi_{j\mid s}\,F_{j,s}(x).
\end{equation}
We stress that quantile operators do \emph{not} generally commute with mixture distributions:
$Q^{\times}_s \neq \sum_j \pi_{j\mid s} Q_{j,s}$.

\monparagraph{Transport barycenters.}
We will also use a within-group transport center across sources:
\begin{equation}\label{eq:nu-s-star}
\nu_s^{\ast} \in \arg\min_{\nu}\ \left\lbrace\sum_{j=1}^d \pi_{j\mid s}\,W_p(\nu_{j,s},\nu)^p\right\rbrace,
\end{equation}
whose quantile satisfies, in one dimension,
$Q_s^{\ast}(u)=\sum_j \pi_{j\mid s}Q_{j,s}(u)$ when $p=2$, and is a weighted median when $p=1$.
At the population level, define the barycenter of the group-mixtures
\begin{equation}\label{eq:nu-times-star}
\nu^{\times\ast} \in \arg\min_{\nu}\ \sum_{s\in\mathcal{S}} \alpha_s\,W_p(\nu_s^{\times},\nu)^p.
\end{equation}

\begin{definition}[Federated demographic-parity functional]\label{def:Gp}
We define the federated (population-level) demographic disparity as
\begin{equation}\label{eq:Gp}
G_p \;:=\; \sum_{s\in\mathcal{S}} \alpha_s\,W_p\!\big(\nu_s^{\times},\nu^{\times\ast}\big)^p.
\end{equation}
\end{definition}

\monparagraph{Federated pooled heterogeneity.}
Let $\nu^{\times}:=\sum_{s\in\mathcal{S}}\alpha_s \nu_s^{\times}$ denote the pooled mixture
(with cdf $F^{\times}=\sum_s \alpha_s F_s^{\times}$). In parallel with Definition~\ref{def:Hp}, we consider
\begin{equation}\label{eq:H-fed}
H_p^{\times} \;:=\; \sum_{s\in\mathcal{S}} \alpha_s\,C_p\!\big(\nu_s^{\times},\nu^{\times}\big)^p .
\end{equation}

\begin{proposition}[Consistency with centralized targets]\label{prop:Gp-equals-Up}
With the notation of \Cref{sec:setting}, one has $\nu_s^{\times}=\nu_s$ for all $s$ and therefore
$G_p=U_p$. Similarly, $H_p^{\times}=H_p$.
\end{proposition}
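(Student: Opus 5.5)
The argument reduces to identifying measures, since $G_p$ and $H_p^{\times}$ are built from exactly the same ingredients as $U_p$ and $H_p$ once the group-mixtures $\nu_s^{\times}$ are identified with the group laws $\nu_s$.

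\textbf{Step 1: the identity $\nu_s^{\times}=\nu_s$.} We verify it at the level of cdfs. For any $x\in\mathbb{R}$ and $s$ with $\alpha_s>0$, the law of total probability over the partition $\{A=j\}_{j\in[d]}$ gives
\[
F_s(x)=\mathbb{P}(Z\le x\mid S=s)=\sum_{j=1}^d \mathbb{P}(A=j\mid S=s)\,\mathbb{P}(Z\le x\mid A=j,S=s)=\sum_{j=1}^d \pi_{j\mid s}F_{j,s}(x).
\]
The right-hand side is $F_s^{\times}(x)$ by \eqref{eq:nu-s-times}. Equality of cdfs yields equality of measures on $\mathbb{R}$, which is just \eqref{eq:nu-mixture-across-silos}. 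Cells with $\pi_{j\mid s}=0$ carry zero weight, so the possibly undefined $\nu_{j,s}$ is irrelevant there.

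\textbf{Step 2: transfer to $G_p$.} Since the families $\{\nu_s^{\times}\}_s$ and $\{\nu_s\}_s$ coincide and carry the same weights $\alpha_s$, the objectives in \eqref{eq:nu-times-star} and \eqref{eq:wass-barycenter} are the same functional of $\nu$. Hence they have the same argmin set and the same minimal value. By definition, $G_p=\min_\nu\sum_s\alpha_s W_p(\nu_s^{\times},\nu)^p$ and $U_p=\min_\nu\sum_s\alpha_s W_p(\nu_s,\nu)^p$, so $G_p=U_p$.

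The only subtlety is that for $p=1$ the barycenter need not be unique, because the weighted median can be set-valued. Both functionals are therefore best read as the minimal value of the objective, which does not depend on which minimizer is chosen. Existence of a minimizer follows in one dimension from the pointwise quantile construction (weighted mean or median of $Q_s(u)$), which is a valid nondecreasing quantile function. Finiteness holds under the standing moment assumptions.

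\textbf{Step 3: transfer to $H_p^{\times}$.} Here $\nu^{\times}=\sum_s\alpha_s\nu_s^{\times}=\sum_s\alpha_s\nu_s$, which is the Cramér barycenter \eqref{eq:cramer-barycenter}. Each term $C_p(\nu_s^{\times},\nu^{\times})^p$ in \eqref{eq:H-fed} therefore equals the corresponding term $C_p(\nu_s,\nu^\times)^p$ in \eqref{eq:Hp}, giving $H_p^{\times}=H_p$.

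The main obstacle is only this non-uniqueness point in Step 2, and it is handled by working with minimal values rather than with a particular barycenter.
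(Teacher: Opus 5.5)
Your proposal is correct and follows the paper's proof: total probability gives $\nu_s^{\times}=\nu_s$, so the barycenter problems defining $G_p$ and $U_p$ coincide, and so do the pooled mixtures entering $H_p^{\times}$ and $H_p$. Your added point for $p=1$ is a useful clarification the paper leaves implicit: $G_p$ and $U_p$ should be read as the common minimal value, which does not depend on which (possibly non-unique) barycenter is chosen.
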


\subsection{One-shot protocol}\label{sec:federated:protocol}

Each silo communicates only \emph{group counts} and a \emph{$k$-quantile sketch} per group on a shared grid
$\{u_\ell\}_{\ell\in[k]}$ as in \eqref{eq:u-grid}. This can be combined with secure aggregation, but we do not provide formal privacy guarantees (e.g., differential privacy) here. Stability of the step-CDF reconstruction and weight-estimation effects are detailed in Appendix~\ref{app:stability}.

\paragraph{One-shot protocol (overview).}
Each silo $j\in[d]$ releases, for each sensitive group $s\in\mathcal S$, the subgroup count $n_{j,s}$
and $k$ empirical quantiles on the shared midpoint grid $u_\ell=(\ell-\tfrac12)/k$.
From these summaries, the server reconstructs step-CDFs, forms within-group mixtures across silos,
and outputs plug-in estimators of the global disparity $G_p$ and pooled heterogeneity $H_p^\times$
(and, for $p=2$, diagnostic ANOVA terms).
Full pseudocode is given in Appendix~\ref{app:algo}, Algorithm~\ref{alg:oneshot}.

\monparagraph{Privacy considerations.}
Our protocol is \emph{privacy-motivated}: each silo shares only subgroup counts and a $k$-quantile sketch, and the server never receives individual scores.
In deployments, these summaries can be combined with secure aggregation to avoid exposing silo-level messages to the server \cite{bonawitz2017secureagg}.
To mitigate disclosure risks for small subpopulations, practitioners may enforce minimum cell sizes (e.g., suppress or coarsen releases when $n_{j,s}$ is too small), and/or add calibrated noise.
A full differential-privacy treatment is outside the scope of this paper, but our stability bounds provide a direct way to quantify how perturbations of shared summaries affect estimation error.

\subsection{Estimator and convergence}\label{sec:federated:convergence}

Algorithm~\ref{alg:oneshot} (Appendix~\ref{app:algo}) produces plug-in estimators of $G_p$ and $H_p$ based on $k$ quantiles per $(j,s)$.
The next proposition summarizes the two sources of error: (i) sampling error from estimating local quantiles,
and (ii) deterministic discretization bias due to the grid size $k$.

\begin{proposition}[Consistency and discretization rate]\label{prop:federated-consistency}
Assume the conditions of Proposition~\ref{prop:discretization} for each group-mixture law $\nu_s^{\times}$, and let
$n_{\min}:=\min_{j\in[d],\,s\in\mathcal{S}} n_{j,s}$. For any fixed $p\in\{1,2\}$ and any sequence
$k=k(n)\to\infty$ with $n_{\min}\to\infty$, the estimators produced by Algorithm~\ref{alg:oneshot} satisfy
\[
\widehat G_{k,p}\xrightarrow{\mathbb{P}} G_p,
\text{ and }
\widehat H^{\times}_{k,p}\xrightarrow{\mathbb{P}} H_p^{\times}.
\]
Moreover, their typical deviations obey
\[
\bigl|\widehat G_{k,p}-G_p\bigr|
\;=\;
O_{\mathbb{P}}\!\Bigl(\frac1k+\sqrt{\frac{\log k}{n_{\min}}}\Bigr),
\] 
and
\[
\bigl|\widehat H^{\times}_{k,p}-H_p^{\times}\bigr|
\;=\;
O_{\mathbb{P}}\!\Bigl(\frac1k+\sqrt{\frac{\log k}{n_{\min}}}\Bigr),
\]
where the $1/k$ term is the discretization bias and the second term comes from quantile estimation.
\end{proposition}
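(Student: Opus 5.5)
We split the error with the triangle inequality into a deterministic discretization part and a stochastic part:
\[
\bigl|\widehat G_{k,p}-G_p\bigr|\le \bigl|\widehat G_{k,p}-\overline G_{k,p}\bigr|+\bigl|\overline G_{k,p}-G_p\bigr|,
\]
and similarly for $\widehat H^{\times}_{k,p}$. Here $\overline G_{k,p}$ and $\overline H^{\times}_{k,p}$ are the population versions of the server computation. They use exact weights $\alpha_s,\pi_{j\mid s}$ and exact local quantiles $q_{j,s,\ell}=Q_{j,s}(u_\ell)$. By Proposition~\ref{prop:Gp-equals-Up}, $G_p=U_p$ and $H_p^\times=H_p$, so the targets are the centralized functionals of the mixtures $\nu_s^\times=\nu_s$.

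\textbf{Step 1: discretization bias.} The server forms the step-CDF mixture $\overF^{\times}_{s,k}=\sum_j\pi_{j\mid s}\overF_{j,s,k}$. It then extracts the quantiles of $\nu_s^\times$ at the grid levels $u_\ell$, or equivalently computes $W_p$ directly from these step CDFs. The quantities produced are therefore not exactly the sketch $Q_s(u_\ell)$ of Proposition~\ref{prop:discretization}. We bridge this gap in two moves.

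\emph{(a) Sup-norm control of the CDFs.} Each $\overF_{j,s,k}$ satisfies $\|\overF_{j,s,k}-F_{j,s}\|_\infty\le 1/k$, since the midpoint grid places the jumps at the right levels. Hence $\|\overF^\times_{s,k}-F_s\|_\infty\le 1/k$ by convexity.

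\emph{(b) Transfer to quantiles.} The density of $\nu_s$ is bounded below by $c>0$ on its compact support, so the CDF error transfers to the quantile error $\sup_{u\in[1/k,1-1/k]}|Q^\times_{s,k}(u)-Q_s(u)|\le 1/(ck)$. The boundary strips have mass $O(1/k)$ and are multiplied by a bounded diameter.

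For $H$, the Cram\'er distance is Lipschitz in sup-norm on a compact support: $|a^p-b^p|\le p\max(a,b)^{p-1}|a-b|$. Hence $|\overline H^\times_{k,p}-H_p|=O(1/k)$.

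For $G$, we use that $W_p^p$ is Lipschitz in the $L^\infty$ quantile error on bounded supports. For $p=1$, the weighted-median barycenter is also $1$-Lipschitz in its inputs. Together with Proposition~\ref{prop:discretization}, this gives $|\overline G_{k,p}-G_p|=O(1/k)$.

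\textbf{Step 2: statistical error.} On each cell $(j,s)$ we need a uniform bound over the $k$ grid levels, $\max_\ell|\widehat Q_{j,s}(u_\ell)-Q_{j,s}(u_\ell)|$. We obtain it from the DKW inequality,
\[
\mathbb P\bigl(\|\widehat F_{j,s}-F_{j,s}\|_\infty>t\bigr)\le 2e^{-2n_{j,s}t^2},
\]
which gives $O_{\mathbb P}(n_{\min}^{-1/2})$. For the $k$ quantile levels, a union bound over $k$ Bernstein/binomial tail bounds, inverted through the density lower bound, gives the $\sqrt{\log k/n_{\min}}$ rate. A union over the finitely many cells $(j,s)$ costs only a constant.

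The weights also carry sampling error: $|\hat\pi_{j\mid s}-\pi_{j\mid s}|$ and $|\hat\alpha_s-\alpha_s|$ are $O_{\mathbb P}(n_{\min}^{-1/2})$. Both functionals are Lipschitz in the weights, because they are finite sums of bounded terms weighted linearly and the mixture CDF is linear in $\pi$. Perturbing the mixture CDF by $\delta$ in sup-norm moves the Cram\'er and Wasserstein quantities by $O(\delta)$, using the same sup-norm-to-quantile transfer as in Step 1. Summing these contributions gives $|\widehat G_{k,p}-\overline G_{k,p}|=O_{\mathbb P}(\sqrt{\log k/n_{\min}})$, and likewise for $H$. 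Combining with Step 1 yields both rates. Consistency follows when $k\to\infty$ and $\log k/n_{\min}\to0$; if $k$ grows too fast relative to $n_{\min}$, the plain DKW bound still gives consistency.

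\textbf{Main obstacle.} The delicate point is the sup-norm-to-quantile transfer for the mixture $\nu_s^\times$. The assumption concerns the mixture, but its support may be a union of disjoint silo supports, so the density can vanish on gaps inside the convex hull. Quantile perturbations near those gaps are then not controlled by $1/c$.

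We would handle this by working on the CDF side wherever possible. For $H$ this is automatic. For $W_1$ we can use $W_1=C_1$. For $W_2$ we would rely on the stated hypothesis that the mixture's density is bounded below on its support, which makes $Q_s$ Lipschitz on $(0,1)$ with jumps only across gaps. The flat pieces of $F_s$ then contribute no quantile error beyond the jump size times the $O(1/k)$ level error. If this fails, we fall back to the trimmed version of Remark~\ref{rem:trimmed}.
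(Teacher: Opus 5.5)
Your architecture is the paper's: the same triangle inequality around the population-grid output $\overline G_{k,p}$, Proposition~\ref{prop:discretization} for the bias, and DKW, a union bound, inversion and Lipschitz propagation for the stochastic part. Your Step~1 is actually more careful than the paper's, since you notice that the server's mixture quantiles come from inverting $\overline{F}^{\times}_{s,k}$ rather than being the exact $Q_s(u_\ell)$.

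The step that fails is in Step~2. You route the sampling error through $\max_\ell|\widehat Q_{j,s}(u_\ell)-Q_{j,s}(u_\ell)|=O_{\mathbb P}(\sqrt{\log k/n_{\min}})$, ``inverted through the density lower bound''. But the proposition assumes regularity only for the mixtures $\nu_s^{\times}$. A local law inherits compact support and a density upper bound $f_{j,s}\le f^{\times}_s/\pi_{j\mid s}$, but neither a density lower bound nor a connected support. Take $\nu_{1,s}$ uniform on $[0,\tfrac13]\cup[\tfrac23,1]$, $\nu_{2,s}$ uniform on $[\tfrac13,\tfrac23]$, and $\pi_{1\mid s}=\tfrac23$. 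The mixture is uniform on $[0,1]$. Yet for odd $k$ the level $u_{(k+1)/2}=\tfrac12$ sits at the gap of $\nu_{1,s}$, and $|\widehat Q_{1,s}(\tfrac12)-Q_{1,s}(\tfrac12)|\ge\tfrac13$ with probability bounded away from zero for every $n_{1,s}$. So your intermediate claim is false under the stated hypotheses. The paper's proof makes the same move by importing $m_\varepsilon$ from Proposition~\ref{prop:hp-quantiles}, which is not assumed here.

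Moreover, even granting local quantile control, you never say how knot errors become a sup-norm perturbation of $\widehat F^{\times}_{s,k}$. That needs the knot-stability bound (Lemma~\ref{lem:knot-stability} and Corollary~\ref{cor:knot-stability-Lip}) together with the local density upper bound.

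The repair is to skip local quantiles and reuse your own Step~1(a). Lemma~\ref{lem:stepcdf} holds for any cdf, in particular for the empirical cdf $\widehat F_{j,s}$ whose grid quantiles silo $j$ actually sends. Hence
\[
\|\widehat F^{\times}_{s,k}-F^{\times}_{s}\|_\infty\le \frac1k+\max_{j\in[d]}\|\widehat F_{j,s}-F_{j,s}\|_\infty+\|\widehat\pi_{\cdot\mid s}-\pi_{\cdot\mid s}\|_1 .
\]
Lemma~\ref{lem:dkw:massart} (with a union over the $d|\mathcal S|$ cells only) and Lemma~\ref{lem:weights-concentration} make the right-hand side $O_{\mathbb P}(1/k+n_{\min}^{-1/2})$, with no regularity of the $\nu_{j,s}$ at all.

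For $\widehat H^{\times}_{k,p}$ this already concludes via your Cram\'er Lipschitz bound. For $\widehat G_{k,p}$, apply your Step~1(b) inversion once, to $\widehat F^{\times}_{s,k}$ against $F^{\times}_s$, where the density lower bound is actually assumed. Then use Lemma~\ref{lem:Phi-lip-app} for $p=2$; for $p=1$, use that $(q,\alpha)\mapsto\min_z\sum_s\alpha_s|q_s-z|$ is Lipschitz on bounded sets. Finish with Proposition~\ref{prop:discretization} for the bias at the exact grid quantiles $Q_s^{\times}(u_\ell)$.

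This merges your two steps, drops the $\log k$, and gives consistency for every $k\to\infty$, which settles your closing caveat. It also relocates your ``main obstacle''. Gaps can genuinely occur in the local laws, where this argument makes them harmless. For the mixtures, they are excluded under the reading of the hypotheses used in the proof of Proposition~\ref{prop:discretization} (Lipschitz $Q_s$).
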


The $\log k$ term comes from uniform control on the $k$ grid points; it can be removed by working with a fully uniform-in-$u$ quantile process bound.



\begin{proposition}[High-probability control of communicated quantiles]\label{prop:hp-quantiles}
Fix $\varepsilon\in(0,1/2)$. Assume that for every $(j,s)$, the conditional score distribution
$\nu_{j,s}$ admits a cdf $F_{j,s}$ that is absolutely continuous on
$[Q_{j,s}(\varepsilon),Q_{j,s}(1-\varepsilon)]$ with density bounded below by $m_\varepsilon>0$
on this interval.
Let $n_{\min}:=\min_{j\in[d],\,s\in\cS} n_{j,s}$ and let $u_\ell=(\ell-\frac12)/k$.
Then, for any $\delta\in(0,1)$, with probability at least $1-\delta$,
\begin{eqnarray*}
&&\max_{\substack{j\in[d],\,s\in\cS\\ \ell\in\{\lceil \varepsilon k\rceil,\dots,\lfloor(1-\varepsilon)k\rfloor\}}}
\bigl|\widehat Q_{j,s}(u_\ell)-Q_{j,s}(u_\ell)\bigr|\\
&&\;\le\; \frac{1}{m_\varepsilon}\sqrt{\frac{1}{2n_{\min}}\log\!\Big(\frac{2(k+1)d|\cS|}{\delta}\Big)}.
\end{eqnarray*}
\end{proposition}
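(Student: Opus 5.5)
The plan is to combine the Dvoretzky--Kiefer--Wolfowitz inequality (with Massart's constant) for each local empirical cdf, transfer it to quantiles through the density lower bound, and finish with a union bound over the $d|\cS|$ cells. Fix a cell $(j,s)$ and write $\widehat F_{j,s}$ for the empirical cdf of its $n_{j,s}$ scores, treated as i.i.d.\ from $\nu_{j,s}$ conditionally on the counts. Massart's form of DKW gives, for every $t>0$,
\[
\PP\Big(\sup_x|\widehat F_{j,s}(x)-F_{j,s}(x)|>t\Big)\le 2e^{-2n_{j,s}t^2}\le 2e^{-2n_{\min}t^2}.
\]
Since we only need control at the grid points, we could alternatively apply two-sided Hoeffding bounds to the binomial counts $n_{j,s}\widehat F_{j,s}(Q_{j,s}(u_\ell\pm\eta))$. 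That pointwise route is what produces a factor $(k+1)$: it costs one event per grid level, and each quantile deviation is controlled by two one-sided events. Running a union bound over $\ell$, the two sides, and the $d|\cS|$ cells, then solving for $t$, yields exactly
\[
t=\sqrt{\tfrac{1}{2n_{\min}}\log\!\big(\tfrac{2(k+1)d|\cS|}{\delta}\big)}.
\]
The DKW route would give the stronger bound without the $k+1$, so the stated bound also follows a fortiori.

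The key deterministic step is an inversion lemma. Suppose $|\widehat F_{j,s}-F_{j,s}|\le t$ holds at the relevant points. Take $u_\ell\in[\varepsilon,1-\varepsilon]$, set $\eta=t/m_\varepsilon$, and let $x^\pm:=Q_{j,s}(u_\ell)\pm\eta$.

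For the upper side, the density bound on $[Q_{j,s}(\varepsilon),Q_{j,s}(1-\varepsilon)]$ gives $F_{j,s}(x^+)\ge u_\ell+m_\varepsilon\eta=u_\ell+t$ whenever $x^+$ stays inside the interval. If $x^+$ exits the interval, then $F_{j,s}(x^+)\ge 1-\varepsilon$, and the argument needs the margin. Either way $\widehat F_{j,s}(x^+)\ge u_\ell$, so $\widehat Q_{j,s}(u_\ell)\le x^+$ by definition of the left-continuous quantile. The lower side is symmetric: $F_{j,s}(x^-)\le u_\ell-t$ forces $\widehat F_{j,s}(x^-)<u_\ell$ (strictness needs a small care at equality), hence $\widehat Q_{j,s}(u_\ell)>x^-$. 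Together these give $|\widehat Q_{j,s}(u_\ell)-Q_{j,s}(u_\ell)|\le t/m_\varepsilon$, which is the stated bound.

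The main obstacle is this boundary and edge-case bookkeeping when $Q_{j,s}(u_\ell)\pm\eta$ leaves the interval on which the density is bounded below. The index range $\lceil\varepsilon k\rceil,\dots,\lfloor(1-\varepsilon)k\rfloor$ is meant to keep $u_\ell$ inside $[\varepsilon,1-\varepsilon]$. I would handle the exit case as follows: use monotonicity of $F_{j,s}$, so that $F_{j,s}(x^+)\ge\min\{u_\ell+t,\,F_{j,s}(Q_{j,s}(1-\varepsilon))\}$. When the bound is nontrivial, i.e.\ $t$ is smaller than the margin to the edges, this gives the required inequality. When $t$ exceeds the margin, I would note that the claimed bound is then vacuous relative to the interval width, or state a mild restriction on $t$, as is standard.

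A second, milder point is conditioning on the random counts $n_{j,s}$. I would argue conditionally on $(A_i,S_i)_i$, under which the scores within each cell are i.i.d.\ from $\nu_{j,s}$, and then integrate out, since the bound holds uniformly.
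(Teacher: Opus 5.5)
\textbf{There is a genuine gap, and the paper's proof has the same one.} Your route is the paper's route: DKW--Massart in each cell at level $\delta/(d|\cS|)$, a union bound over the $d|\cS|$ cells, inversion through the density floor, and finally loosening $\log(2d|\cS|/\delta)$ to $\log(2(k+1)d|\cS|/\delta)$. The gap is the boundary step you flag, and neither of your two exits works.

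First exit. The bound is not ``vacuous relative to the interval width'' when $t$ exceeds the margin. Nothing caps $|\widehat Q_{j,s}(u_\ell)-Q_{j,s}(u_\ell)|$ by the width of $[Q_{j,s}(\varepsilon),Q_{j,s}(1-\varepsilon)]$. The hypotheses say nothing about $F_{j,s}$ outside that interval, so $\widehat Q_{j,s}$ can jump across a flat stretch of $F_{j,s}$ there.

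Second exit. The index range does not keep $u_\ell$ safely inside $[\varepsilon,1-\varepsilon]$. If $\varepsilon k\in\mathbb{Z}$, then $u_{\lceil\varepsilon k\rceil}=\varepsilon-\tfrac1{2k}<\varepsilon$. At the top, $u_\ell\le 1-\varepsilon-\tfrac1{2k}$. So the margin is at most $1/(2k)$ above and is negative below. Your ``mild restriction'' would have to be $t\le 1/(2k)$ at the top, which is not mild and changes the statement. At a level $u_\ell<\varepsilon$, where no regularity is assumed at all, no condition on $t$ helps.

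\textbf{The proposition is false as written.} Take a cell with $F(x)=x$ on $[0,1-\varepsilon]$, no mass on $(1-\varepsilon,L)$, and mass $\varepsilon$ uniform on $[L,L+1]$. The hypotheses hold with $m_\varepsilon=1$. Take the largest admissible index $\ell$, so $Q(u_\ell)=u_\ell\le 1-\varepsilon-\tfrac1{2k}$. Suppose fewer than $nu_\ell$ of the $n$ scores fall in $[0,1-\varepsilon]$. Then the empirical cdf stays below $u_\ell$ across the gap, so $\widehat Q(u_\ell)\ge L$. This event has probability $p_0=\PP(\mathrm{Bin}(n,1-\varepsilon)<nu_\ell)>0$, which does not depend on $L$. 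Choosing $\delta<p_0$ and then $L$ larger than the claimed bound contradicts the statement.

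The paper's proof has exactly this hole. Lemma~\ref{lem:quantile-lip} is stated for all $u\in[\varepsilon,1-\varepsilon]$ once $\|F-G\|_\infty\le\varepsilon/2$. Its proof, however, uses the density floor on $[F^{\leftarrow}(u),F^{\leftarrow}(u+\Delta)]$, which needs $u+\Delta\le1-\varepsilon$ (and $u-\Delta\ge\varepsilon$ on the other side). The example above violates its conclusion. The paper also asserts, as you do, that the index range gives $u_\ell\in[\varepsilon,1-\varepsilon]$.

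\textbf{What is missing is a margin hypothesis.} One option is to restrict to levels with $[u_\ell-t,u_\ell+t]\subset[\varepsilon,1-\varepsilon]$, where $t$ is the DKW radius. Another is to keep the index range but assume the floor on $[Q_{j,s}(\varepsilon/2),Q_{j,s}(1-\varepsilon/2)]$ together with $t+\tfrac1{2k}\le\varepsilon/2$; this is the same condition with $\varepsilon/2$ in place of $\varepsilon$.

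Under such a margin, your two-sided argument closes verbatim with the stated constant. Your $\min$ trick gives $F(x^+)\ge\min\{u_\ell+t,1-\varepsilon\}=u_\ell+t$. On the other side, every $x<x^-$ satisfies $F(x)<u_\ell-t$, because $F$ is strictly increasing on the trimmed interval and $F<\varepsilon$ below $Q(\varepsilon)$. This strict inequality also settles the strictness concern you raised.
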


\begin{remark}
Proposition~\ref{prop:hp-quantiles} follows from the sharp DKW--Massart inequality and a
quantile-inversion argument on the trimmed region; see Appendix~\ref{app:hp:proof-prop}
\cite{shorackwellner1986,massart1990tight,vdvaartwellner1996}. We state a slightly looser logarithmic factor $\log((k+1)d|\mathcal S|/\delta)$ for notational convenience; a sharper bound without $\log(k)$ follows from the same argument.  
\end{remark}

\begin{corollary}[High-probability bound for $\widehat G_{k,2}$]\label{cor:hp-G}
Under the assumptions of Proposition~\ref{prop:hp-quantiles} and the regularity assumptions of
Proposition~\ref{prop:discretization} for the group-mixtures $\nu_s^\times$, there exists
$C_\varepsilon<\infty$ such that, with probability at least $1-\delta$,
\[
|\widehat G_{k,2}-G_2|
\;\le\;
C_\varepsilon\Big(\omega(1/k)+\sqrt{\frac{1}{n_{\min}}\log\!\big(\frac{(k+1)d|\cS|}{\delta}\big)}\Big),
\]
and similarly for the plug-in estimators of $(V_{\mathrm{mix}},V_{\mathrm{bar}},R)$.
\end{corollary}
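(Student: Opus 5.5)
We split the error into a statistical part and a discretization part through the intermediate population sketch quantity $\overline G_{k,2}$. This is the oracle version of $\widehat G_{k,2}$: it uses the true local quantiles $Q_{j,s}(u_\ell)$ and the true weights $\pi_{j\mid s},\alpha_s$ in place of their estimates. The triangle inequality then gives
\[
|\widehat G_{k,2}-G_2|\le |\widehat G_{k,2}-\overline G_{k,2}|+|\overline G_{k,2}-G_2|.
\]
The second term is deterministic. We treat it exactly as in Proposition~\ref{prop:discretization}, applied to the group mixtures $\nu_s^\times$. By Proposition~\ref{prop:Gp-equals-Up}, $G_2=U_2$ computed from $\nu_s^\times$. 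The Riemann-sum representation \eqref{eq:U-bar-kp-riemann}, integrated against the modulus of continuity $\omega$ of the integrand $u\mapsto (Q^\times_s(u)-Q^{\times\ast}(u))^2$ on the trimmed region, yields a bound $C\,\omega(1/k)$. Under bounded densities $\omega(h)=O(h)$, which recovers the $O(1/k)$ rate.

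For the statistical term, we work on the event $E$ of Proposition~\ref{prop:hp-quantiles}, which has probability at least $1-\delta$. On $E$, every communicated quantile satisfies $|\widehat Q_{j,s}(u_\ell)-Q_{j,s}(u_\ell)|\le \eta$, where
\[
\eta:=m_\varepsilon^{-1}\sqrt{(2n_{\min})^{-1}\log(2(k+1)d|\cS|/\delta)}.
\]
The server reconstructs the step CDFs $\widehat F_{j,s,k}$ and forms $\widehat F^\times_{s}=\sum_j\hat\pi_{j\mid s}\widehat F_{j,s,k}$. It then inverts this mixture at the grid levels to obtain mixture quantiles, and computes the barycenter quantiles $\sum_s\hat\alpha_s(\cdot)$. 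The key step is a stability lemma for this map. First, a uniform perturbation of size $\eta$ in the local quantiles shifts each local step CDF horizontally by at most $\eta$. A horizontal shift is preserved under mixing with common weights, so the mixture quantiles are perturbed by at most $\eta$ as well. Here we use the fact that the generalized inverse of a mixture of horizontally $\eta$-shifted CDFs lies between the inverses of the two extreme shifts.

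Second, the weights $\hat\pi_{j\mid s}$ and $\hat\alpha_s$ carry their own multinomial error. Hoeffding's inequality bounds this error by $O(\sqrt{\log(d|\cS|/\delta)/n})$, which is absorbed into the $\eta$ term after a union bound with adjusted $\delta$. A perturbation of a mixture weight moves the mixture CDF vertically by an amount of the same order. Because the density of $\nu_s^\times$ is bounded below on the trimmed region, this vertical move converts to a quantile error of order $1/m_\varepsilon$ times the weight error.

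Finally, the functional $\frac1k\sum_\ell\sum_s\alpha_s(q_{s,\ell}-q^\ast_\ell)^2$ is locally Lipschitz in the quantile vector. The compact-support assumption bounds all quantiles by a constant $B$, and then $|a^2-b^2|\le 2B'|a-b|$. Hence $|\widehat G_{k,2}-\overline G_{k,2}|\le C\eta$, and gathering constants into $C_\varepsilon$ gives the claim.

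For $V_{\mathrm{mix}}$, $V_{\mathrm{bar}}$ and $R$, the same argument applies verbatim. Each of these is a finite sum of grid-averaged squared differences of local, barycentric, or mixture quantiles, so the Lipschitz step and the Riemann step carry over unchanged.

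The main obstacle is the stability of mixture-quantile inversion. The mixture of step CDFs has flat pieces and jumps, and near such points an $\eta$-shift or a weight perturbation can move the generalized inverse by a full gap. We would handle this by comparing against the continuous population mixture $F_s^\times$, which has density at least $m_\varepsilon$ on the trimmed region, rather than inverting the step function directly. Any discrepancy caused by the step structure is of order $1/k$ in the level variable, and so is absorbed into the $\omega(1/k)$ term. The trimming to $[\varepsilon,1-\varepsilon]$ is used exactly to keep these inversions away from the tails.
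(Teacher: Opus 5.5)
Your argument is sound at the level of rigor of the paper's own proof and follows the same outline: sup-norm control of the reconstructed mixture cdf against the continuous $F_s^\times$, Lipschitz inversion via the lower density bound of $\nu_s^\times$, a Lipschitz quadratic form (Lemma~\ref{lem:Phi-lip-app}), and a Riemann-sum bias. It differs from the paper in two places.

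First, the intermediate quantity. The paper compares $\widehat G_{k,2}$ with $G_{k,2}=\Phi_\alpha(q^\times)$, where $q^\times_{s,\ell}=Q_s^\times(u_\ell)$, and the midpoint-rule bound applies to this directly. Your oracle $\overline G_{k,2}$ instead inverts the step mixture $\bar F^\times_{s,k}:=\sum_j\pi_{j\mid s}F_{j,s,k}$, where $F_{j,s,k}$ is built from the true knots. So \eqref{eq:U-bar-kp-riemann} does not describe $\overline G_{k,2}$, and your discretization paragraph is incomplete until you add the $O(1/k)$ inversion argument from your last paragraph. The weight errors force you through $F_s^\times$ anyway, so the paper's intermediate is the more economical choice.

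Second, the knot-stability step. Your monotone sandwich $\bar F^\times_{s,k}(x-\eta)\le\sum_j\pi_{j\mid s}\widehat F_{j,s,k}(x)\le\bar F^\times_{s,k}(x+\eta)$ is correct and cleaner than the paper's knot-counting route (Lemma~\ref{lem:knot-stability}, Corollary~\ref{cor:knot-stability-Lip}). That route needs an upper density bound $M_\varepsilon$ and loses an additive $1/k$. Combine your sandwich with $\|\bar F^\times_{s,k}-F^\times_s\|_\infty\le 1/k$ (Lemma~\ref{lem:stepcdf}) and the weight error $\eta_\pi:=\max_s\|\hat\pi_{\cdot\mid s}-\pi_{\cdot\mid s}\|_1$. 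This gives
\[
F_s^\times(x-\eta)-\tfrac1k-\eta_\pi\;\le\;\widehat F^\times_{s,k}(x)\;\le\;F_s^\times(x+\eta)+\tfrac1k+\eta_\pi ,
\]
and hence $|\widehat q^\times_{s,\ell}-Q_s^\times(u_\ell)|\le \eta+(1/k+\eta_\pi)/m^\times$, where $m^\times$ is the lower density bound of $\nu_s^\times$. No upper density bound is needed. This is the precise form of your ``main obstacle'' paragraph.

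One step needs repair, although the paper's Step~1 is equally loose here. Both your sandwich and the paper's knot counting require $|\widehat Q_{j,s}(u_\ell)-Q_{j,s}(u_\ell)|\le\eta$ for \emph{all} $\ell\in[k]$. Proposition~\ref{prop:hp-quantiles} controls only the trimmed indices, so saying that on $E$ ``every communicated quantile'' is within $\eta$ overstates that event. This matters because a central quantile of $\nu_s^\times$ can lie in the tail of an individual $\nu_{j,s}$. There no density lower bound is assumed, and tail knots can err by much more than $\eta$.

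The simplest repair drops horizontal control of the local knots. On the DKW event, $\|\widehat F_{j,s}-F_{j,s}\|_\infty\le\Delta:=\sqrt{\log(2d|\cS|/\delta)/(2n_{\min})}$ for all $(j,s)$. Lemma~\ref{lem:stepcdf} applied to the empirical cdf then gives $\|\widehat F_{j,s,k}-F_{j,s}\|_\infty\le \Delta+1/k$ uniformly in $x$, with no assumption on the local densities; this also removes the $\log(k+1)$ factor. Mixing and inverting through $F_s^\times$ then finishes as above.

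Relatedly, do not trim the mixture levels. Under the assumptions of Proposition~\ref{prop:discretization} for $\nu_s^\times$, its density is bounded below on the whole support, and the atoms of $\widehat F^\times_{s,k}$ lie in that support. So the inversion bound holds at every $u_\ell$. Trimming would leave the untrimmed grid points in $\widehat G_{k,2}$ uncontrolled, adding an $O(\varepsilon)$ term that does not vanish with $n$ or $k$.
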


A complete derivation (including knot stability and weight concentration) is given in Appendix~\ref{app:stability}.

\section{Local--global mismatch: ANOVA decomposition and bounds}\label{sec:anova}

This section explains why \emph{local} demographic-parity audits can be uninformative about the \emph{global} disparity.
We focus on $p=2$ in one dimension, where the squared Wasserstein distance admits the Hilbertian representation
$W_2(\mu,\nu)^2=\|Q_\mu-Q_\nu\|_{L^2([0,1])}^2$ in terms of quantile functions.
This geometry yields ANOVA-style decompositions analogous to classical variance identities.

Recall the group-mixtures $\nu_s^{\times}=\sum_{j=1}^d \pi_{j\mid s}\nu_{j,s}$, their barycenter $\nu^{\times\ast}$
from \eqref{eq:nu-times-star}, and the within-group transport centers $\nu_s^\ast$ from \eqref{eq:nu-s-star}.
For $p=2$ in one dimension, $Q_s^\ast(u)=\sum_{j=1}^d \pi_{j\mid s}Q_{j,s}(u)$.

\monparagraph{A decomposition with diagnostics.}
Define the \emph{mixture discrepancy} and the \emph{barycenter term}
\begin{equation}\label{eq:anova-terms}
V_{\mathrm{mix}} := \sum_{s\in\mathcal{S}}\alpha_s\,W_2(\nu_s^{\times},\nu_s^\ast)^2,
~
V_{\mathrm{bar}} := \sum_{s\in\mathcal{S}}\alpha_s\,W_2(\nu_s^{\ast},\nu^{\times\ast})^2.
\end{equation}

\begin{theorem}[ANOVA-style decomposition for $p=2$]\label{thm:anova}
In one dimension and for $p=2$, one has the exact identity
\begin{equation}\label{eq:anova-decomp}
G_2 \;=\; V_{\mathrm{mix}} + V_{\mathrm{bar}} + R,
\end{equation}
where the remainder term is
\[
R := 2\sum_{s\in\mathcal{S}}\alpha_s\,
\Big\langle Q_s^{\times}-Q_s^{\ast},\,Q_s^{\ast}-Q^{\times\ast}\Big\rangle_{L^2([0,1])}.
\]
Moreover, $|R|\le 2\sqrt{V_{\mathrm{mix}}\,V_{\mathrm{bar}}}$.
\end{theorem}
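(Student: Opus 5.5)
The plan is to work entirely in the Hilbert space $L^2([0,1])$ of quantile functions, using the quantile representation \eqref{eq:Wp-quantile} with $p=2$. By Definition~\ref{def:Gp}, $G_2=\sum_s\alpha_s\|Q_s^{\times}-Q^{\times\ast}\|_{L^2}^2$. Here $Q^{\times\ast}$ is the quantile function of the barycenter \eqref{eq:nu-times-star}, which in one dimension for $p=2$ is unique and equals $\sum_s\alpha_s Q_s^{\times}$ by \eqref{eq:wass-barycenter-quantile}. Similarly, $V_{\mathrm{mix}}=\sum_s\alpha_s\|Q_s^{\times}-Q_s^\ast\|^2$ and $V_{\mathrm{bar}}=\sum_s\alpha_s\|Q_s^\ast-Q^{\times\ast}\|^2$. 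This uses only that $\nu_s^\ast$ has quantile $Q_s^\ast=\sum_j\pi_{j\mid s}Q_{j,s}$. That function is a genuine quantile function, being nondecreasing and left-continuous as a convex combination of such functions. So $W_2(\nu_s^\times,\nu_s^\ast)^2$ and $W_2(\nu_s^\ast,\nu^{\times\ast})^2$ are indeed the corresponding $L^2$ norms.

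Next, for each $s$, I insert $Q_s^\ast$ and write
\[
Q_s^{\times}-Q^{\times\ast}=(Q_s^{\times}-Q_s^\ast)+(Q_s^\ast-Q^{\times\ast}).
\]
Expanding the squared norm gives
\[
\|Q_s^{\times}-Q^{\times\ast}\|^2=\|Q_s^{\times}-Q_s^\ast\|^2+\|Q_s^\ast-Q^{\times\ast}\|^2+2\langle Q_s^{\times}-Q_s^\ast,Q_s^\ast-Q^{\times\ast}\rangle.
\]
Multiplying by $\alpha_s$ and summing over $s\in\mathcal S$ yields \eqref{eq:anova-decomp} exactly, with $R$ as stated. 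All these norms are finite under the standing assumption that second moments are finite, so the inner products are well defined.

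For the bound on $R$, I first apply Cauchy--Schwarz in $L^2([0,1])$ termwise:
\[
|R|\le 2\sum_s\alpha_s a_s b_s, \qquad a_s:=\|Q_s^\times-Q_s^\ast\|,\quad b_s:=\|Q_s^\ast-Q^{\times\ast}\|.
\]
I then apply the weighted Cauchy--Schwarz inequality in $\mathbb R^{|\mathcal S|}$ with weights $\alpha_s$:
\[
\sum_s\alpha_s a_s b_s\le\Big(\sum_s\alpha_s a_s^2\Big)^{1/2}\Big(\sum_s\alpha_s b_s^2\Big)^{1/2}=\sqrt{V_{\mathrm{mix}}V_{\mathrm{bar}}}.
\]
Equivalently, this is Cauchy--Schwarz in the product space $L^2(\alpha\otimes\mathrm{d}u)$.

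The only step needing care is the identification of $W_2$ with $L^2$ distances of quantile functions for the intermediate objects $\nu_s^\ast$ and $\nu^{\times\ast}$. Concretely, one must check that the averaged quantile functions are valid quantile functions, so that the barycenters exist and are unique with those quantiles. The rest is routine algebra. I would also remark that the decomposition does not make $R$ vanish, unlike classical ANOVA. The reason is that $Q^{\times\ast}$ is the $\alpha$-average of the $Q_s^\times$, not of the $Q_s^\ast$, so the cross term need not cancel. This is exactly why the bound $|R|\le2\sqrt{V_{\mathrm{mix}}V_{\mathrm{bar}}}$ is the appropriate statement.
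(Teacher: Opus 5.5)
Your proof is correct and is essentially the paper's argument: write $G_2$ via quantile functions in $L^2([0,1])$, insert $Q_s^\ast$, expand the square and sum with weights $\alpha_s$, then bound $R$ by Cauchy--Schwarz in $L^2([0,1])$ followed by weighted Cauchy--Schwarz over $s$. One caveat on your closing aside, which plays no role in the proof: the cross term would not vanish even if the center were $\sum_s\alpha_s Q_s^\ast$, because $Q_s^\times-Q_s^\ast$ is a single function per group rather than a family of within-cell deviations averaging to zero, so the decomposition simply has no built-in orthogonality.
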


\begin{proof}[Proof sketch]
Write $Q_s^{\times}-Q^{\times\ast}=(Q_s^{\times}-Q_s^\ast)+(Q_s^\ast-Q^{\times\ast})$ and expand the squared
$L^2$ norm, then sum over $s$ with weights $\alpha_s$. The bound follows from Cauchy--Schwarz.
\end{proof}

\begin{proposition}[Two-sided bounds]\label{prop:anova-bounds}
Under the conditions of \Cref{thm:anova},
\[
\bigl(\sqrt{V_{\mathrm{mix}}}-\sqrt{V_{\mathrm{bar}}}\bigr)^2
\;\le\; G_2 \;\le\;
\bigl(\sqrt{V_{\mathrm{mix}}}+\sqrt{V_{\mathrm{bar}}}\bigr)^2.
\]
\end{proposition}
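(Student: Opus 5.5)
The plan is to derive the two-sided bounds directly from the exact identity of Theorem~\ref{thm:anova}, together with its Cauchy--Schwarz control of the remainder. By that theorem, $G_2 = V_{\mathrm{mix}} + V_{\mathrm{bar}} + R$ with $|R|\le 2\sqrt{V_{\mathrm{mix}}V_{\mathrm{bar}}}$. Substituting the two extreme values of $R$ gives
\[
V_{\mathrm{mix}} + V_{\mathrm{bar}} - 2\sqrt{V_{\mathrm{mix}}V_{\mathrm{bar}}}\;\le\; G_2 \;\le\; V_{\mathrm{mix}} + V_{\mathrm{bar}} + 2\sqrt{V_{\mathrm{mix}}V_{\mathrm{bar}}}.
\]
Both sides are perfect squares, namely $(\sqrt{V_{\mathrm{mix}}}\mp\sqrt{V_{\mathrm{bar}}})^2$, which is exactly the claim.

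Since the whole argument rests on the remainder bound, I will re-derive it rather than take it on faith. Set $f_s := Q_s^{\times}-Q_s^{\ast}$ and $g_s := Q_s^{\ast}-Q^{\times\ast}$, both in $L^2([0,1])$; they have finite norms under the standing finiteness assumptions. Then $R = 2\sum_s \alpha_s\langle f_s,g_s\rangle$. Applying Cauchy--Schwarz in $L^2([0,1])$ for each $s$ bounds this by $2\sum_s \alpha_s\|f_s\|\,\|g_s\|$. Applying Cauchy--Schwarz again, now in $\mathbb{R}^{|\mathcal S|}$ with weights $\alpha_s$, bounds it by
\[
2\Big(\sum_s\alpha_s\|f_s\|^2\Big)^{1/2}\Big(\sum_s\alpha_s\|g_s\|^2\Big)^{1/2}.
\]
For the last step, $\|f_s\|^2 = W_2(\nu_s^{\times},\nu_s^\ast)^2$ and $\|g_s\|^2 = W_2(\nu_s^\ast,\nu^{\times\ast})^2$, by the quantile representation \eqref{eq:Wp-quantile}. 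The only point needing care is that $Q^{\times\ast}$ is a genuine quantile function. This holds because, for $p=2$, it is the $\alpha$-weighted average of the nondecreasing functions $Q_s^{\times}$, so it is itself nondecreasing.

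An equivalent route, which I would mention as a sanity check, is the Hilbert-space triangle inequality in the weighted space $\bigoplus_s L^2(\alpha_s\,du)$. There $\sqrt{G_2}=\|f+g\|$, $\sqrt{V_{\mathrm{mix}}}=\|f\|$ and $\sqrt{V_{\mathrm{bar}}}=\|g\|$, so the forward and reverse triangle inequalities give $\bigl|\|f\|-\|g\|\bigr|\le\|f+g\|\le\|f\|+\|g\|$. Squaring these yields the statement.

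I do not expect a real obstacle here. The only subtle point is the identification of the $L^2$ norms with Wasserstein distances, which needs each of $Q_s^{\times}$, $Q_s^\ast$, $Q^{\times\ast}$ to be the quantile function of the corresponding measure. For $Q_s^\ast$ this holds because it is an average of the nondecreasing $Q_{j,s}$, and for $Q^{\times\ast}$ by the argument above.
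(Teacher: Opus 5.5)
Your proposal is correct and matches the paper's proof: the paper likewise inserts $-2\sqrt{V_{\mathrm{mix}}V_{\mathrm{bar}}}\le R\le 2\sqrt{V_{\mathrm{mix}}V_{\mathrm{bar}}}$ from Theorem~\ref{thm:anova} into the identity $G_2=V_{\mathrm{mix}}+V_{\mathrm{bar}}+R$ and recognizes the two perfect squares. Your re-derivation of the remainder bound by two Cauchy--Schwarz steps (first in $L^2([0,1])$, then over $s$ with weights $\alpha_s$) is exactly how the paper proves the second part of Theorem~\ref{thm:anova}, and your triangle-inequality argument in $\bigoplus_s L^2(\alpha_s\,du)$ is an equivalent way of saying the same thing.
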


\monparagraph{Interpretation.}
The term $V_{\mathrm{mix}}$ quantifies how far each group-mixture $\nu_s^{\times}$ is from the within-group barycenter
$\nu_s^\ast$ (a non-commutativity effect between \emph{mixing} and \emph{barycentering}).
The term $V_{\mathrm{bar}}$ measures demographic disparity after replacing each mixture by its within-group transport
center; it captures a ``structural'' component that persists even when mixture discrepancies are small.
The interaction term $R$ can either amplify or cancel these effects, but is always controlled by
Theorem~\ref{thm:anova} (indeed $|R|\le 2\sqrt{V_{\mathrm{mix}}V_{\mathrm{bar}}}$).
This makes it possible to build Simpson-paradox-style examples where each silo appears fair locally while $G_2$ is large,
driven by $V_{\mathrm{mix}}$, or conversely, \citep{julious1994confounding,prost2022simpson}.

\monparagraph{Estimating diagnostics from the same sketches.}
For $p=2$, the server can compute a grid approximation of $\nu_s^\ast$ directly from the shared quantiles:
$\wideq^\ast_{s,\ell} := \sum_{j=1}^d \hat\pi_{j\mid s}\,\wideQ_{j,s,\ell}$.
Together with the mixture-quantiles $\wideq^{\times}_{s,\ell}$ and the global barycenter quantiles
$\wideq^{\times\ast}_\ell$ from Algorithm~\ref{alg:oneshot} (Appendix~\ref{app:algo}), define
\[
\hspace{-.5cm}\widehat V_{\mathrm{mix}} := \sum_{s\in\mathcal{S}}\frac{\hat\alpha_s}k\sum_{\ell=1}^k
\bigl(\wideq^{\times}_{s,\ell}-\wideq^{\ast}_{s,\ell}\bigr)^2,
\]
\[
\widehat V_{\mathrm{bar}} := \sum_{s\in\mathcal{S}}\frac{\hat\alpha_s}k\sum_{\ell=1}^k
\bigl(\wideq^{\ast}_{s,\ell}-\wideq^{\times\ast}_{\ell}\bigr)^2.
\]
These estimators incur the same discretization and quantile-estimation errors as $\widehat G_{k,2}$.

\subsection*{When do the terms vanish? Sufficient conditions}\label{sec:anova:indep}

The decomposition of \Cref{thm:anova} becomes especially interpretable under simple independence
assumptions relating $(A,S)$ and the score mechanism, $Z=m(X,S,A)$.

\begin{corollary}[Regimes of vanishing terms]\label{cor:vanishing}
Assume one-dimensional scores and $p=2$.
\begin{enumerate}
\item If $m(X,S,A)\indep A\mid S$ (no cross-silo heterogeneity within each sensitive group), then
$\nu_s^\times=\nu_s^\ast$ for all $s$, hence $V_{\mathrm{mix}}=0$ and $G_2=V_{\mathrm{bar}}$.

\item If $m(X,S,A)\perp\!\!\!\perp S \mid A$, then $\nu_{j,s}=\nu_j$ does not depend on $s$
(within-silo parity). In this regime, any global disparity is entirely driven by differences
in the selection weights $\pi_{j\mid s}$ across groups.
In particular, if $\pi_{j\mid s}=\beta_j$ for all $j,s$, then $\nu_s^{\times}$ does not depend on $s$
and therefore $G_2=0$.

\item If both conditions hold, then $G_2=0$ (global demographic parity).
\end{enumerate}
\end{corollary}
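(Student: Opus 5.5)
The plan is to treat the three items of Corollary~\ref{cor:vanishing} in turn. Each reduces to an identity between conditional laws, which we then feed into the definitions of $V_{\mathrm{mix}}$, $V_{\mathrm{bar}}$ and $G_2$ and into the exact identity of Theorem~\ref{thm:anova}.

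\emph{Item 1.} If $Z=m(X,S,A)\indep A\mid S$, then $\nu_{j,s}=\mathcal{L}(Z\mid A=j,S=s)=\mathcal{L}(Z\mid S=s)=\nu_s$ for every $j$ with $\pi_{j\mid s}>0$. Silos with $\pi_{j\mid s}=0$ do not enter any weighted sum, so they can be discarded.

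From this we get $\nu_s^\times=\sum_j\pi_{j\mid s}\nu_s=\nu_s$ by \eqref{eq:nu-s-times}. For $p=2$ in one dimension, $Q_s^\ast=\sum_j\pi_{j\mid s}Q_{j,s}=Q_s$, so $\nu_s^\ast=\nu_s=\nu_s^\times$. Hence $W_2(\nu_s^\times,\nu_s^\ast)=0$ for every $s$, which gives $V_{\mathrm{mix}}=0$.

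Since $Q_s^\times-Q_s^\ast=0$ in $L^2([0,1])$, the remainder $R$ of Theorem~\ref{thm:anova} also vanishes. This is consistent with $|R|\le2\sqrt{V_{\mathrm{mix}}V_{\mathrm{bar}}}=0$. The identity \eqref{eq:anova-decomp} then gives $G_2=V_{\mathrm{bar}}$.

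\emph{Item 2.} If $Z\indep S\mid A$, then $\nu_{j,s}=\mathcal{L}(Z\mid A=j)=:\nu_j$ whenever $\PP(A=j,S=s)>0$. We set $\nu_j$ arbitrarily otherwise, which is harmless since those cells carry zero weight. Consequently $\nu_s^\times=\sum_j\pi_{j\mid s}\nu_j$ depends on $s$ only through the selection weights $\pi_{\cdot\mid s}$; this is the claimed attribution of any disparity to selection.

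If in addition $\pi_{j\mid s}=\beta_j$ for all $j,s$, then $\nu_s^\times=\sum_j\beta_j\nu_j=:\bar\nu$ is the same for every $s$. The barycenter in \eqref{eq:nu-times-star} is then $\nu^{\times\ast}=\bar\nu$, since that choice makes the objective zero and, for $p=2$, the barycenter is unique by \eqref{eq:wass-barycenter-quantile}. It follows that $G_2=\sum_s\alpha_sW_2(\bar\nu,\bar\nu)^2=0$.

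\emph{Item 3.} The two conditions together give $\nu_{j,s}=\nu_j$ (from $S$-independence) and $\nu_{j,s}=\nu_s$ (from $A$-independence) on positive-probability cells. The main obstacle is that Item 2's conclusion needs the extra hypothesis $\pi_{j\mid s}=\beta_j$, so I will instead argue directly that the two independences force a common law. Fix $s,s'$. I first try to find a silo $j$ with $\PP(A=j,S=s)>0$ and $\PP(A=j,S=s')>0$; then $\nu_s=\nu_{j,s}=\nu_j=\nu_{j,s'}=\nu_{s'}$. If no such $j$ exists, the supports of $A$ under $S=s$ and under $S=s'$ are disjoint, and the two independences alone do not force equality.

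So the plan is to use the standard contraction-type argument: $Z\indep A\mid S$ together with $Z\indep S\mid A$ implies $Z\indep(A,S)$ under a positivity (overlap) condition, and I will state that condition explicitly as a mild assumption. Alternatively, I can observe that if silos overlap only through a chain of shared groups, the equalities propagate along the connected components of the bipartite graph of positive cells $(j,s)$. Under positivity or connectivity, all $\nu_s$ coincide, so $\nu_s^\times=\nu_s$ is constant in $s$. This gives $V_{\mathrm{bar}}=0$ and $V_{\mathrm{mix}}=0$, and hence $G_2=0$, either by Item 1 or directly from Definition~\ref{def:Gp} together with Proposition~\ref{prop:Gp-equals-Up}.
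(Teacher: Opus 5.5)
Your arguments for items 1 and 2 are the paper's. You substitute $\nu_{j,s}=\nu_s$ (resp.\ $\nu_{j,s}=\nu_j$) into the definitions of $\nu_s^\times$ and $\nu_s^\ast$, and for item 1 you read $G_2=V_{\mathrm{bar}}$ off Theorem~\ref{thm:anova}. Noting directly that $R=0$ because $Q_s^\times-Q_s^\ast=0$ is equivalent to the paper's implicit use of $|R|\le 2\sqrt{V_{\mathrm{mix}}V_{\mathrm{bar}}}$. Your bookkeeping for zero-probability cells is correct.

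On item 3 your hesitation is justified, and the gap it exposes is in the paper, not in your plan. The paper's proof simply asserts that under both independences ``$\nu_{j,s}$ is independent of both $j$ and $s$''. That step is the \emph{intersection} property of conditional independence (not contraction, as you call it), and it requires overlap. Without overlap, item 3 is false as stated. Take $d=2$ with $\pi_{1\mid 0}=\pi_{2\mid 1}=1$, $m_1\equiv 0$ and $m_2\equiv 1$; if you want densities, use instead $X\sim\mathrm{Unif}(0,1)$ independent of $(A,S)$ with $m_1(x,s)=x$ and $m_2(x,s)=x+1$. Since $A$ is a function of $S$ and vice versa, both $Z\indep A\mid S$ and $Z\indep S\mid A$ hold trivially. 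Yet $G_2=U_2=\alpha_0\alpha_1W_2(\nu_0,\nu_1)^2=\alpha_0\alpha_1>0$.

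So the hypothesis you add is necessary, not a convenience. For $\mathcal S=\{0,1\}$ your connectivity condition reduces to the existence of a single silo $j$ with $\pi_{j\mid 0}\,\pi_{j\mid 1}>0$. Given such a silo, your chain $\nu_0=\nu_{j,0}=\nu_j=\nu_{j,1}=\nu_1$ is the whole proof. Conversely, if no such silo exists, choosing each $m_j$ to depend only on the group that silo serves produces $G_2>0$. Note also that the special case of item 2, $\pi_{j\mid s}=\beta_j$, automatically gives this overlap for every silo with $\beta_j>0$. That is why that part of the corollary needs no extra assumption.
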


\begin{remark}[The case $p=1$: a triangle-inequality diagnostic]\label{rem:p1}
Our ANOVA identity is specific to the Hilbertian geometry of $W_2^2$. For $p=1$, we still obtain a
useful and sharp diagnostic from the triangle inequality.
Recall $G_1=\sum_{s\in\cS}\alpha_s W_1(\nu_s^\times,\nu^{\times\ast})$.
Define the two (nonnegative) terms
\[
V^{(1)}_{\mathrm{mix}} := \sum_{s\in\cS}\alpha_s\,W_1(\nu_s^\times,\nu_s^\ast),
~
V^{(1)}_{\mathrm{bar}} := \sum_{s\in\cS}\alpha_s\,W_1(\nu_s^\ast,\nu^{\times\ast}),
\]
where $\nu_s^\ast$ is the within-group Wasserstein barycenter across silos \eqref{eq:nu-s-star} and $\nu^{\times\ast}$ is the barycenter of group-mixtures \eqref{eq:nu-times-star}.
Then, by the (reverse) triangle inequality applied for each $s$ and Jensen's inequality,
\begin{equation}\label{eq:p1-bounds}
\Bigl|V^{(1)}_{\mathrm{bar}}-V^{(1)}_{\mathrm{mix}}\Bigr|
\;\le\; G_1 \;\le\;
V^{(1)}_{\mathrm{bar}}+V^{(1)}_{\mathrm{mix}}.
\end{equation}
In one dimension, $W_1(\mu,\nu)=\|Q_\mu-Q_\nu\|_{L^1([0,1])}$ and $\nu_s^\ast$ can be computed pointwise in $u$ as a weighted median of $\{Q_{j,s}(u)\}_{j\in[d]}$ (weights $\pi_{j\mid s}$), so the same $k$-quantile sketches as in Algorithm~1 yield plug-in estimators $\widehat V^{(1)}_{\mathrm{mix}}$ and $\widehat V^{(1)}_{\mathrm{bar}}$ by replacing integrals with Riemann sums over $\{u_\ell\}_{\ell\in[k]}$.
\end{remark}

\monparagraph{Practical recipe (one-round diagnosis).}
Algorithm~\ref{alg:oneshot} (Appendix~\ref{app:algo}) returns both a global disparity estimate $\widehat G_{k,2}$ and diagnostic terms $(\widehat V_{\mathrm{mix}},\widehat V_{\mathrm{bar}},\widehat R)$ from the same quantile sketches.
A simple interpretation is: (i) if $\widehat V_{\mathrm{mix}}$ dominates, the global gap is primarily driven by \emph{selection/composition} effects (Simpson-style mixtures across silos); (ii) if $\widehat V_{\mathrm{bar}}$ dominates, the gap is primarily driven by \emph{cross-silo heterogeneity} within groups.
In our experiments, a few dozen quantiles ($k\in[25,100]$) already yield stable estimates.

\section{Experiments}\label{sec:experiments}

Our experiments illustrate three points: (i) \emph{local} demographic parity audits may be misleading under silo-specific selection and composition shifts, (ii) the proposed one-shot quantile protocol accurately reconstructs the \emph{global} disparity using only a few quantiles, and (iii) the ANOVA decomposition for $p=2$ provides actionable diagnostics separating mixture effects from cross-silo heterogeneity.

\subsection{Experimental protocol and reported quantities}\label{sec:exp:protocol}

Unless stated otherwise, we focus on $p=2$.
Each silo $j$ computes, for each group $s\in\mathcal{S}$, the empirical quantiles $\{\widehat{Q}_{j,s}(u_\ell)\}_{\ell\in[k]}$ on a shared grid, and transmits them along with group counts $\{n_{j,s}\}_{s\in\mathcal{S}}$ (Algorithm~\ref{alg:oneshot},  Appendix~\ref{app:algo}). The server reconstructs the mixture cdfs $\widehat{F}^{\times}_{s,k}$, inverts them on the same grid to obtain mixture-quantiles $\{\widehat{q}^{\times}_{s,\ell}\}$, and computes:
(i) the global DP functional $\widehat{G}_{k,2}$,
(ii) the pooled heterogeneity $\widehat{H}^{\times}_{k,2}$, and
(iii) when applicable, the decomposition terms from \Cref{sec:anova} (mixture, barycenter, remainder). We also report local audits, computed by applying the same quantile estimators within each silo.

\subsection{Synthetic data: discretization and sketch accuracy}\label{sec:exp:synthetic}

We first consider a controlled two-group setting where scores are generated from Beta laws on $[0,1]$,
with distinct shapes across sensitive groups,
and a balanced sample of size $n=10{,}000$. 
Our generating process allows us to smoothly interpolate between independent allocation ($\rho=0$) and strong selection bias ($|\rho|\to 1$), where $\rho$ controls the correlation between the score $Z$ and the silo assignment $A$ via a latent Gaussian pairing.
Figure~\ref{fig:synthetic-beta-1} reports the resulting marginal group distributions (left), the impact of $\rho$ on the coarse-grid estimates $U_2(k)$ for $k\in\{1,2\}$ (right).

\begin{figure}[!htbp]
    \centering
    \includegraphics[width=0.49\linewidth]{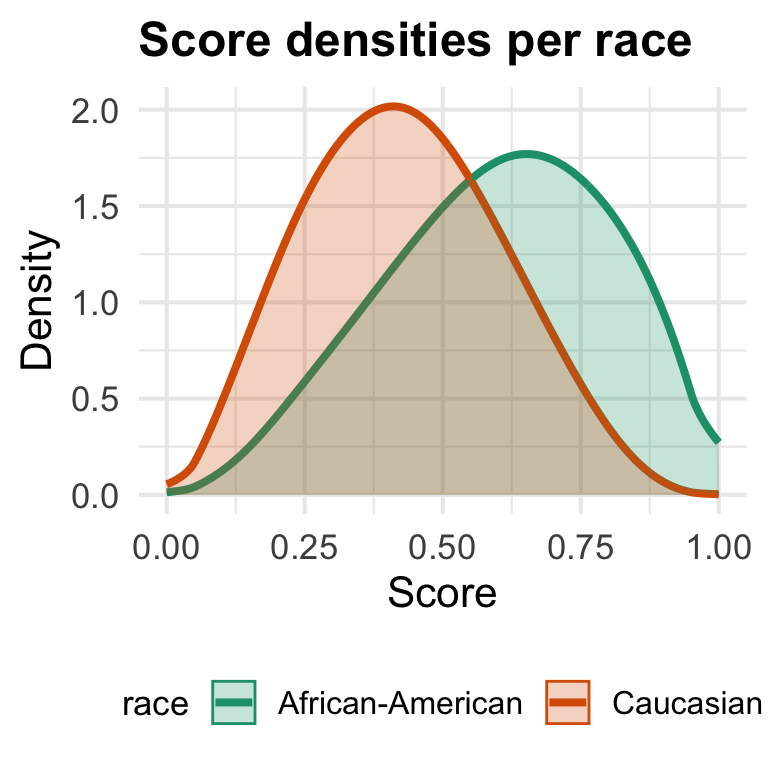}
    \includegraphics[width=0.49\linewidth]{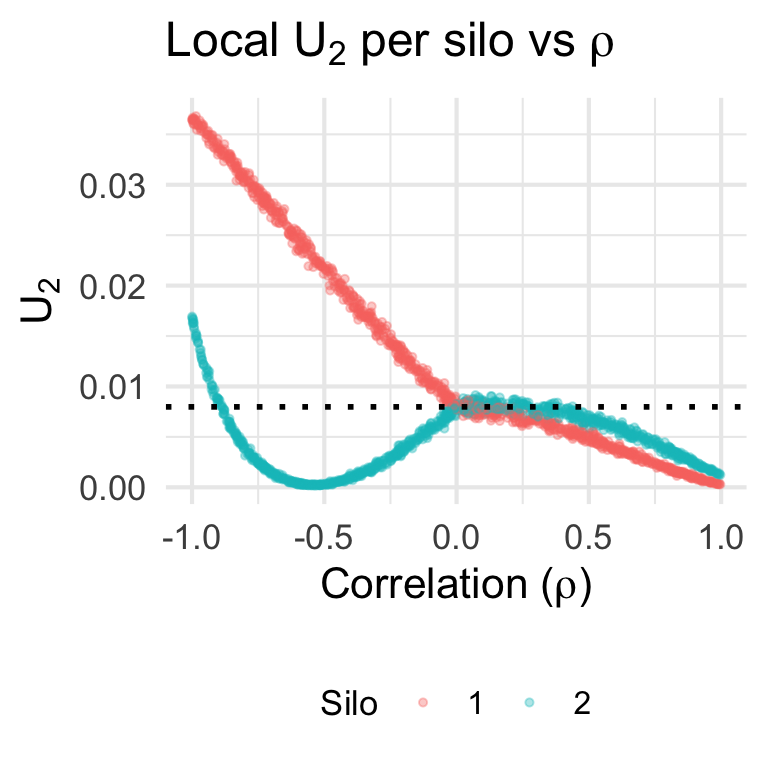}

    \caption{\textbf{Synthetic Beta.} {\em Left}: score distributions by group, $U_2=0.0076$. {\em Right}: $U_2(k)$ with $k\in\{1,2\}$ as a function of $\rho$. 
    }
    \label{fig:synthetic-beta-1}
\end{figure}

We further visualize the sensitivity to the grid size $k$ (discretization) and to sample size, confirming the $O(1/k)$ trend predicted by Proposition~\ref{prop:discretization} and the fast stabilization of the global estimates with a few dozen quantiles  on Figures \ref{fig:synthetic-beta-u2-k} 


\begin{figure}[!htbp]
    \centering
    \includegraphics[width=0.49\linewidth]{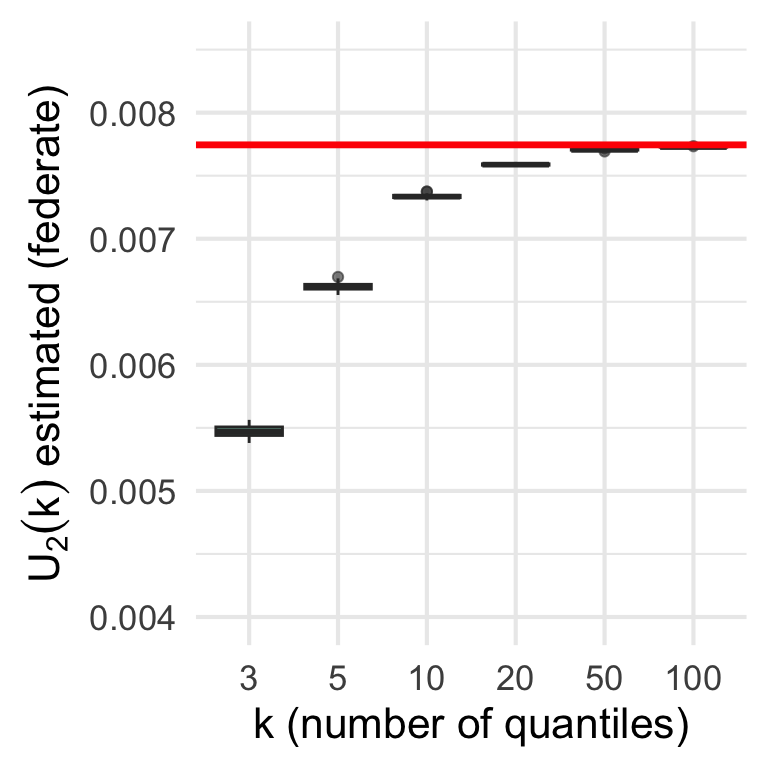}
    \includegraphics[width=0.49\linewidth]{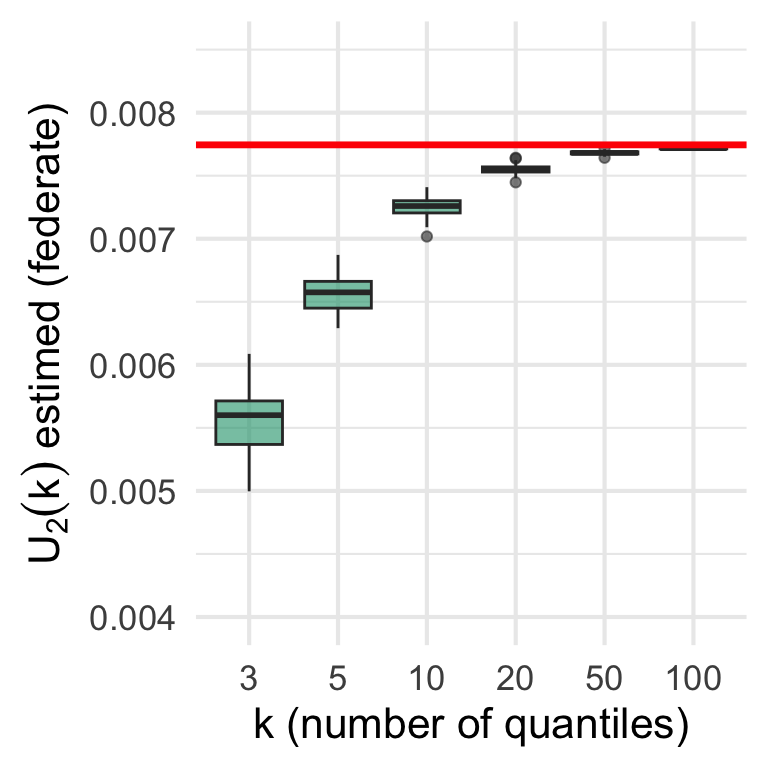}
    \caption{\textbf{Sensitivity to $k$ (synthetic).} Convergence of $U_{2}(k)$, independence allocation (left) and selection bias (right).}
    \label{fig:synthetic-beta-u2-k}
\end{figure}


For $p=2$, the bin-averaged discretization enjoys a monotone convergence from below (Proposition~\ref{prop:barUk2_short}), which helps interpret the sensitivity-to-$k$ behavior in Figure~\ref{fig:synthetic-beta-u2-k}.

\subsection{COMPAS: local audits can hide global disparity}\label{sec:exp:compas}

We next analyze the \texttt{COMPAS} recidivism scores (two sensitive groups: African-American vs.\ Caucasian). We create $d=5$ artificial silos.
Our goal is to reproduce a scope mismatch: each silo may exhibit only mild distributional disparity, while the pooled population shows a stronger gap.

\monparagraph{Simulated allocations across silos with selection bias.}
We first allocate observations randomly across the five silos.
Table~\ref{tab:compas_random} reports the resulting local and global disparities.
While each local silo exhibits moderate $U_2$, the global $U_2$ is substantially larger, illustrating that \emph{local demographic-parity audits do not characterize the global parity of the federation}.

\begin{table}[t]
\centering
\caption{\textbf{\texttt{COMPAS}, simulated silos with selection bias.}
African-American (AA) vs.\ Caucasian (C) scores, with $d=5$ silos.}
\label{tab:compas_random}
{\footnotesize
\begin{tabular}[t]{lrrrrrr}
\toprule
 & Global & Silo 1 & Silo 2 & Silo 3 & Silo 4 & Silo 5\\
\midrule
\cellcolor{gray!10}{$n_{\text{AA}}$} & \cellcolor{gray!10}{3696} & \cellcolor{gray!10}{535} & \cellcolor{gray!10}{704} & \cellcolor{gray!10}{745} & \cellcolor{gray!10}{814} & \cellcolor{gray!10}{898}\\
$n_{\text{C}}$ & 2454 & 695 & 536 & 521 & 407 & 295\\
\cellcolor{gray!10}{$|\Delta\bar{z}|$} & \cellcolor{gray!10}{1.6357} & \cellcolor{gray!10}{0.4483} & \cellcolor{gray!10}{0.9784} & \cellcolor{gray!10}{1.0978} & \cellcolor{gray!10}{0.9667} & \cellcolor{gray!10}{0.5654}\\
$U_2$ & 0.7609 & 0.0748 & 0.2634 & 0.3073 & 0.2355 & 0.0745\\
\cellcolor{gray!10}{$H_2$} & \cellcolor{gray!10}{0.0077} & \cellcolor{gray!10}{0.0017} & \cellcolor{gray!10}{0.0045} & \cellcolor{gray!10}{0.0044} & \cellcolor{gray!10}{0.0035} & \cellcolor{gray!10}{0.0012}\\
$W_2$ & 1.7813 & 0.5479 & 1.0324 & 1.1319 & 1.0104 & 0.6326\\
\bottomrule
\end{tabular}}
\end{table}


\monparagraph{Selection-biased partitions and ANOVA diagnostics.}
We finally consider non-random allocations that induce composition shifts across silos (selection bias).
In these settings, the ANOVA decomposition for $p=2$ helps disentangle whether the global disparity is driven primarily by mixture effects (selection/composition) or by genuine cross-silo heterogeneity.
Full tables and plots (including the per-silo quantile sketches) are provided in Appendix~\ref{app:compas}.

Two additional datasets are also considered, \texttt{Adult} (Appendix~\ref{app:adult}) and \texttt{Law} (Appendix~\ref{app:law}).

\section{Discussion and conclusion}\label{sec:discussion}

Our work highlights a pitfall in federated fairness auditing: \emph{local} demographic-parity conclusions do not transfer to the \emph{population} level. Two mechanisms drive this mismatch. First, silo-specific selection changes mixing weights $\pi_{j\mid s}$, so pooling induces mixtures whose quantiles differ from barycentric aggregates. Second, cross-silo heterogeneity can create hidden stratification effects missed by single-silo audits. Our federated functionals and the $p=2$ decomposition make these mechanisms explicit, yielding a global disparity estimate and an interpretable diagnostic of its sources.

From a practical standpoint, the proposed one-shot protocol provides a communication-efficient audit primitive: each silo releases only group counts and a small quantile sketch of its local score laws.
Despite this minimal disclosure, the server can consistently estimate population-level unfairness (and pooled heterogeneity), with explicit discretization bias $O(1/k)$ and controllable statistical error. This positions quantile sketches as a useful interface between privacy constraints and distributional fairness monitoring.

\monparagraph{Limitations.}
Our approach focuses on one-dimensional score distributions and on demographic parity. Extending the framework
to multi-dimensional outputs (e.g., calibrated probabilities for multiple classes) would require different
distributional summaries and may lose the simplicity of one-dimensional quantiles.
Our guarantees rely on regularity assumptions (e.g., bounded densities) to obtain clean $O(1/k)$ rates; studying
heavy-tailed or discrete score distributions more systematically is an important direction.
Finally, while our protocol is privacy-motivated by design (it minimizes what is shared), quantile sketches and
counts do not, by themselves, provide formal differential privacy guarantees; integrating calibrated noise and
analyzing the privacy--accuracy trade-off is left to future work.

\monparagraph{Conclusion.}
Federated auditing should target the population-level distributional criterion of interest, rather than relying
on local metrics as proxies. By combining distributional fairness functionals with a one-round quantile-sketch
protocol and sharp $p=2$ diagnostics, we provide a principled and practical approach to estimating global
demographic disparity under fragmentation and selection.

\clearpage

\section*{Impact Statement}

This paper studies how to \emph{measure} demographic disparities in federated deployments under privacy and
communication constraints. A positive impact is to enable more reliable compliance auditing and monitoring:
our results show that local fairness checks can be misleading, and provide a lightweight protocol for estimating
population-level disparities from minimal summaries. This may support regulators and practitioners in detecting
selection-driven unfairness that would otherwise go unnoticed.

Potential negative impacts include misuse or over-reliance on any single metric. Demographic parity captures one
notion of fairness, and accurate measurement does not by itself imply that a system is socially acceptable, nor
does it prescribe a mitigation strategy. Another risk is privacy leakage: although our protocol shares only counts
and quantile sketches, these summaries may still reveal information about small subpopulations. Deployments should
use appropriate safeguards (e.g., minimum cell sizes, secure aggregation, or formal differential privacy) and
interpret results in context.

Overall, we expect the primary impact to be methodological: clarifying when federated fairness audits can fail,
and providing practical tools to evaluate population-level disparities while respecting data fragmentation.

\bibliography{biblio}
\bibliographystyle{icml2026}

\clearpage
\appendix


\section{Algorithm}\label{app:algo}


\vspace{2cm}
\begin{algorithm}[htbp]
\caption{One-shot federated audit from $k$ quantiles.}
\label{alg:oneshot}
\begin{algorithmic}[1]
\Require Midpoint grid $u_\ell=(\ell-\tfrac12)/k$, $\ell\in[k]$.
\Require For each silo $j\in[d]$ and group $s\in\mathcal S$: subgroup count $n_{j,s}$ and empirical quantiles
$\widehat q_{j,s,\ell}=\widehat Q_{j,s}(u_\ell)$, $\ell\in[k]$, of $\{Z_i: A_i=j,\,S_i=s\}$.
\Ensure Estimates $\widehat G_{k,p}$ and $\widehat H^\times_{k,p}$ (and optionally $\widehat V_{\mathrm{mix}},\widehat V_{\mathrm{bar}},\widehat R$ for $p=2$).

\State Compute totals $n_s=\sum_{j=1}^d n_{j,s}$, $n=\sum_{s\in\mathcal S} n_s$.
\State Compute weights $\widehat\alpha_s=n_s/n$ and $\widehat\pi_{j\mid s}=n_{j,s}/n_s$.

\For{$(j,s)\in[d]\times\mathcal S$}
  \State Build the step-CDF induced by the quantile sketch:
  \[
    \widehat F_{j,s,k}(x) := \frac1k\sum_{\ell=1}^k \mathbf 1\{\widehat q_{j,s,\ell}\le x\}.
  \]
\EndFor

\For{$s\in\mathcal S$}
  \State Form the mixture step-CDF across silos:
  \[
    \widehat F^\times_{s,k}(x) := \sum_{j=1}^d \widehat\pi_{j\mid s}\,\widehat F_{j,s,k}(x)
    = \frac1k\sum_{j=1}^d\sum_{\ell=1}^k \widehat\pi_{j\mid s}\,\mathbf 1\{\widehat q_{j,s,\ell}\le x\}.
  \]
  \State Invert to obtain mixture quantiles on the grid:
  \[
    \widehat q^\times_{s,\ell} := \inf\{x:\widehat F^\times_{s,k}(x)\ge u_\ell\},\qquad \ell\in[k].
  \]
  \State \emph{Implementation note:} compute all $\widehat q^\times_{s,\ell}$ by sorting the multiset
  $\{\widehat q_{j,s,\ell'}: j\in[d],\,\ell'\in[k]\}$ and scanning cumulative mass increments
  $\frac1k\widehat\pi_{j\mid s}$.
\EndFor

\For{$\ell\in[k]$}
  \State Compute barycenter quantile across groups at level $u_\ell$:
  \[
    \widehat q^{\times *}_\ell \in \arg\min_{z\in\mathbb R}\sum_{s\in\mathcal S}\widehat\alpha_s\,|\widehat q^\times_{s,\ell}-z|^p.
  \]
  \If{$p=2$} \State $\widehat q^{\times *}_\ell \gets \sum_{s\in\mathcal S}\widehat\alpha_s\,\widehat q^\times_{s,\ell}$. \EndIf
\EndFor
\end{algorithmic}
\end{algorithm}

\begin{algorithm}[htbp]
\begin{algorithmic}[1]
\State Output disparity estimate:
\[
\widehat G_{k,p} := \sum_{s\in\mathcal S}\widehat\alpha_s\cdot \frac1k\sum_{\ell=1}^k \big|\widehat q^\times_{s,\ell}-\widehat q^{\times *}_\ell\big|^p.
\]

\State Form pooled mixture step-CDF:
\[
\widehat F^\times_k(x):=\sum_{s\in\mathcal S}\widehat\alpha_s\,\widehat F^\times_{s,k}(x).
\]

\For{$s\in\mathcal S$}
  \State Compute $C_p(\widehat F^\times_{s,k},\widehat F^\times_k)^p$ exactly using the step-function formula:
  let $g_1<\cdots<g_M$ be the sorted set of jump locations of $\widehat F^\times_{s,k}$ and $\widehat F^\times_k$,
  then
  \[
  C_p(\widehat F^\times_{s,k},\widehat F^\times_k)^p
  =\sum_{m=1}^{M-1} (g_{m+1}-g_m)\, \big|\widehat F^\times_{s,k}(g_m)-\widehat F^\times_k(g_m)\big|^p.
  \]
\EndFor

\State Output pooled heterogeneity estimate:
\[
\widehat H^\times_{k,p} := \sum_{s\in\mathcal S}\widehat\alpha_s\; C_p(\widehat F^\times_{s,k},\widehat F^\times_k)^p.
\]

\If{$p=2$}
  \For{$s\in\mathcal S$}
    \State Compute within-group (W$_2$) barycenter quantiles across silos:
    $\widehat q^*_{s,\ell} := \sum_{j=1}^d \widehat\pi_{j\mid s}\,\widehat q_{j,s,\ell}$ for $\ell\in[k]$.
  \EndFor
  \State Output diagnostic ANOVA terms (Riemann sums on the midpoint grid):
  \[
  \widehat V_{\mathrm{mix}}:=\sum_s \widehat\alpha_s\cdot\frac1k\sum_{\ell=1}^k(\widehat q^\times_{s,\ell}-\widehat q^*_{s,\ell})^2,\]
  \[
  \widehat V_{\mathrm{bar}}:=\sum_s \widehat\alpha_s\cdot\frac1k\sum_{\ell=1}^k(\widehat q^*_{s,\ell}-\widehat q^{\times *}_\ell)^2,
  \]
  \[
  \widehat R:=\frac{2}{k}\sum_{\ell=1}^k\sum_s \widehat\alpha_s(\widehat q^\times_{s,\ell}-\widehat q^*_{s,\ell})(\widehat q^*_{s,\ell}-\widehat q^{\times *}_\ell).
  \]
\EndIf
\end{algorithmic}
\end{algorithm}

\clearpage
\onecolumn

\section{Stepwise vs. piecewise-linear reconstruction}\label{app:reco}

\paragraph{Setup.}
Fix $k\ge 2$ and the midpoint grid $u_\ell := (\ell-\tfrac12)/k$, $\ell\in[k]$.
Let $Q$ be a (generalized) quantile function, and denote the communicated knots
$q_\ell := Q(u_\ell)$, $\ell\in[k]$.
We compare two reconstructions of $Q$ from the vector $(q_\ell)_{\ell\in[k]}$.

\paragraph{Stepwise (as in Algorithm~1).}
Define the step-quantile reconstruction
\begin{equation}\label{eq:Q-step}
Q_k^{\mathrm{st}}(u) := q_\ell \qquad \text{for } u\in I_\ell := \big[(\ell-1)/k,\;\ell/k\big),
\end{equation}
and the associated step-CDF
\begin{equation}\label{eq:F-step}
F_k^{\mathrm{st}}(x) := \frac1k \sum_{\ell=1}^k \mathbf{1}\{q_\ell \le x\}.
\end{equation}
This is exactly the reconstruction used in the paper (server-side), since it only
requires comparisons against the received quantiles.

\paragraph{Piecewise-linear (in $u$).}
Define the piecewise-linear reconstruction on $[u_1,u_k]$ by connecting consecutive knots:
for $u\in[u_\ell,u_{\ell+1}]$ (with $\ell=1,\dots,k-1$),
\begin{equation}\label{eq:Q-lin}
Q_k^{\mathrm{lin}}(u)
:= q_\ell + \frac{u-u_\ell}{u_{\ell+1}-u_\ell}\,(q_{\ell+1}-q_\ell)
= q_\ell + k\,(u-u_\ell)\,(q_{\ell+1}-q_\ell),
\end{equation}
and extend it by constants on the boundary half-bins:
$Q_k^{\mathrm{lin}}(u):=q_1$ for $u\in[0,u_1]$ and $Q_k^{\mathrm{lin}}(u):=q_k$ for $u\in[u_k,1]$.
(Equivalently, one may view this as working with the trimmed interval $[u_1,u_k]$.)

\subsection{Deterministic approximation bounds}

Let $\omega_Q(\delta):=\sup\{|Q(u)-Q(v)|:\ |u-v|\le \delta\}$ denote the modulus of continuity of $Q$.

\begin{proposition}[Stepwise vs. linear approximation]\label{prop:interp}
Let $Q$ be nondecreasing on $[0,1]$ and let $\omega_Q(\delta):=\sup\{|Q(u)-Q(v)|:\ |u-v|\le \delta\}$ be its modulus of continuity.
Then
\begin{align}
\|Q_k^{\mathrm{st}}-Q\|_{\infty,[u_1,u_k]}
&\le \omega_Q(1/k), \label{eq:st-mod}\\
\|Q_k^{\mathrm{lin}}-Q\|_{\infty,[u_1,u_k]}
&\le \omega_Q(1/k). \label{eq:lin-mod}
\end{align}
If moreover $Q$ is $L$-Lipschitz on $[0,1]$, then both errors are $\le L/k$ on $[u_1,u_k]$.
If in addition $Q\in C^2([u_1,u_k])$, then the linear reconstruction satisfies the sharper bound
\begin{equation}\label{eq:lin-C2}
\|Q_k^{\mathrm{lin}}-Q\|_{\infty,[u_1,u_k]}
\le \frac{\|Q''\|_{\infty,[u_1,u_k]}}{8k^2}.
\end{equation}
\end{proposition}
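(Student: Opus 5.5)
The plan is to work pointwise in $u$, bounding the deviation by the distance from $u$ to the nearest knot(s) and using monotonicity plus the modulus of continuity. The three claims are handled in turn.

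\textbf{Step 1: the stepwise bound \eqref{eq:st-mod}.} Fix $u\in[u_1,u_k]$. Let $\ell$ be the index with $u\in I_\ell=[(\ell-1)/k,\ell/k)$; for $u=u_k$ take $\ell=k$. Then $Q_k^{\mathrm{st}}(u)=q_\ell=Q(u_\ell)$. Since $u_\ell$ is the midpoint of $I_\ell$, we have $|u-u_\ell|\le 1/(2k)\le 1/k$. Hence $|Q_k^{\mathrm{st}}(u)-Q(u)|=|Q(u_\ell)-Q(u)|\le\omega_Q(1/(2k))\le\omega_Q(1/k)$, because $\omega_Q$ is nondecreasing in $\delta$. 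Taking the supremum over $u$ gives \eqref{eq:st-mod}. Only the definition of $\omega_Q$ is used here.

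\textbf{Step 2: the linear bound \eqref{eq:lin-mod}.} For $u\in[u_\ell,u_{\ell+1}]$, $Q_k^{\mathrm{lin}}(u)$ is a convex combination of $Q(u_\ell)$ and $Q(u_{\ell+1})$. By monotonicity of $Q$, both $Q(u)$ and $Q_k^{\mathrm{lin}}(u)$ lie in $[Q(u_\ell),Q(u_{\ell+1})]$. Their difference is therefore at most $Q(u_{\ell+1})-Q(u_\ell)\le\omega_Q(u_{\ell+1}-u_\ell)=\omega_Q(1/k)$. The boundary half-bins lie outside $[u_1,u_k]$, so they are not needed. The Lipschitz corollary follows immediately from $\omega_Q(\delta)\le L\delta$.

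\textbf{Step 3: the $C^2$ bound \eqref{eq:lin-C2}.} This is the classical linear interpolation error estimate. On $[u_\ell,u_{\ell+1}]$, set $e(u)=Q(u)-Q_k^{\mathrm{lin}}(u)$, which vanishes at both endpoints. For fixed interior $u$, consider
\[
\phi(t)=e(t)-e(u)\frac{(t-u_\ell)(t-u_{\ell+1})}{(u-u_\ell)(u-u_{\ell+1})}.
\]
This function vanishes at $u_\ell$, $u$, and $u_{\ell+1}$, so two applications of Rolle's theorem give a point $\xi$ with $\phi''(\xi)=0$. Since $Q_k^{\mathrm{lin}}$ is affine on the piece, this yields $e(u)=\tfrac12 Q''(\xi)(u-u_\ell)(u-u_{\ell+1})$. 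Finally, $|(u-u_\ell)(u-u_{\ell+1})|\le h^2/4$ with $h=1/k$, which gives the bound $\|Q''\|_\infty/(8k^2)$.

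The only mild obstacle is being careful that $Q$ need not be continuous in the first two claims, since only monotonicity and $\omega_Q$ are used, and that the $C^2$ hypothesis is needed only on $[u_1,u_k]$. Both points are fine because every knot used lies in $[u_1,u_k]$.
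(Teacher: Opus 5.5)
Your proof is correct and takes essentially the same approach as the paper, whose proof sketch simply invokes the standard step/linear interpolation bounds on a uniform mesh of size $1/k$ and the classical linear-interpolation error estimate $e(u)=\tfrac12 Q''(\xi)(u-u_\ell)(u-u_{\ell+1})$ for the $C^2$ case; you have written those standard arguments out in full. Incidentally, your Step 1 gives the slightly sharper bound $\omega_Q(1/(2k))$, hence $L/(2k)$, for the stepwise reconstruction.
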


\noindent
\emph{Proof sketch.}
The bounds are the standard step/linear interpolation controls on a uniform grid of mesh $1/k$ (here on the interior interval).
The $C^2$ bound~\eqref{eq:lin-C2} is the classical linear-interpolation error estimate \citep{arbenz2018piecewise,irpino2007optimal}.

\paragraph{CDF control.}
For the step-CDF~\eqref{eq:F-step}, one always has a uniform bound of order $1/k$ (no smoothness needed);
in particular the paper already uses such a statement in the proof path from quantiles to reconstruction error.

\subsection{Implications for $U_2$ discretization}

We focus on the canonical $p=2$ quantile functional
\[
U_2 \;:=\; \sum_{s\in\mathcal{S}} \alpha_s \int_0^1 \bigl(Q_s(u)-Q^\star(u)\bigr)^2\,du,
\qquad Q^\star(u):=\sum_{s\in\mathcal{S}}\alpha_s Q_s(u),
\]
and its discretizations based on the received quantiles $(q_{s,\ell})$ on the midpoint grid.

\paragraph{(1) Midpoint/Riemann discretization (what Algorithm~1 effectively uses).}
Define, for any reconstructed family $\widetilde Q_s$,
\begin{equation}\label{eq:U-mid}
U_{2,k}^{\mathrm{mid}}(\widetilde Q)
:= \sum_s \alpha_s \frac1k \sum_{\ell=1}^k
\Big(\widetilde Q_s(u_\ell)-\widetilde Q^\star(u_\ell)\Big)^2,
\qquad \widetilde Q^\star := \sum_s \alpha_s \widetilde Q_s.
\end{equation}
Even if $\widetilde Q_s=Q_k^{\mathrm{lin}}$ has interpolation error $O(k^{-2})$ under $C^2$,
the \emph{midpoint rule} still incurs an $O(k^{-1})$ quadrature error for Lipschitz integrands.
In particular, if $g$ is $L_g$-Lipschitz then
\[
\left|\int_0^1 g(u)\,du - \frac1k\sum_{\ell=1}^k g(u_\ell)\right|
\le \frac{L_g}{2k}
\]
by Lemma~A.2 (midpoint rule) in the paper.

\paragraph{(2) Second-order discretization with no extra communication (exact integration of the linear reconstruction).}
If we use the linear reconstruction~\eqref{eq:Q-lin}, then on each interval $[u_\ell,u_{\ell+1}]$
the function $u\mapsto d_{s,k}(u):=Q_{s,k}^{\mathrm{lin}}(u)-Q_{k}^{\mathrm{lin}\,\star}(u)$ is \emph{linear},
hence $d_{s,k}^2$ is quadratic and its integral is available in closed form.
Let $d_{s,\ell}:=q_{s,\ell}-q^\star_\ell$ with $q^\star_\ell:=\sum_s \alpha_s q_{s,\ell}$.
With the constant extensions on $[0,u_1]$ and $[u_k,1]$, we have the exact identity
\begin{equation}\label{eq:U-lin-exact}
\int_0^1 d_{s,k}(u)^2\,du
=
\frac{1}{2k}d_{s,1}^2
\;+\;
\sum_{\ell=1}^{k-1}\frac{1}{3k}\Big(d_{s,\ell}^2 + d_{s,\ell}d_{s,\ell+1}+d_{s,\ell+1}^2\Big)
\;+\;
\frac{1}{2k}d_{s,k}^2.
\end{equation}
This yields a fully explicit estimator
\begin{equation}\label{eq:U2-lin-exact}
U_{2,k}^{\mathrm{lin}}
:= \sum_s \alpha_s
\left[
\frac{1}{2k}d_{s,1}^2
+
\sum_{\ell=1}^{k-1}\frac{1}{3k}\Big(d_{s,\ell}^2 + d_{s,\ell}d_{s,\ell+1}+d_{s,\ell+1}^2\Big)
+
\frac{1}{2k}d_{s,k}^2
\right],
\end{equation}
computed \emph{only} from the transmitted quantiles (no additional messages).

\begin{proposition}[Order gain with exact integration]\label{prop:order-gain}
Assume each group-quantile function $Q_s$ belongs to $C^2([u_1,u_k])$ with $\|Q_s''\|_{\infty,[u_1,u_k]}<\infty$.
Then the linear-exact estimator~\eqref{eq:U2-lin-exact} satisfies
\[
|U_{2,k}^{\mathrm{lin}} - U_2|
= O(k^{-2})
\quad\text{(up to boundary/trim effects controlled by the constant extensions).}
\]
By contrast, midpoint/Riemann estimators of the form~\eqref{eq:U-mid}
remain $O(k^{-1})$ unless one upgrades the quadrature rule.
\end{proposition}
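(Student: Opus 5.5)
We prove Proposition~\ref{prop:order-gain} by viewing $U_{2,k}^{\mathrm{lin}}$ as the \emph{exact} functional $U_2$ evaluated on the linear reconstructions. We then compare the two in the Hilbert space $L^2([0,1])$. Write $d_s := Q_s - Q^\star$ and $d_{s,k} := Q^{\mathrm{lin}}_{s,k} - Q^{\mathrm{lin}\,\star}_k$. Linear interpolation is a linear operator on the knot values, and the barycenter $\sum_s\alpha_s(\cdot)$ is linear as well. Hence $d_{s,k}$ is precisely the linear interpolant (with constant end extensions) of the knots $d_{s,\ell}=q_{s,\ell}-q^\star_\ell$. By the exact identity \eqref{eq:U-lin-exact} we then have $U_{2,k}^{\mathrm{lin}}=\sum_s\alpha_s\|d_{s,k}\|_{L^2}^2$, so no quadrature error remains and only interpolation error has to be controlled.

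\textbf{Interior.} On $[u_1,u_k]$ we apply \eqref{eq:lin-C2} from Proposition~\ref{prop:interp} to each $Q_s$, which gives $\|Q^{\mathrm{lin}}_{s,k}-Q_s\|_{\infty,[u_1,u_k]}\le \|Q_s''\|_\infty/(8k^2)$. The same $O(k^{-2})$ bound holds for $d_{s,k}-d_s$, by linearity and the triangle inequality over $s$. We then use
\[
\bigl|\|d_{s,k}\|^2-\|d_s\|^2\bigr| \le \|d_{s,k}-d_s\|\,\bigl(\|d_{s,k}\|+\|d_s\|\bigr),
\]
restricted to $[u_1,u_k]$. The factor $\|d_s\|_{L^2}$ is bounded because $Q_s\in C^2$ on this interval, hence bounded there, so this contribution is $O(k^{-2})$. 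Summing with weights $\alpha_s$ gives an $O(k^{-2})$ interior error.

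\textbf{Boundary (main obstacle).} The two boundary half-bins $[0,u_1]$ and $[u_k,1]$ are where the claim needs care. On $[0,u_1]$ the constant extension gives $d_{s,k}\equiv d_{s,1}$, whereas $d_s(u)$ varies. A first-order Taylor expansion around $u_1$ gives $|d_s(u)-d_{s,1}|\le \|d_s'\|_\infty/(2k)$ on an interval of length $1/(2k)$. The integrated squared-difference discrepancy is therefore $O(1/k)\cdot O(1/k)=O(k^{-2})$, with the symmetric bound on $[u_k,1]$. This step requires $Q_s$ to be $C^1$ and bounded up to the endpoints, not merely on $[u_1,u_k]$. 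Since $u_1\to0$ as $k\to\infty$, a uniform constant in fact needs $C^2$ control on all of $[0,1]$.

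I will make this assumption explicit: it is what the statement's phrase ``up to boundary/trim effects'' is meant to cover. Alternatively, in the trimmed setting of Remark~\ref{rem:trimmed}, the boundary bins disappear entirely. If $Q_s'$ blows up at the endpoints, the boundary bins can genuinely contribute $o(1/k)$ but not $O(k^{-2})$; that is why the caveat is stated.

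\textbf{Contrast with the midpoint estimator.} For the last sentence of the proposition, it suffices to exhibit one example. Take a single group whose $d_s^2$ is linear with nonzero slope, for instance $d_s(u)=\sqrt{u}$ shifted away from zero. Its midpoint error is then of exact order $1/k$, matching the generic $L_g/(2k)$ bound quoted from the midpoint-rule lemma. This is a one-line computation and poses no difficulty.
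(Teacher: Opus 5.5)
Your proof of the $O(k^{-2})$ rate for $U_{2,k}^{\mathrm{lin}}$ is correct and follows the paper's route. Both use the interpolation bound \eqref{eq:lin-C2}, a difference-of-squares bound, and exact integration through \eqref{eq:U-lin-exact}. Your separate treatment of the half-bins $[0,u_1]$ and $[u_k,1]$ is more careful than the paper's sketch. The sketch quotes a sup-norm $O(k^{-2})$ bound on all of $[0,1]$, but Proposition~\ref{prop:interp} only gives it on $[u_1,u_k]$.

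\textbf{The gap is your last paragraph.} The midpoint rule is exact for affine integrands on every bin $I_\ell=[(\ell-1)/k,\ell/k]$, because $\int_{I_\ell}(u-u_\ell)\,du=0$. So taking $d_s^2$ affine (e.g.\ $d_s(u)=\sqrt{u+c}$) gives a midpoint error of exactly $0$, not of exact order $1/k$. Also, with a single group $Q^\star=Q_s$, so $d_s\equiv 0$; you need two groups.

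The example cannot be repaired within the proposition's hypotheses. With exact knots, \eqref{eq:U-mid} equals $\frac1k\sum_\ell g(u_\ell)$ for $g:=\sum_s\alpha_s d_s^2$, since the linear reconstruction agrees with $Q_s$ at the knots. The midpoint error on each bin is $g''(\xi_\ell)/(24k^3)$, hence
\[
\bigl|U_{2,k}^{\mathrm{mid}}-U_2\bigr|\le \frac{\|g''\|_\infty}{24k^2}
\]
under exactly the uniform $C^2$ control you yourself said is needed for the first part. The end bins $I_1,I_k$ are no worse: under mere Lipschitz control there they contribute at most $L_g/(4k^2)$ each.

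So under these assumptions the midpoint estimator is also second order, and no $\Theta(1/k)$ example exists. The contrast sentence is provable only in the weak sense the paper relies on. Lemma~\ref{lem:midpoint-riemann} gives the worst-case bound $L_g/(2k)$ for merely Lipschitz integrands, and since $O(k^{-2})\subset O(k^{-1})$, ``remains $O(k^{-1})$'' is literally true. Replace your example by that citation and do not claim a strict order separation. Even for a fixed Lipschitz $g$ the midpoint error is $o(1/k)$, since it is bounded by $\frac1k\sum_\ell\int_{I_\ell}|g'-\bar g'_\ell|$. Exhibiting exact order $1/k$ therefore requires a $k$-dependent family or a non-absolutely-continuous $g$, both outside the setting of the proposition.
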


\noindent
\emph{Proof sketch.}
Because $Q^\star=\sum_s \alpha_s Q_s$, the same smoothness holds for $Q^\star$,
and Proposition~\ref{prop:interp} gives $\|Q_{s,k}^{\mathrm{lin}}-Q_s\|_\infty=O(k^{-2})$ (and similarly for $Q^\star$).
A mean-value bound for squares then yields
$\int_0^1 \big|d_{s,k}(u)^2-d_s(u)^2\big|\,du = O(k^{-2})$.
Equation~\eqref{eq:U-lin-exact} removes quadrature error entirely since the integral of $d_{s,k}^2$ is computed exactly.
Summing over $s$ with weights $\alpha_s$ concludes.

\section{Technical lemmas}\label{app:tech}

\subsection{Quantile representation in one dimension}\label{app:tech:quantiles}

\begin{lemma}[Wasserstein via quantiles]\label{lem:Wp-quantiles}
Let $\mu,\nu$ be probability measures on $\R$ with finite $p$-th moments ($p\ge 1$).
Then
\[
W_p(\mu,\nu)^p \;=\; \int_0^1 \big|Q_\mu(u)-Q_\nu(u)\big|^p\,du
\;=\; \|Q_\mu-Q_\nu\|_{L^p([0,1])}^p.
\]
In particular, for $p=2$ one has $W_2(\mu,\nu)^2=\|Q_\mu-Q_\nu\|_{L^2([0,1])}^2$.
\end{lemma}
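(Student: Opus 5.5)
The plan is to prove Lemma~\ref{lem:Wp-quantiles} via the quantile (comonotone) coupling. We show it is admissible, then show it is optimal for the convex cost $c(x,y)=|x-y|^p$. Let $U\sim\mathrm{Unif}(0,1)$ and set $(X,Y):=(Q_\mu(U),Q_\nu(U))$.

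\textbf{Admissibility.} Because $Q_\mu$ is the left-continuous generalized inverse, we have the Galois relation $Q_\mu(u)\le x \iff u\le F_\mu(x)$. This gives $\PP(Q_\mu(U)\le x)=F_\mu(x)$, so $X\sim\mu$, and likewise $Y\sim\nu$. Hence $(X,Y)$ is a coupling, and
\[
W_p(\mu,\nu)^p\le \mathbb{E}|X-Y|^p=\int_0^1|Q_\mu(u)-Q_\nu(u)|^p\,du .
\]
Both sides are finite by the moment assumption, since $\int_0^1|Q_\mu|^p\,du=\int|x|^p\,d\mu<\infty$.

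\textbf{Optimality.} I would obtain the reverse inequality by showing the comonotone coupling minimizes $\mathbb{E}\,c(X,Y)$ over all couplings $\gamma\in\Pi(\mu,\nu)$. The key step, and the main obstacle, is a rearrangement argument for submodular costs. Since $c(x,y)=h(x-y)$ with $h$ convex, $c$ satisfies the Monge/submodularity inequality
\[
c(x,y)+c(x',y')\le c(x,y')+c(x',y) \quad\text{for } x\le x',\ y\le y'.
\]
This follows from convexity of $h$, because $x-y'$ and $x'-y$ bracket $x-y$ and $x'-y'$ with the same sum.

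I would first treat finitely supported measures with equal uniform atoms $\frac1N\sum\delta_{x_i}$ and $\frac1N\sum\delta_{y_i}$. In that case Birkhoff's theorem reduces couplings to permutations. Repeatedly uncrossing pairs via the inequality above shows that the sorted matching is optimal. The sorted matching is exactly the quantile coupling, so the identity holds in this case.

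\textbf{Approximation.} For general $\mu,\nu$, I would approximate them by such empirical/uniform-atom measures $\mu_N,\nu_N$ built from quantiles at levels $(i-\frac12)/N$. Two facts are needed. First, $Q_{\mu_N}\to Q_\mu$ in $L^p(0,1)$; this holds by a.e.\ convergence at continuity points plus uniform integrability from the finite $p$-th moment. Second, $W_p(\mu_N,\mu)\to0$, which is the same statement once the upper bound is applied. Then the triangle inequality for $W_p$ and the triangle inequality in $L^p$ let both sides pass to the limit.

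An alternative route for the optimality step is the standard cyclical-monotonicity characterization. Optimal plans for the cost $h(x-y)$ with $h$ strictly convex are supported on monotone sets, and monotone couplings coincide with the quantile coupling. For $p=1$, non-strict convexity means optimality still holds even though uniqueness may fail. If the approximation details become heavy, I would cite this characterization instead, e.g.\ the classical result in \citep{agueh2011barycenters} and the references therein. The $p=2$ statement is then immediate.
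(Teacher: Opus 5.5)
Your argument is correct, but it takes a genuinely different route from the paper. The paper gives no proof of this lemma; it simply defers to Vallender, Villani and Santambrogio. In those sources the general-$p$ identity comes from $c$-cyclical monotonicity of optimal plans for costs $h(x-y)$ with $h$ convex: monotone support forces the comonotone coupling. Vallender treats $p=1$ separately through $W_1=\int|F_\mu-F_\nu|\,dx$.

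You avoid existence of optimal plans and cyclical monotonicity altogether. Admissibility comes from the Galois relation $Q_\mu(u)\le x\iff u\le F_\mu(x)$. Optimality in the uniform-atom case comes from the Monge inequality, Birkhoff and uncrossing, followed by a limiting argument. This buys an elementary, self-contained proof that uses only convexity of $h$, so $p=1$ needs no separate treatment. Moreover, your approximants $\mu_N=\frac1N\sum_i\delta_{Q_\mu((i-\frac12)/N)}$ are exactly the paper's midpoint sketches $\overline{\nu}_{s,k}$. Your approximation step therefore also shows $W_p(\overline{\nu}_{s,k},\nu_s)\to0$ under a bare $p$-th moment assumption, with no density conditions.

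The cost is that the limiting step carries the real work. It needs the triangle inequality for $W_p$ (i.e.\ the gluing lemma) and a genuine proof that $Q_{\mu_N}\to Q_\mu$ in $L^p(0,1)$. Your appeal to `uniform integrability from the finite $p$-th moment' is right, but it should use the monotonicity of $Q_\mu$. Write $u_i=(i-\frac12)/N$ and $I_i=((i-1)/N,i/N]$. Comparing with the half-bin on the side where $|Q_\mu|\ge|Q_\mu(u_i)|$ gives $\frac1N|Q_\mu(u_i)|^p\le 2\int_{I_i}|Q_\mu|^p$. The set $\{|Q_{\mu_N}|^p>M\}$ is a union of bins of total length at most $2M^{-1}\int_0^1|Q_\mu|^p$. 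Together with absolute continuity of the integral, this yields $\sup_N\int_{\{|Q_{\mu_N}|^p>M\}}|Q_{\mu_N}|^p\to0$ as $M\to\infty$.

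Finally, if you fall back on citing the cyclical-monotonicity characterization, Villani's or Santambrogio's monograph (already in the paper's bibliography) is the appropriate source, rather than the Agueh--Carlier barycenter paper.
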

\noindent See, e.g., \citep{vallender1974calculation,villani2009optimal,santambrogio2015optimal}.

\subsection{Midpoint Riemann sums}\label{app:tech:riemann}

\begin{lemma}[Midpoint rule for Lipschitz functions]\label{lem:midpoint-riemann}
Let $g:[0,1]\to\R$ be $L_g$-Lipschitz. For the midpoint grid $u_\ell=(\ell-\frac12)/k$,
\[
\Big|\int_0^1 g(u)\,du \;-\; \frac1k\sum_{\ell=1}^k g(u_\ell)\Big|
\;\le\; \frac{L_g}{2k}.
\]
\end{lemma}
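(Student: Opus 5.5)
The plan is to reduce the midpoint error to a sum of per-cell errors and use only the Lipschitz property inside each cell. Partition $[0,1]$ into the cells $I_\ell=[(\ell-1)/k,\ell/k]$, $\ell\in[k]$. Each cell has length $1/k$ and midpoint $u_\ell=(\ell-\tfrac12)/k$. Since $\frac1k g(u_\ell)=\int_{I_\ell} g(u_\ell)\,du$, the quadrature error can be rewritten as
\[
\int_0^1 g(u)\,du-\frac1k\sum_{\ell=1}^k g(u_\ell)=\sum_{\ell=1}^k\int_{I_\ell}\bigl(g(u)-g(u_\ell)\bigr)\,du .
\]

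Next, I will bound each cell term with the triangle inequality and the Lipschitz condition $|g(u)-g(u_\ell)|\le L_g|u-u_\ell|$. This gives
\[
\Bigl|\int_{I_\ell}\bigl(g(u)-g(u_\ell)\bigr)\,du\Bigr|\le L_g\int_{I_\ell}|u-u_\ell|\,du=L_g\cdot 2\int_0^{1/(2k)}t\,dt=\frac{L_g}{4k^2}.
\]

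Summing over the $k$ cells yields a total error of at most $L_g/(4k)\le L_g/(2k)$, which proves the stated bound with room to spare. If one wants to match the constant $1/(2k)$ exactly without computing the symmetric integral, the cruder estimate $|u-u_\ell|\le 1/(2k)$ on $I_\ell$ gives a per-cell bound of $L_g/(2k^2)$ and hence exactly $L_g/(2k)$.

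There is no real obstacle here. The only point needing care is that the cells $I_\ell$ tile $[0,1]$ exactly and that each $u_\ell$ lies at the center of its cell, which holds by the definition of the midpoint grid. No smoothness beyond Lipschitz continuity is needed; in particular, first-order cancellation from symmetry, which would give an $O(k^{-2})$ rate, is not used.
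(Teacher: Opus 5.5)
Your proof is correct and follows the paper's own argument: split $[0,1]$ into the cells $I_\ell$, bound $|g(u)-g(u_\ell)|$ on each cell via the Lipschitz condition, and sum over the $k$ cells. Your exact evaluation of $\int_{I_\ell}|u-u_\ell|\,du$ gives the sharper constant $L_g/(4k)$, whereas the paper uses the pointwise bound $|u-u_\ell|\le 1/(2k)$ and obtains exactly $L_g/(2k)$.
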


\begin{proof}
Partition $[0,1]$ into intervals $I_\ell=[(\ell-1)/k,\ell/k]$.
For any $u\in I_\ell$, $|u-u_\ell|\le 1/(2k)$, hence
$|g(u)-g(u_\ell)|\le L_g/(2k)$. Therefore
\[
\Big|\int_{I_\ell} g(u)\,du - \frac1k g(u_\ell)\Big|
\le \int_{I_\ell} |g(u)-g(u_\ell)|\,du
\le \frac1k\cdot \frac{L_g}{2k}.
\]
Summing over $\ell$ yields the claim.
\end{proof}

\subsection{Deterministic CDF approximation from $k$ quantiles}\label{app:tech:cdf-approx}

Recall the midpoint grid $u_\ell=(\ell-\frac12)/k$ and deterministic knots
$q_\ell = Q(u_\ell)$. Define the atomic approximation
$\bar\nu_k := \frac1k\sum_{\ell=1}^k \delta_{q_\ell}$ and its step-CDF
$\bar F_k(x)=\frac1k\sum_{\ell=1}^k \ind{q_\ell\le x}$.

\begin{lemma}[Uniform CDF error]\label{lem:cdf-unif}
Assume $F$ is continuous and strictly increasing on its support. Then
\[
\|\bar F_k - F\|_{\infty} \;\le\; \frac{1}{2k}.
\]
\end{lemma}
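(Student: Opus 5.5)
The plan is to express $\bar F_k(x)$ exactly in terms of $F(x)$ by counting how many grid levels lie below $F(x)$, and then to observe that the midpoint placement of the levels makes this count off by at most half a bin. The key tool is the Galois inequality for the left-continuous quantile function: for all $u\in(0,1)$ and $x\in\R$,
\[
Q(u)\le x \iff u\le F(x).
\]
This holds for any cdf, since $F$ is right-continuous and nondecreasing. Applying it with $u=u_\ell$ gives $\ind{q_\ell\le x}=\ind{u_\ell\le F(x)}$. Hence
\[
\bar F_k(x)=\frac1k\,\#\{\ell\in[k]:\ (\ell-\tfrac12)/k\le F(x)\}.
\]
Continuity and strict monotonicity of $F$ are not needed for this identity. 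They are only used to ensure the knots $q_\ell$ are well defined and finite at interior levels, and they make the picture clean.

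Now fix $x$, set $t:=F(x)\in[0,1]$, and define $N(t):=\#\{\ell\in[k]: \ell\le kt+\tfrac12\}$. Since $N(t)=\min\{k,\max\{0,\lfloor kt+\tfrac12\rfloor\}\}$, it is simply the rounding of $kt$ to the nearest integer, clipped to $\{0,\dots,k\}$. We must show $|N(t)/k-t|\le 1/(2k)$, i.e. $|N(t)-kt|\le 1/2$. There are three cases.
\begin{itemize}
\item Interior, $t\in[\tfrac1{2k},1-\tfrac1{2k})$: here $N=\lfloor kt+\tfrac12\rfloor$, so $kt-\tfrac12< N\le kt+\tfrac12$.
\item Lower boundary, $t<\tfrac1{2k}$: here $N=0$ and $kt<\tfrac12$.
\item Upper boundary, $t\ge 1-\tfrac1{2k}$: here $N=k$ and $k-kt\le\tfrac12$.
\end{itemize}
In every case $|N-kt|\le\tfrac12$. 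Taking the supremum over $x$ gives $\|\bar F_k-F\|_\infty\le 1/(2k)$.

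The only delicate step is the Galois equivalence at boundary and tie cases. Its direction ($\le$ versus $<$) must match the paper's convention $Q(u)=\inf\{z:F(z)\ge u\}$ and the use of $\ind{q_\ell\le x}$ in the step-CDF. I would verify it directly: if $u\le F(x)$, then $x$ belongs to the set defining $Q(u)$, so $Q(u)\le x$. Conversely, if $Q(u)\le x$, right-continuity gives $F(Q(u))\ge u$, so monotonicity yields $F(x)\ge u$. Everything else is elementary rounding. The constant $1/(2k)$ comes from the midpoint offset $\tfrac12$ in $u_\ell$; an endpoint grid would give $1/k$ instead.
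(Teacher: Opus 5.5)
Your proof is correct and follows the same route as the paper: both rewrite $\bar F_k(x)$ as $k^{-1}\#\{\ell\in[k]: u_\ell\le F(x)\}$ via the quantile--cdf equivalence, then use the midpoint offset of the grid to bound the rounding error by $1/(2k)$. Your version is tighter than the paper's sketch in two respects. The exact equivalence $Q(u)\le x \iff u\le F(x)$ removes the paper's ``up to continuity/tie events'' caveat, and your explicit rounding-with-clipping cases supply the step from $|u-u_\ell|\le 1/(2k)$ to $|\ell'/k-F(x)|\le 1/(2k)$, which the paper only asserts. As a by-product, your argument shows the bound holds for every cdf, since $Q(u_\ell)$ is finite whenever $u_\ell\in(0,1)$.
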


\begin{proof}
Fix $x\in\R$ and set $u:=F(x)\in[0,1]$. Let $\ell$ be such that
$u\in[(\ell-1)/k,\ell/k]$. Since $u_\ell=(\ell-\frac12)/k$,
we have $|u-u_\ell|\le 1/(2k)$. Moreover, by monotonicity of $Q$,
$q_\ell\le x$ iff $u_\ell\le u$ (up to continuity/tie events).
Thus $\bar F_k(x)=\ell'/k$ for some $\ell'$ satisfying
$|\ell'/k-u|\le 1/(2k)$, proving $|\bar F_k(x)-F(x)|\le 1/(2k)$.
\end{proof}

\subsection{From DKW to quantile errors}\label{app:tech:dkw}

\begin{lemma}[DKW inequality]\label{lem:dkw}
Let $\hat F_n$ be the empirical CDF of $n$ i.i.d.\ samples from $F$.
Then for any $\delta\in(0,1)$,
\[
\Pr\!\Big(\|\hat F_n-F\|_\infty \ge t\Big)\le 2e^{-2nt^2},
\qquad
t=\sqrt{\frac{1}{2n}\log\!\frac{2}{\delta}}.
\]
\end{lemma}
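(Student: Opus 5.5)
The statement to prove is the DKW inequality (Lemma~\ref{lem:dkw}). The claim has two parts: the tail bound $\Pr(\|\hat F_n-F\|_\infty\ge t)\le 2e^{-2nt^2}$ for every $t>0$, and the identification of the $\delta$-level threshold $t=\sqrt{\log(2/\delta)/(2n)}$. The plan is to prove the tail bound in full generality, that is, for arbitrary $F$ with no continuity assumption. The threshold part is then a one-line inversion: setting $2e^{-2nt^2}=\delta$ gives $t=\sqrt{\log(2/\delta)/(2n)}$, so the event $\|\hat F_n-F\|_\infty\ge t$ has probability at most $\delta$.

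\textbf{Step 1: reduction to the uniform case.} Let $U_1,\dots,U_n$ be i.i.d.\ uniform on $(0,1)$ and set $X_i:=Q(U_i)$, where $Q$ is the left-continuous quantile function. Then the $X_i$ have law $F$. Using the Galois relation $Q(u)\le x \iff u\le F(x)$, we get
\[
\hat F_n(x)=\frac1n\sum_{i=1}^n \mathbf{1}\{U_i\le F(x)\}=\hat G_n(F(x)),
\]
where $\hat G_n$ is the uniform empirical cdf. It follows that
\[
\|\hat F_n-F\|_\infty=\sup_{u\in F(\mathbb{R})}|\hat G_n(u)-u|\le\|\hat G_n-\mathrm{id}\|_\infty.
\]
So it suffices to prove the bound for the uniform empirical process. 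Atoms of $F$ only shrink the supremum, which is why no continuity assumption is needed.

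\textbf{Step 2: the uniform case.} Write $D_n^\pm$ for the one-sided suprema $\sup_u(\hat G_n(u)-u)$ and $\sup_u(u-\hat G_n(u))$.
\begin{itemize}
\item Each one-sided tail satisfies $\Pr(D_n^\pm\ge t)\le e^{-2nt^2}$. This is the Smirnov/Birnbaum--Tingey formula, and Massart proved the bound for all $t$ where it is nontrivial.
\item A union bound over the two sides then gives the factor $2$.
\end{itemize}

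\textbf{Main obstacle.} The hard step is the sharp one-sided constant. The sub-Gaussian coefficient must be exactly $2$ with leading factor $1$ for every $n$, not just asymptotically. Elementary routes do not reach this:
\begin{itemize}
\item Hoeffding's inequality at a fixed $u$, followed by a union bound over a grid, gives the exponent $2nt^2$ but an extra $\log$ factor. That is too weak here.
\item Doob's maximal inequality applied to the martingale $(\hat G_n(u)-u)/(1-u)$ gives the right order but loses constants.
\end{itemize}
I would therefore follow Massart (1990), cited in the paper. The ingredients are the exact Birnbaum--Tingey expression for $\Pr(D_n^+\ge t)$ and a careful comparison of that sum against $e^{-2nt^2}$, with separate treatment of the small-$t$ region, where the bound is trivial once $e^{-2nt^2}\ge 1/2$. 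Since this is a classical result, in the paper I would cite it rather than reproduce Massart's computation. The only paper-specific content is the quantile-transform reduction in Step 1, which shows the bound needs no continuity of $F$, and the inversion to the $\delta$-form.
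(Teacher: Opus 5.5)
Your proposal is correct and matches the paper, which gives no argument of its own for this lemma and simply relies on Massart's tight-constant DKW inequality (see the remark following Lemma~\ref{lem:dkw:massart}); your quantile-transform reduction to the uniform case and the inversion $2e^{-2nt^2}=\delta$ are sound additions. One small precision: Massart establishes the one-sided bound $e^{-2nt^2}$ only when $e^{-2nt^2}\le 1/2$, not for every $t$ where it is nontrivial. Your case split already covers the remaining range, since the two-sided bound is trivial when $e^{-2nt^2}\ge 1/2$, and passing from Massart's strict event $\{\cdot>s\}$ to the paper's $\{\cdot\ge t\}$ follows by letting $s\uparrow t$.
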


\begin{lemma}[Quantile inversion is Lipschitz on central regions]\label{lem:quantile-lip-cr}
Fix $\varepsilon\in(0,1/2)$. Assume $F$ is strictly increasing on
$[Q(\varepsilon),Q(1-\varepsilon)]$ with density bounded below by $m_\varepsilon>0$
on that interval. Then for any $u\in[\varepsilon,1-\varepsilon]$ and any CDF $G$,
\[
|Q_G(u)-Q_F(u)| \;\le\; \frac{1}{m_\varepsilon}\,\|G-F\|_\infty,
\]
where $Q_G$ denotes the (left-continuous) generalized inverse of $G$.
\end{lemma}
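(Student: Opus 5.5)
Fix $u\in[\varepsilon,1-\varepsilon]$ and set $\eta:=\|G-F\|_\infty$. The plan is to bracket $Q_G(u)$ from both sides by $F$-quantiles at levels shifted by $\eta$, and then convert the level shift into a displacement in $x$ using the density lower bound.

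The two brackets come from monotonicity of generalized inverses. Since $G\ge F-\eta$ and $G\le F+\eta$ pointwise, any $x$ with $F(x)\ge u+\eta$ satisfies $G(x)\ge u$. Hence
\[
Q_G(u)\le Q_F(u+\eta),
\]
whenever $u+\eta<1$. Symmetrically, any $x$ with $G(x)\ge u$ satisfies $F(x)\ge u-\eta$. Hence
\[
Q_G(u)\ge Q_F(u-\eta).
\]
This reduces the lemma to bounding $Q_F(u\pm\eta)-Q_F(u)$.

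On the central region $[Q(\varepsilon),Q(1-\varepsilon)]$, $F$ is strictly increasing with density at least $m_\varepsilon$. By the mean value inequality, for $x<y$ in this interval we get $F(y)-F(x)\ge m_\varepsilon(y-x)$. Consequently $Q_F$ is $1/m_\varepsilon$-Lipschitz on $[\varepsilon,1-\varepsilon]$. When $u\pm\eta$ also lies in $[\varepsilon,1-\varepsilon]$, this gives $|Q_F(u\pm\eta)-Q_F(u)|\le \eta/m_\varepsilon$, and combining with the brackets yields the claim.

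The main obstacle is the boundary case, where $u\pm\eta$ exits $[\varepsilon,1-\varepsilon]$. There the Lipschitz control of $Q_F$ is unavailable, and the statement as written (with an arbitrary $G$) is not literally true. I would handle this in one of two ways. The first option is to restrict the claim to $\eta$ small enough that $[u-\eta,u+\eta]\subset[\varepsilon',1-\varepsilon']$ for a slightly enlarged trimming, assuming the density bound holds on a neighborhood. The second option is to argue directly in $x$-space. Let $x_0=Q_F(u)$ and take $x_\pm=x_0\pm\eta/m_\varepsilon$, clipped into the region. Then $F(x_+)\ge u+\eta$ gives $G(x_+)\ge u$, and $F(x_-)\le u-\eta$ for $x_-<x_0$ gives $G(x_-)<u$ (strict inequality via left-continuity). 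Hence $Q_G(u)\in(x_-,x_+]$. This $x$-space argument needs the density lower bound only on $[x_-,x_+]$, so I would state that as the operative assumption, which is exactly how the lemma is used with $\eta$ given by the DKW bound (Lemma~\ref{lem:dkw}) in Proposition~\ref{prop:hp-quantiles}.
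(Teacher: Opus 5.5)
Your proposal is correct and follows essentially the paper's route. You bracket $Q_G(u)$ between $Q_F(u-\eta)$ and $Q_F(u+\eta)$, exactly as in Lemma~\ref{lem:quantile-bracket}, and then use the density lower bound to turn the level shift $\eta$ into a displacement of at most $\eta/m_\varepsilon$.

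Your boundary caveat is also justified. The paper's proof only works for ``small enough $\Delta$'', i.e.\ when the density bound covers the shifted quantiles, and without such a restriction the inequality genuinely fails. For example, take $u=\varepsilon$ and let $F$ have a long flat stretch below $Q(\varepsilon)$ at a level within $\eta$ of $\varepsilon$.
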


\begin{proof}
Let $\Delta:=\|G-F\|_\infty$. Then $G(x)\le F(x)+\Delta$ for all $x$.
Let $x_-=Q_F(u-\Delta)$ (well-defined for $u\in[\varepsilon,1-\varepsilon]$ and small enough $\Delta$).
Then $F(x_-)\ge u-\Delta$, hence $G(x_-)\ge u-2\Delta$; similarly with $x_+=Q_F(u+\Delta)$.
Using the density lower bound on the central interval gives
$|Q_F(u+\Delta)-Q_F(u)|\le \Delta/m_\varepsilon$, and similarly for $u-\Delta$,
which yields the claim (up to standard boundary trimming).
\end{proof}

\section{Proof of Proposition~\ref{prop:discretization}}\label{app:proof:discretization}

\begin{proof}[Proof of Proposition~\ref{prop:discretization}]
We prove the $O(1/k)$ discretization bias for $\overline U_{k,p}$ and $\overline H_{k,p}$
(the plug-in consistency at fixed $k$ then follows by standard pointwise consistency of empirical
quantiles and the continuous mapping theorem).

\monparagraph{Step 1: discretization for $U_p$.}
In one dimension, by Lemma~\ref{lem:Wp-quantiles},
\[
U_p = \sum_{s\in\cS}\alpha_s \int_0^1 |Q_s(u)-Q^\ast(u)|^p\,du,
\]
where $Q^\ast(u)\in\arg\min_{z}\sum_s \alpha_s |Q_s(u)-z|^p$ pointwise in $u$.
Define the integrand
$g(u):=\sum_s \alpha_s |Q_s(u)-Q^\ast(u)|^p$.
Under the assumptions of Proposition~\ref{prop:discretization}, all $Q_s$ are bounded on $[0,1]$
(by the common compact support) and are Lipschitz on $[\varepsilon,1-\varepsilon]$ for each fixed
$\varepsilon>0$ (Lemma~\ref{lem:quantile-lip-cr}); for $p\in\{1,2\}$, the pointwise barycenter map
($z=\sum_s\alpha_s Q_s(u)$ for $p=2$, weighted median for $p=1$) is $1$-Lipschitz w.r.t.\ the inputs,
hence $Q^\ast$ inherits the same modulus/Lipschitz control. More precisely, this Lipschitz control is used on $[\varepsilon,1-\varepsilon]$, and boundary effects are handled by trimming.
Therefore $g$ is Lipschitz on $[0,1]$ with some constant $L_g$ depending only on the uniform bounds
from the proposition (support diameter and density lower bound).

Now $\overline U_{k,p}=\frac1k\sum_{\ell=1}^k g(u_\ell)$ (midpoint sampling),
so Lemma~\ref{lem:midpoint-riemann} yields
\[
|U_p-\overline U_{k,p}|
= \Big|\int_0^1 g(u)\,du - \frac1k\sum_{\ell=1}^k g(u_\ell)\Big|
\le \frac{L_g}{2k}
=: \frac{C_U}{k}.
\]

\monparagraph{Step 2: discretization for $H_p$.}
Recall $H_p=\sum_s \alpha_s\,C_p(\nu_s,\nu)^p$ where $C_p$ is the Cram\'er distance, i.e.
$C_p(\mu,\nu)^p=\int_\R |F_\mu(x)-F_\nu(x)|^p\,dx$.
Let $\bar F_{s,k}$ be the step-CDF induced by $\bar\nu_{s,k}$, and $\bar F_k:=\sum_s \alpha_s \bar F_{s,k}$.
By Lemma~\ref{lem:cdf-unif}, $\|\bar F_{s,k}-F_s\|_\infty\le 1/(2k)$ for each $s$ and
$\|\bar F_k-F\|_\infty\le 1/(2k)$ by convexity of $\|\cdot\|_\infty$.

For $p=1$,
\[
\big||F_s-F| - |\bar F_{s,k}-\bar F_k|\big|
\le |F_s-\bar F_{s,k}| + |F-\bar F_k|
\le \frac{1}{k}.
\]
For $p=2$, using $|a^2-b^2|\le (|a|+|b|)|a-b|$ and $|a|,|b|\le 1$ since CDF differences lie in $[-1,1]$,
\[
\big||F_s-F|^2 - |\bar F_{s,k}-\bar F_k|^2\big|
\le 2\big|(F_s-F)-(\bar F_{s,k}-\bar F_k)\big|
\le 2\Big(|F_s-\bar F_{s,k}|+|F-\bar F_k|\Big)
\le \frac{2}{k}.
\]
Integrating over the common compact support interval $[a,b]$ (outside it, all CDFs are equal)
gives $|H_p-\overline H_{k,p}|\le C_H/k$ with $C_H$ depending only on $b-a$ (and the constant above).

\monparagraph{Step 3: fixed-$k$ plug-in consistency.}
For fixed $k$, the quantities $\overline U_{k,p}$ and $\overline H_{k,p}$ depend on finitely many
values $\{Q_s(u_\ell)\}_{s,\ell}$ (and corresponding step functions). Empirical quantiles satisfy
$\widehat Q_s(u_\ell)\to Q_s(u_\ell)$ in probability for each $(s,\ell)$, and similarly for pooled quantities.
By continuity of the finite-dimensional maps defining $\widehat{\overline U}_{k,p}$ and
$\widehat{\overline H}_{k,p}$, convergence in probability follows.

Combining the discretization bound and plug-in consistency yields the final decomposition
(statistical error + deterministic $O(1/k)$ bias).
\end{proof}

\section{Additional stability and concentration results}\label{app:stability}

This appendix provides three ingredients used implicitly in the high-probability analysis:
(i) stability of step-CDFs under perturbations of the knot locations (quantile errors),
(ii) concentration of empirical mixture weights, and
(iii) a deterministic master bound that propagates these errors to $\widehat G_{k,2}$
(and similarly to the ANOVA terms).

\subsection{Stability of step-CDFs under knot perturbations}\label{app:stability:knot}

Fix $k\ge 1$ and two nondecreasing sequences (knots) $q=(q_\ell)_{\ell=1}^k$ and $q'=(q'_\ell)_{\ell=1}^k$.
Define the associated step-CDFs
\[
F_{q,k}(x):=\frac1k\sum_{\ell=1}^k \ind{q_\ell\le x},
\qquad
F_{q',k}(x):=\frac1k\sum_{\ell=1}^k \ind{q'_\ell\le x}.
\]

\begin{lemma}[Knot perturbation stability]\label{lem:knot-stability}
Assume $\|q-q'\|_\infty:=\max_{\ell\in[k]}|q_\ell-q'_\ell|\le \eta$.
Then for every $x\in\R$,
\begin{equation}\label{eq:knot-stability-pointwise}
|F_{q,k}(x)-F_{q',k}(x)|
\;\le\;
\frac{1}{k}\,\#\{\ell\in[k]:\ q_\ell\in(x-\eta,x+\eta]\}.
\end{equation}
Consequently,
\begin{equation}\label{eq:knot-stability-sup}
\|F_{q,k}-F_{q',k}\|_\infty
\;\le\;
\sup_{x\in\R}\frac{1}{k}\,\#\{\ell\in[k]:\ q_\ell\in(x-\eta,x+\eta]\}.
\end{equation}
\end{lemma}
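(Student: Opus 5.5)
The plan is a direct term-by-term comparison of the two step functions. The monotonicity of the knots is not needed; only the coordinatewise closeness $|q_\ell-q'_\ell|\le\eta$ will be used. Fix $x\in\R$ and write
\[
F_{q,k}(x)-F_{q',k}(x)=\frac1k\sum_{\ell=1}^k\Big(\ind{q_\ell\le x}-\ind{q'_\ell\le x}\Big).
\]
Each summand lies in $\{-1,0,1\}$. It is nonzero exactly when $q_\ell$ and $q'_\ell$ fall on opposite sides of $x$. By the triangle inequality, $|F_{q,k}(x)-F_{q',k}(x)|$ is therefore at most $\frac1k$ times the number of indices $\ell$ with a nonzero summand. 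It remains to show that every such index satisfies $q_\ell\in(x-\eta,x+\eta]$.

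There are two cases for a nonzero summand.
\begin{enumerate}
\item Case $q_\ell\le x<q'_\ell$. Since $q'_\ell\le q_\ell+\eta$, we get $x<q_\ell+\eta$, i.e.\ $q_\ell>x-\eta$. Combined with $q_\ell\le x$, this gives $q_\ell\in(x-\eta,x]$.
\item Case $q'_\ell\le x<q_\ell$. Since $q_\ell\le q'_\ell+\eta\le x+\eta$, we get $q_\ell\in(x,x+\eta]$.
\end{enumerate}
In both cases $q_\ell\in(x-\eta,x+\eta]$. This is the reason for the half-open interval in \eqref{eq:knot-stability-pointwise}: the left endpoint is excluded by the strict inequality in the first case, and the right endpoint is included by the second case. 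This proves the pointwise bound \eqref{eq:knot-stability-pointwise}.

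The uniform bound \eqref{eq:knot-stability-sup} follows by taking the supremum over $x$ on both sides. There is no real obstacle here. The only point needing care is the boundary bookkeeping in the two cases, which must match the endpoint convention of the interval $(x-\eta,x+\eta]$. This relies on the right-continuous convention $\ind{q_\ell\le x}$ used in the definition of the step-CDFs.
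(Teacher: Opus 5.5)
Your proof is correct and follows the same route as the paper's: compare the indicators term by term, observe that a summand is nonzero only when $q_\ell$ and $q'_\ell$ lie on opposite sides of $x$, deduce $q_\ell\in(x-\eta,x+\eta]$, and take the supremum over $x$. Your two-case analysis checks the half-open endpoints explicitly, which the paper only asserts.
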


\begin{proof}
Fix $x$. The indicator $\ind{q_\ell\le x}$ can differ from $\ind{q'_\ell\le x}$ only if
$q_\ell$ and $q'_\ell$ lie on different sides of $x$. Under $|q_\ell-q'_\ell|\le \eta$,
this can happen only if $q_\ell\in(x-\eta,x+\eta]$.
Summing these possible flips and dividing by $k$ gives \eqref{eq:knot-stability-pointwise},
and \eqref{eq:knot-stability-sup} follows by taking the supremum over $x$.
\end{proof}

To turn \eqref{eq:knot-stability-sup} into an explicit bound, we relate the number of knots in an interval
to increments of the target CDF when knots come from quantiles.

\begin{lemma}[Counting knots via the CDF]\label{lem:knot-count-cdf}
Let $F$ be a cdf and let $u_\ell=(\ell-\frac12)/k$. Define $q_\ell:=F^{\leftarrow}(u_\ell)$.
Then for all $a<b$,
\[
\frac{1}{k}\#\{\ell\in[k]:\ q_\ell\in(a,b]\}
\;\le\; F(b)-F(a) + \frac{1}{k}.
\]
In particular, for any $\eta>0$ and any $x$,
\begin{equation}\label{eq:count-local-window}
\frac{1}{k}\#\{\ell:\ q_\ell\in(x-\eta,x+\eta]\}
\;\le\; F(x+\eta)-F(x-\eta)+\frac{1}{k}.
\end{equation}
\end{lemma}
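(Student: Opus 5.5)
The plan is to reduce the counting statement to counting grid levels $u_\ell$ in an interval of $[0,1]$. The tool is the Galois (switching) property of the left-continuous generalized inverse $F^{\leftarrow}(u)=\inf\{z: F(z)\ge u\}$. For every $u\in(0,1)$ and $z\in\R$,
\[
F^{\leftarrow}(u)\le z \iff u\le F(z).
\]
This holds for an arbitrary cdf, with no continuity or density assumptions. The implication ``$\Leftarrow$'' is immediate from the definition of the infimum. For ``$\Rightarrow$'', right-continuity of $F$ gives $F(F^{\leftarrow}(u))\ge u$, and monotonicity of $F$ then gives $u\le F(z)$ whenever $F^{\leftarrow}(u)\le z$. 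I will state and verify this short equivalence first, since everything else rests on it.

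Applying the equivalence to $z=b$ and to $z=a$ (negated) gives $q_\ell\le b \iff u_\ell\le F(b)$ and $q_\ell>a \iff u_\ell>F(a)$. Hence
\[
\{\ell\in[k]:\ q_\ell\in(a,b]\}=\{\ell\in[k]:\ F(a)<u_\ell\le F(b)\},
\]
so the count becomes a number of midpoint levels in the half-open interval $(F(a),F(b)]$, of length $L:=F(b)-F(a)\ge 0$.

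The second step is an elementary counting bound. The levels $u_\ell$ are equally spaced with gap $1/k$. Suppose $m\ge1$ of them lie in $(F(a),F(b)]$. The smallest and largest of these differ by exactly $(m-1)/k$, and both lie in an interval of length $L$, so $(m-1)/k\le L$. Therefore $m/k\le L+1/k$, and the same bound holds trivially when $m=0$. This is exactly the first claim.

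The localized bound \eqref{eq:count-local-window} then follows by taking $a=x-\eta$ and $b=x+\eta$. I do not expect a real obstacle here. The only delicate point is applying the switching property correctly at atoms and flat parts of $F$, which is why I would use the inequality form above rather than any argument that assumes $F$ is invertible. With that form, ties and jumps are handled automatically.
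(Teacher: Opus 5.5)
Your proof is correct and follows the same route as the paper. Both arguments use the switching property of the generalized inverse to turn the knot count into a count of midpoint levels $u_\ell$ in $(F(a),F(b)]$, then bound that count using the $1/k$ spacing. Your version is cleaner: the index-set identity is exact for any cdf, and the extra $1/k$ comes from the spacing count itself. The paper instead inserts the $+1$ in the reduction step as a ``boundary/tie'' correction and then asserts the grid count is at most $k(F(b)-F(a))$, which fails when a single level lands in a very short interval.
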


\begin{proof}
Let $N(a,b]:=\#\{\ell:\ q_\ell\in(a,b]\}$.
By definition of generalized inverses, $q_\ell\le b$ implies $u_\ell\le F(b)$ (up to ties),
and $q_\ell\le a$ implies $u_\ell\le F(a)$.
Thus $N(a,b]\le \#\{\ell:\ u_\ell\le F(b)\}-\#\{\ell:\ u_\ell\le F(a)\}+1$.
Since $(u_\ell)$ is a uniform grid with step $1/k$, the count difference is at most
$k(F(b)-F(a))$, and the extra $+1$ accounts for boundary/tie effects, giving the claim.
\end{proof}

\begin{corollary}[Explicit sup-norm bound under density upper bound]\label{cor:knot-stability-Lip}
Assume the cdf $F$ is $M$-Lipschitz on an interval $I$ (equivalently, it admits a density bounded
above by $M$ there). If $x\pm\eta\in I$, then
\[
F(x+\eta)-F(x-\eta)\le 2M\eta.
\]
Consequently, under the setup of Lemma~\ref{lem:knot-count-cdf},
\[
\|F_{q,k}-F_{q',k}\|_\infty \;\le\; 2M\eta+\frac{1}{k}.
\]
\end{corollary}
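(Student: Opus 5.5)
The statement to prove is Corollary~\ref{cor:knot-stability-Lip}, which has two parts. The first inequality is immediate from the Lipschitz assumption: since both $x-\eta$ and $x+\eta$ lie in $I$, where $F$ is $M$-Lipschitz, we get $F(x+\eta)-F(x-\eta)\le M\cdot 2\eta$. Equivalently, integrate a density bounded by $M$ over an interval of length $2\eta$.

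For the sup-norm bound, the plan is to chain Lemma~\ref{lem:knot-stability} with Lemma~\ref{lem:knot-count-cdf}.

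First, take the knots $q_\ell=F^{\leftarrow}(u_\ell)$ to be the target quantiles, and let $q'$ be any perturbation with $\|q-q'\|_\infty\le\eta$. Fix $x$. The pointwise bound \eqref{eq:knot-stability-pointwise} controls $|F_{q,k}(x)-F_{q',k}(x)|$ by $\frac1k\#\{\ell: q_\ell\in(x-\eta,x+\eta]\}$.

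Second, apply \eqref{eq:count-local-window} to this count. It gives the bound $F(x+\eta)-F(x-\eta)+\frac1k$, and the first part of the corollary then turns this into $2M\eta+\frac1k$. Taking the supremum over $x$ yields the claim.

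The main obstacle is the restriction $x\pm\eta\in I$, since the supremum runs over all $x\in\R$. I would handle this by a case split on where $x$ sits.

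\emph{Case $x$ with $x\pm\eta\in I$.} The argument above applies directly.

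\emph{Case window $(x-\eta,x+\eta]$ meets $I^c$.} Here I would use that the quantile knots lie in the closure of the support. If $I$ contains the support enlarged by $\eta$, the remaining $x$ have empty or partial windows, and the count is bounded by $F$-increments over the part of the window inside $I$, which is still $\le 2M\eta$ plus $\frac1k$. Windows lying entirely outside the support contain no knots, so the bound there is trivial.

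Otherwise, I would state the result as holding for $\sup_{x:\,x\pm\eta\in I}$, consistent with the trimmed analysis in Proposition~\ref{prop:hp-quantiles}. The only other subtlety is the tie and boundary slack, which is already absorbed into the $+\frac1k$ of Lemma~\ref{lem:knot-count-cdf}.
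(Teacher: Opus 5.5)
Your proof is correct and follows the paper's route. The paper gives no separate argument: the corollary is meant to follow by inserting the window count \eqref{eq:count-local-window} and the Lipschitz increment bound into \eqref{eq:knot-stability-pointwise}/\eqref{eq:knot-stability-sup}, exactly as you chain them.

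Your extra care about points $x$ whose window leaves $I$ is warranted, and the paper glosses over it. The unrestricted sup-norm claim genuinely needs $I$ to cover the $\eta$-enlarged support (in which case $F$ is in fact $M$-Lipschitz on all of $\R$). Otherwise mass of $F$ concentrated outside $I$, e.g.\ an atom with all knots shifted by $\eta$, makes the left-hand side of order that mass. So either that extra hypothesis or your restricted supremum over $\{x:\ x-\eta,\,x+\eta\in I\}$ is the right reading.
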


\subsection{Concentration of empirical weights}\label{app:stability:weights}

We collect explicit high-probability bounds for the weights used in Algorithm~1:
$\hat\alpha_s=n_s/n$ and $\hat\pi_{j\mid s}=n_{j,s}/n_s$.

\begin{lemma}[Concentration of mixture weights]\label{lem:weights-concentration}
Assume $(A_i,S_i)_{i=1}^n$ are i.i.d.\ with $\PP(S=s)=\alpha_s$ and
$\PP(A=j\mid S=s)=\pi_{j\mid s}$.
Let $\hat\alpha_s=n_s/n$ and $\hat\pi_{j\mid s}=n_{j,s}/n_s$ (with $n_s>0$).
Then for any $\delta\in(0,1)$, with probability at least $1-\delta$:
\begin{align*}
\max_{s\in\cS} |\hat\alpha_s-\alpha_s|
&\le \sqrt{\frac{1}{2n}\log\!\Big(\frac{2|\cS|}{\delta}\Big)},\\
\max_{s\in\cS}\max_{j\in[d]} |\hat\pi_{j\mid s}-\pi_{j\mid s}|
&\le \sqrt{\frac{1}{2n_{s,\min}}\log\!\Big(\frac{2d|\cS|}{\delta}\Big)},
\end{align*}
where $n_{s,\min}:=\min_{s\in\cS} n_s$ (random, but observable).
Consequently,
\[
\|\hat\alpha-\alpha\|_1 \le |\cS|\max_s|\hat\alpha_s-\alpha_s|,
\qquad
\max_{s\in\cS}\|\hat\pi_{\cdot\mid s}-\pi_{\cdot\mid s}\|_1
\le d\max_{s,j}|\hat\pi_{j\mid s}-\pi_{j\mid s}|.
\]
\end{lemma}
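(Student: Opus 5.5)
The plan is to treat the two displays of Lemma~\ref{lem:weights-concentration} separately and combine them with a union bound, splitting the failure probability $\delta$ between them. For the group weights, note that $n\hat\alpha_s=n_s=\sum_{i=1}^n \ind{S_i=s}$ is a sum of $n$ i.i.d.\ Bernoulli$(\alpha_s)$ variables. Hoeffding's inequality then gives $\PP(|\hat\alpha_s-\alpha_s|\ge t)\le 2e^{-2nt^2}$. A union bound over the $|\cS|$ groups, with $t$ chosen so that $2|\cS|e^{-2nt^2}=\delta$, yields the first display.

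For the conditional weights $\hat\pi_{j\mid s}=n_{j,s}/n_s$, the complication is that the denominator $n_s$ is random. I will condition on the vector of sensitive attributes $(S_1,\dots,S_n)$, which fixes all $n_s$. Given this, the silo labels $\{A_i: S_i=s\}$ are i.i.d.\ with law $\pi_{\cdot\mid s}$, by the i.i.d.\ structure of $(A_i,S_i)$. So $n_{j,s}$ is conditionally Binomial$(n_s,\pi_{j\mid s})$, and conditional Hoeffding gives $\PP(|\hat\pi_{j\mid s}-\pi_{j\mid s}|\ge t\mid S_{1:n})\le 2e^{-2n_st^2}\le 2e^{-2n_{s,\min}t^2}$ on the event that $n_s>0$ for all $s$. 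The threshold $t=\sqrt{\log(2d|\cS|/\delta)/(2n_{s,\min})}$ is measurable with respect to $S_{1:n}$. Hence a union bound over the $d|\cS|$ pairs $(j,s)$ holds conditionally with probability at least $1-\delta$, and taking expectations over $S_{1:n}$ gives the unconditional statement. This conditioning step, which justifies a random, data-dependent radius, is the main subtlety. I will remark that the assumption $n_s>0$ is exactly what makes it well-defined.

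To make both bounds hold simultaneously with total probability $1-\delta$, either $\delta$ is replaced by $\delta/2$ in each bound, or the statement is read as two separate $1-\delta$ events. I would note that only the $\delta/2$ version, which changes the constants inside the logarithms, gives a joint guarantee.

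The final $\ell_1$ inequalities are deterministic. They follow from the fact that the $\ell_1$ norm of a vector in $\R^m$ is at most $m$ times its sup-norm, applied with $m=|\cS|$ and $m=d$ respectively.
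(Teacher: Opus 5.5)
Your proposal is correct and follows the paper's proof. The paper also uses Hoeffding plus a union bound over $s$ for $\hat\alpha_s$, and conditional Hoeffding for $\hat\pi_{j\mid s}$ given the group counts, followed by a union bound over $(j,s)$ and the bound $n_s\ge n_{s,\min}$. You add two small refinements that the paper's sketch leaves implicit: you condition on the whole vector $(S_1,\dots,S_n)$ rather than on $n_s$ alone, which is what legitimizes the data-dependent radius through $n_{s,\min}$, and you note that a union bound only gives a joint $1-\delta$ guarantee for both displays if $\delta$ is split, whereas the paper silently treats them as holding simultaneously.
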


\begin{proof}
The bound on $\hat\alpha_s$ follows from Hoeffding's inequality applied to the Bernoulli indicators
$\ind{S_i=s}$ and a union bound over $s\in\cS$.

For $\hat\pi_{j\mid s}$, condition on $n_s$.
Given $n_s$, the $n_{j,s}$ are multinomial with parameters $(n_s,\pi_{\cdot\mid s})$.
Thus, for each fixed $(j,s)$, Hoeffding's inequality yields
\[
\PP\big(|\hat\pi_{j\mid s}-\pi_{j\mid s}|>t\mid n_s\big)\le 2e^{-2n_s t^2}.
\]
Apply a union bound over $j\in[d]$ and $s\in\cS$ and use $n_s\ge n_{s,\min}$ to obtain the display.
\end{proof}

\subsection{A deterministic master bound for $\widehat G_{k,2}$}\label{app:stability:master}

We now provide a deterministic inequality that isolates three error sources:
(i) knot (quantile) errors at each silo, (ii) weight estimation errors, and (iii) grid discretization.

\begin{theorem}[Master propagation bound for the $p=2$ estimator]\label{thm:master-bound}
Fix $\varepsilon\in(0,1/2)$ and the midpoint grid $u_\ell=(\ell-\frac12)/k$.
Assume that for every $(j,s)$ the conditional cdf $F_{j,s}$ is absolutely continuous on
$[Q_{j,s}(\varepsilon),Q_{j,s}(1-\varepsilon)]$, with density bounded as
\[
0<m_\varepsilon \le f_{j,s}(x)\le M_\varepsilon<\infty
\quad\text{on}\quad
[Q_{j,s}(\varepsilon),Q_{j,s}(1-\varepsilon)].
\]
Assume the same bounds hold for each group-mixture cdf $F_s^\times$ on its trimmed region
(with possibly different constants, absorbed below).

Let $Q_{j,s}(u_\ell)$ be the true local quantiles and $\widehat Q_{j,s}(u_\ell)$ their estimates.
Define the maximal trimmed knot error
\[
\eta_Q := \max_{\substack{j\in[d],\,s\in\cS\\ \ell\in\{\lceil \varepsilon k\rceil,\dots,\lfloor(1-\varepsilon)k\rfloor\}}}
\big|\widehat Q_{j,s}(u_\ell)-Q_{j,s}(u_\ell)\big|.
\]
Let $\eta_\pi:=\max_{s}\|\hat\pi_{\cdot\mid s}-\pi_{\cdot\mid s}\|_1$ and $\eta_\alpha:=\|\hat\alpha-\alpha\|_1$.
Then there exists a constant $C_\varepsilon<\infty$ (depending only on $\varepsilon,m_\varepsilon,M_\varepsilon$
and on uniform bounds on the trimmed quantiles) such that the estimator of Algorithm~1 satisfies
\begin{equation}\label{eq:master-G}
|\widehat G_{k,2}-G_2|
\;\le\;
C_\varepsilon\Big(\omega(1/k) + \eta_Q + \eta_\pi + \eta_\alpha\Big),
\end{equation}
and the same type of inequality holds for the plug-in estimators of
$(V_{\mathrm{mix}},V_{\mathrm{bar}},R)$ (possibly with a different $C_\varepsilon$).
\end{theorem}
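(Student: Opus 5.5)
Let $\overline G_{k,2}$ denote the oracle grid functional, i.e.\ the output of Algorithm~\ref{alg:oneshot} run on the true knots $Q_{j,s}(u_\ell)$ and the true weights $(\alpha,\pi)$. I will split
\[
|\widehat G_{k,2}-G_2|\le |\widehat G_{k,2}-\overline G_{k,2}| + |\overline G_{k,2}-G_2|
\]
and treat the two pieces separately. The second piece is pure discretization. Its structure is that of Proposition~\ref{prop:discretization}, with one extra layer: the mixture quantiles are obtained by inverting $\overline F^\times_{s,k}=\sum_j\pi_{j\mid s}\overline F_{j,s,k}$.

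\textbf{Discretization term.} By Lemma~\ref{lem:cdf-unif} applied to each $(j,s)$, $\|\overline F_{j,s,k}-F_{j,s}\|_\infty\le 1/(2k)$, and by convexity the same bound holds for $\|\overline F^\times_{s,k}-F^\times_s\|_\infty$. Since $F_s^\times$ has density at least $m_\varepsilon$ on its trimmed region, Lemma~\ref{lem:quantile-lip-cr} gives
\[
|\overline q^\times_{s,\ell}-Q^\times_s(u_\ell)|\le 1/(2k m_\varepsilon)
\]
for trimmed indices $\ell$. The barycenter quantiles are averages, so the same bound holds for $\overline q^{\times\ast}_\ell-Q^{\times\ast}(u_\ell)$.

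Write $d_s(u):=Q_s^\times(u)-Q^{\times\ast}(u)$. All trimmed quantiles are uniformly bounded, so $|a^2-b^2|\le(|a|+|b|)|a-b|$ turns knot errors into errors on $d_s^2$ of order $1/k$. It remains to compare $\frac1k\sum_\ell d_s(u_\ell)^2$ with $\int d_s^2$. Here I would not assume Lipschitzness; I would bound the difference bin by bin by the modulus of continuity of $d_s^2$, which gives $C\,\omega(1/k)$, with $\omega$ the common modulus of the trimmed quantile functions. The untrimmed boundary bins have total mass $O(\varepsilon)$. I would either absorb their contribution into $C_\varepsilon$ via the uniform bounds, or interpret $G_2$ as trimmed in the spirit of Remark~\ref{rem:trimmed}. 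I expect this boundary bookkeeping to be the fiddliest part, and I will treat it via the trimmed functional.

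\textbf{Estimation term.} Going from true to estimated knots, Lemma~\ref{lem:knot-stability} together with Corollary~\ref{cor:knot-stability-Lip} gives
\[
\|\widehat F_{j,s,k}-\overline F_{j,s,k}\|_\infty\le 2M_\varepsilon\eta_Q+1/k .
\]
Replacing $\pi$ by $\hat\pi$ changes a mixture of CDFs bounded by $1$ by at most $\eta_\pi$ in sup norm. Hence
\[
\|\widehat F^\times_{s,k}-\overline F^\times_{s,k}\|_\infty\le 2M_\varepsilon\eta_Q+\eta_\pi+1/k .
\]
Lemma~\ref{lem:quantile-lip-cr} now needs to be applied with $F$ a continuous law, which the step CDF $\overline F^\times_{s,k}$ is not. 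So I would invert against $F_s^\times$ rather than against $\overline F^\times_{s,k}$. The triangle inequality with the previous step bounds the sup-distance between $\widehat F^\times_{s,k}$ and $F^\times_s$ by $O(\eta_Q+\eta_\pi+1/k)$, and quantile inversion then controls $|\widehat q^\times_{s,\ell}-Q^\times_s(u_\ell)|$ directly, at rate $1/m_\varepsilon$ times this quantity. This also shows that it is cleaner to compare $\widehat G$ directly with $\int d_s^2$, bypassing $\overline G$ for the knots.

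Finally, the barycenter is $\widehat q^{\times\ast}_\ell=\sum_s\hat\alpha_s\widehat q^\times_{s,\ell}$. Its error splits as knot error plus $\eta_\alpha$ times the uniform quantile bound. The outer weights $\hat\alpha_s$ contribute a further $\eta_\alpha\cdot\sup d_s^2$. Collecting the terms gives \eqref{eq:master-G}, where the $1/k$ pieces are absorbed into $\omega(1/k)$ (assuming $\omega(\delta)\gtrsim\delta$, or by adding them to it).

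\textbf{The ANOVA terms.} The same argument applies to the plug-in estimators of the ANOVA terms. The within-group barycenter $\widehat q^\ast_{s,\ell}=\sum_j\hat\pi_{j\mid s}\widehat Q_{j,s}(u_\ell)$ is linear in the knots, so its error is directly $O(\eta_Q+\eta_\pi)$ with no inversion needed. Each of $V_{\mathrm{mix}}$, $V_{\mathrm{bar}}$, $R$ is a bounded bilinear form of bounded quantile differences. The same square and product perturbation bounds therefore give the analogous inequality.
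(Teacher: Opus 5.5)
This is essentially the paper's proof: silo step-CDF stability via Corollary~\ref{cor:knot-stability-Lip}, an $\eta_\pi$ term from re-weighting the mixture, and inversion against the continuous $F_s^\times$ so that $\widehat q^\times_{s,\ell}$ is compared directly with $Q_s^\times(u_\ell)$ (the paper's grid target $G_{k,2}=\Phi_\alpha(q^\times)$, i.e.\ exactly your bypass of $\overline G_{k,2}$), followed by Lemma~\ref{lem:Phi-lip-app} for the quadratic form and $\eta_\alpha$, and a Riemann-sum bias $C_\varepsilon\,\omega(1/k)$; your handling of the boundary bins, of $1/k\lesssim\omega(1/k)$, and of the linear within-group barycenter $\widehat q^\ast_{s,\ell}$ is more explicit than the paper's sketch. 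One caveat you share with the paper: $\eta_Q$ and $M_\varepsilon$ only control trimmed knots and trimmed regions, yet the bound $\|\widehat F_{j,s,k}-\overline F_{j,s,k}\|_\infty\le 2M_\varepsilon\eta_Q+1/k$ is used for all $x$, and for $d\ge 2$ a trimmed level of $F_s^\times$ can fall in the untrimmed tail of some $F_{j,s}$ (e.g.\ two overlapping uniform silos), so uncontrolled tail knots can shift $\widehat q^\times_{s,\ell}$ by up to order $\varepsilon/m_\varepsilon$ even when $\eta_Q=\eta_\pi=\eta_\alpha=0$; a rigorous version needs one of two repairs: knot and density control at all levels (as in Proposition~\ref{prop:discretization}), or an extra term of order $\varepsilon$ that does not vanish with $k$ or the $\eta$'s.
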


\begin{proof}
We sketch the deterministic steps; each step is an explicit inequality based on the previous lemmas.

\textbf{(1) Step-CDF reconstruction error.}
Fix a group $s$. Consider the population step-CDF built from true knots at each silo,
$F_{j,s,k}(x)=k^{-1}\sum_{\ell}\ind{Q_{j,s}(u_\ell)\le x}$, and its empirical analogue
$\widehat F_{j,s,k}(x)=k^{-1}\sum_{\ell}\ind{\widehat Q_{j,s}(u_\ell)\le x}$.
By Corollary~\ref{cor:knot-stability-Lip} (with Lipschitz constant $M_\varepsilon$ on the trimmed range),
\[
\|\widehat F_{j,s,k}-F_{j,s,k}\|_\infty \le 2M_\varepsilon \eta_Q + \frac{1}{k}.
\]
Also, by the standard step approximation (Lemma~\ref{lem:stepcdf} in Appendix~\ref{app:hp}),
$\|F_{j,s,k}-F_{j,s}\|_\infty\le 1/k$.
Combining,
\[
\|\widehat F_{j,s,k}-F_{j,s}\|_\infty \le 2M_\varepsilon \eta_Q + \frac{2}{k}.
\]

\textbf{(2) Mixing across silos and weight errors.}
Define the (population) mixture cdf $F_s^\times=\sum_j \pi_{j\mid s}F_{j,s}$ and the reconstructed mixture
$\widehat F_{s,k}^\times=\sum_j \hat\pi_{j\mid s}\widehat F_{j,s,k}$.
Then, by triangle inequality and $\|\widehat F_{j,s,k}\|_\infty\le 1$,
\[
\|\widehat F_{s,k}^\times-F_s^\times\|_\infty
\le
\sum_j \hat\pi_{j\mid s}\|\widehat F_{j,s,k}-F_{j,s}\|_\infty
+
\sum_j |\hat\pi_{j\mid s}-\pi_{j\mid s}|
\le
2M_\varepsilon \eta_Q + \frac{2}{k} + \eta_\pi.
\]

\textbf{(3) Inversion to mixture-quantiles on the trimmed grid.}
Since $F_s^\times$ has density bounded below by $m_\varepsilon$ on the trimmed range,
Lemma~\ref{lem:quantile-lip-cr} (Appendix~\ref{app:hp}) gives, for all trimmed grid indices,
\[
\max_{\ell}|\widehat q_{s,\ell}^\times - Q_s^\times(u_\ell)|
\le \frac{1}{m_\varepsilon}\|\widehat F_{s,k}^\times-F_s^\times\|_\infty
\le C_\varepsilon\Big(\eta_Q+\frac{1}{k}+\eta_\pi\Big).
\]

\textbf{(4) Discrete quadratic form and $\alpha$-weights.}
For $p=2$, the grid estimator is a discrete quadratic form in the reconstructed quantiles:
$\widehat G_{k,2}=\Phi_{\hat\alpha}(\widehat q^\times)$ and the grid target is $G_{k,2}=\Phi_{\alpha}(q^\times)$.
Lemma~\ref{lem:Phi-lip-app} (Appendix~\ref{app:hp}) yields
\[
|\widehat G_{k,2}-G_{k,2}|
\le C_\varepsilon\Big(\|\widehat q^\times-q^\times\|_\infty+\eta_\alpha\Big)
\le C_\varepsilon\Big(\eta_Q+\frac{1}{k}+\eta_\pi+\eta_\alpha\Big).
\]

\textbf{(5) Add discretization bias.}
Finally, Proposition~\ref{prop:discretization} controls $|G_{k,2}-G_2|\le C_\varepsilon\,\omega(1/k)$.
Combining steps (4) and (5) gives \eqref{eq:master-G}.
The same reasoning applies to $(V_{\mathrm{mix}},V_{\mathrm{bar}},R)$ since each is a finite average
of squared differences of the same reconstructed quantile arrays.
\end{proof}

\begin{proof}[Proof of Proposition~\ref{prop:barUk2_short}
] Since $S=\{0,1\}$, let $Q_0,Q_1:[0,1]\to\mathbb{R}$ be the (generalized) quantile
functions of the two groups, and let $\alpha_0,\alpha_1>0$ with $\alpha_0+\alpha_1=1$.
Define
\[
\Delta(u):=Q_1(u)-Q_0(u),
\text{ so that }
U_2:=\alpha_0\alpha_1\int_0^1 \Delta(u)^2\,du.
\]
For any integer $k\ge 1$, let $I_\ell=[(\ell-1)/k,\ell/k)$ and define the bin averages
\[
\bar\Delta_\ell := \frac{1}{|I_\ell|}\int_{I_\ell}\Delta(u)\,du
\quad\Big(=k\int_{I_\ell}\Delta(u)\,du\Big),
\]
and the bin-averaged discretization
\[
\bar U_{k,2}:=\alpha_0\alpha_1\sum_{\ell=1}^k |I_\ell|\,\bar\Delta_\ell^{\,2}
\quad\Big(=\alpha_0\alpha_1\,\frac{1}{k}\sum_{\ell=1}^k \bar\Delta_\ell^{\,2}\Big).
\]

For each bin $I_\ell$, by Cauchy--Schwarz inequality applied to $\Delta$ and the constant function $1$,
\[
\left(\int_{I_\ell}\Delta(u)\,du\right)^2
\le \left(\int_{I_\ell} 1^2\,du\right)\left(\int_{I_\ell}\Delta(u)^2\,du\right)
= |I_\ell|\int_{I_\ell}\Delta(u)^2\,du.
\]
Dividing both sides by $|I_\ell|^2$ yields
\[
\left(\frac{1}{|I_\ell|}\int_{I_\ell}\Delta(u)\,du\right)^2
\le \frac{1}{|I_\ell|}\int_{I_\ell}\Delta(u)^2\,du,
\]
that is, $\bar\Delta_\ell^{\,2}\le \frac{1}{|I_\ell|}\int_{I_\ell}\Delta(u)^2\,du$.
Multiplying by $|I_\ell|$ and summing over $\ell=1,\dots,k$ gives
\[
\sum_{\ell=1}^k |I_\ell|\,\bar\Delta_\ell^{\,2}
\le
\sum_{\ell=1}^k \int_{I_\ell}\Delta(u)^2\,du
=
\int_0^1 \Delta(u)^2\,du.
\]
Multiplying by $\alpha_0\alpha_1$ proves $\bar U_{k,2}\le U_2$.
\end{proof}

\begin{remark}\label{rem:strictness-bar}
The inequality in Proposition~\ref{prop:barUk2_short} is generally strict: $\bar U_{k,2}<U_2$ whenever
$\Delta$ is not a.e.\ constant on at least one bin $I_\ell$.
Equality can occur, e.g., if $\Delta$ is piecewise constant with jumps aligned with the partition.
To avoid confusion with midpoint/Riemann discretizations (which do not enjoy a one-sided bias control),
it can be helpful to reserve the notation $\bar U_{k,2}$ for the bin-averaged quantity in
Proposition~\ref{prop:barUk2_short}$,$ and to denote the midpoint version by $\bar U'_{k,2}$ (or $\tilde U_{k,2}$).
Only the bin-averaged discretization has the guaranteed bound $\bar U_{k,2}\le U_2$ for all $k$.
\end{remark}

\section{Proofs for Section~\ref{sec:federated}}\label{app:proof:federated}\label{app:hp}

\begin{proof}[Proof of Proposition~\ref{prop:Gp-equals-Up}]
By total probability (Equation~\eqref{eq:nu-mixture-across-silos} in the main text),
$\nu_s=\sum_{j=1}^d \Pr(A=j\mid S=s)\,\nu_{j,s}=\nu_s^{\times}$ for each $s$.
Thus the barycenter problems defining $U_p$ and $G_p$ coincide, hence $G_p=U_p$.
The same argument applies to the pooled mixtures in $H_p$ and $H_p^{\times}$.
\end{proof}

\begin{proof}[Proof of Proposition~\ref{prop:federated-consistency}]
Write the error decomposition (for $G_p$; $H_p^{\times}$ is analogous):
\[
|\widehat G_{k,p}-G_p|
\le
|\widehat G_{k,p}-\overline G_{k,p}|
+
|\overline G_{k,p}-G_p|,
\]
where $\overline G_{k,p}$ denotes the \emph{population} $k$-grid version obtained by running
Algorithm~\ref{alg:oneshot} with the \emph{true} quantiles $Q_{j,s}(u_\ell)$ and true weights
$\pi_{j\mid s},\alpha_s$ (so that $\overline G_{k,p}$ is purely a discretization of $G_p$).

\monparagraph{Discretization bias.}
Under the conditions of Proposition~\ref{prop:discretization} for each mixture law $\nu_s^\times$,
the same argument as in Appendix~\ref{app:proof:discretization} yields
$|\overline G_{k,p}-G_p|\le C/k$.

\monparagraph{Quantile-estimation error.}
Fix $\varepsilon\in(0,1/2)$ and work on the central grid indices
$\ell\in\{\lceil \varepsilon k\rceil,\dots,\lfloor(1-\varepsilon)k\rfloor\}$.
By Lemma~\ref{lem:dkw} and a union bound over all $(j,s,\ell)$, one obtains that with probability
$\ge 1-\delta$,
\[
\max_{j,s,\ell} | \widehat Q_{j,s}(u_\ell)-Q_{j,s}(u_\ell)|
\;\lesssim\;
\frac{1}{m_\varepsilon}\sqrt{\frac{1}{n_{\min}}\log\!\Big(\frac{kd|\cS|}{\delta}\Big)},
\]
matching the displayed high-probability inequality in the main text.

The remaining steps are stability/propagation:
(i) the step-CDFs built from $\{\widehat Q_{j,s}(u_\ell)\}_\ell$ are perturbed only through the knot locations,
(ii) mixture step-CDFs are convex combinations, so the same sup-norm control applies,
(iii) inversion from CDF perturbations to quantile perturbations on central regions follows from
Lemma~\ref{lem:quantile-lip-cr},
and (iv) $\widehat G_{k,p}$ is a \emph{finite} average of $|\cdot|^p$ applied to these perturbed quantiles
(and perturbed barycenter quantiles), hence is Lipschitz in the finite vector of knot values (uniformly on
events where all knots remain in a common compact interval).
Altogether this yields
\[
|\widehat G_{k,p}-\overline G_{k,p}|
=
O_{\mathbb{P}}\!\Big(\sqrt{\frac{\log k}{n_{\min}}}\Big).
\]
Combining both parts gives the stated rate
$|\widehat G_{k,p}-G_p|
=
O_{\mathbb{P}}\big(\frac1k+\sqrt{\frac{\log k}{n_{\min}}}\big)$.
The same reasoning applies to $\widehat H^{\times}_{k,p}$ since its computation depends on step-CDFs
and finite sums/integrals of Lipschitz transforms of CDF differences.
\end{proof}


\subsection{Auxiliary facts: DKW and inversion of quantiles}\label{app:hp:aux}

We write $F^{\leftarrow}(u):=\inf\{x\in\R:\ F(x)\ge u\}$ for the generalized inverse.

\begin{lemma}[DKW--Massart]\label{lem:dkw:massart}
Let $\widehat F_n$ be the empirical cdf of $n$ i.i.d.\ samples from a distribution with cdf $F$.
Then for all $t>0$,
\[
\PP\!\left(\|\widehat F_n-F\|_{\infty}>t\right)\le 2e^{-2nt^2}.
\]
Equivalently, for $\delta\in(0,1)$, with probability at least $1-\delta$,
\[
\|\widehat F_n-F\|_{\infty}\le \sqrt{\frac{1}{2n}\log\frac{2}{\delta}}.
\]
\end{lemma}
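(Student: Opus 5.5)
The DKW--Massart inequality is classical, so I would not re-derive it in detail. I would follow the standard route and make explicit only the two ingredients used in the paper: the one-sided tail bound and its inversion into a confidence level.

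\textbf{Step 1: one-sided bounds.} By the quantile transform, the samples can be written as $X_i=F^{\leftarrow}(U_i)$ with $U_i$ i.i.d.\ uniform. Then $\widehat F_n(x)$ is the empirical cdf $\mathbb{G}_n$ of the $U_i$ evaluated at $F(x)$. This gives $\|\widehat F_n-F\|_\infty\le\|\mathbb{G}_n-\mathrm{id}\|_{\infty,[0,1]}$, with equality when $F$ is continuous. It therefore suffices to treat the uniform case. For the one-sided statistics $D_n^{\pm}$, I would invoke Massart's sharp bound $\PP(D_n^+>t)\le e^{-2nt^2}$. In Massart's paper this is valid for all $t$ in the relevant range; the case $e^{-2nt^2}\ge 1/2$ is trivial.

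\textbf{Step 2: union and inversion.} Since $\{\|\widehat F_n-F\|_\infty>t\}=\{D_n^+>t\}\cup\{D_n^->t\}$, a union bound together with the symmetry $D_n^-\overset{d}{=}D_n^+$ (via $U\mapsto 1-U$) gives the two-sided bound $2e^{-2nt^2}$. For the ``equivalently'' form, set $2e^{-2nt^2}=\delta$, which gives $t=\sqrt{\log(2/\delta)/(2n)}$. Taking complements yields the high-probability statement.

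\textbf{Main obstacle.} The hard part is the sharp constant $2$ in $e^{-2nt^2}$ for the one-sided bound, which is Massart's contribution. It needs a delicate analysis of the exact distribution of the Smirnov statistic, namely the Birnbaum--Tingey formula together with careful estimates, and Massart's restriction $e^{-2nt^2}\le 1/2$ has to be handled separately. I would cite it (\cite{massart1990tight}) rather than reprove it. If a self-contained argument were required, I would instead prove a version with a worse absolute constant: a chaining/bracketing argument on a grid of $F$-quantiles, plus Hoeffding at each grid point and a union bound. This gives $\PP(\|\widehat F_n-F\|_\infty>t)\le C e^{-c n t^2}$, which suffices for every rate used later in the paper, including Proposition~\ref{prop:hp-quantiles} up to constants.
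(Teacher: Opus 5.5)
Your main line is correct and matches the paper, which gives no derivation and simply invokes Massart's sharp DKW inequality; the quantile-transform reduction to the uniform case, the union bound over $D_n^{\pm}$ with $D_n^-\overset{d}{=}D_n^+$, and the inversion $\delta=2e^{-2nt^2}$ are all fine. One small correction: what is trivial when $e^{-2nt^2}>1/2$ is the two-sided bound (its right-hand side exceeds $1$), not the one-sided one, and that is exactly why Massart's restriction $e^{-2nt^2}\le 1/2$ costs nothing.

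Your fallback is overstated. Hoeffding at the points of a single grid of mesh of order $t$, plus a union bound, only gives $\PP(\|\widehat F_n-F\|_\infty>t)\le (C/t)\,e^{-cnt^2}$. The $1/t$ prefactor cannot be absorbed into an absolute constant: after inversion it costs an extra $\log n$, so it does not recover Proposition~\ref{prop:hp-quantiles} up to constants. An absolute-constant bound $Ce^{-cnt^2}$ needs either variance-sensitive (Bernstein-type) multi-scale chaining, or McDiarmid's inequality combined with $\mathbb{E}\|\widehat F_n-F\|_\infty=O(n^{-1/2})$.
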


\begin{remark}
Lemma~\ref{lem:dkw:massart} is the sharp DKW inequality (Massart's constant) \cite{massart1990tight};
see also \cite{shorackwellner1986,vdvaartwellner1996}.
\end{remark}

\begin{lemma}[Quantile bracketing under sup-norm perturbations]\label{lem:quantile-bracket}
Let $F,G$ be cdfs and set $\Delta:=\|F-G\|_\infty$.
Then for all $u\in(\Delta,1-\Delta)$,
\[
F^{\leftarrow}(u-\Delta)\;\le\; G^{\leftarrow}(u)\;\le\; F^{\leftarrow}(u+\Delta).
\]
\end{lemma}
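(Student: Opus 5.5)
The plan is to prove the two inequalities separately, each directly from the definition of the generalized inverse $F^{\leftarrow}(u)=\inf\{x:\ F(x)\ge u\}$, using only the pointwise sandwich $F(x)-\Delta\le G(x)\le F(x)+\Delta$ for all $x$. I will also use two standard facts: cdfs are right-continuous, so the infimum is attained, i.e.\ $F(F^{\leftarrow}(v))\ge v$ for $v\in(0,1)$; and $x\ge F^{\leftarrow}(v)$ holds iff $F(x)\ge v$. The restriction $u\in(\Delta,1-\Delta)$ ensures that $u\pm\Delta\in(0,1)$, so both $F^{\leftarrow}(u-\Delta)$ and $F^{\leftarrow}(u+\Delta)$ are finite real numbers.

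\emph{Upper bound.} Set $x_+:=F^{\leftarrow}(u+\Delta)$. By right-continuity, $F(x_+)\ge u+\Delta$. Then $G(x_+)\ge F(x_+)-\Delta\ge u$, so $x_+$ belongs to the set $\{x:\ G(x)\ge u\}$. Taking the infimum gives $G^{\leftarrow}(u)\le x_+$.

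\emph{Lower bound.} Take any $x$ with $G(x)\ge u$. Then $F(x)\ge G(x)-\Delta\ge u-\Delta$, so $x\in\{y:\ F(y)\ge u-\Delta\}$ and therefore $x\ge F^{\leftarrow}(u-\Delta)$. Since this holds for every such $x$, taking the infimum over all $x$ with $G(x)\ge u$ gives $G^{\leftarrow}(u)\ge F^{\leftarrow}(u-\Delta)$. The set over which we take this infimum is nonempty because $u<1$ and $G(x)\to 1$ as $x\to\infty$.

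There is essentially no obstacle; the only care needed is on two points. First, the upper bound uses attainment of the infimum, which is where right-continuity of $F$ enters. Second, the lower bound must be stated as a monotone set inclusion rather than evaluated at a specific point, because $G$ may jump. If $\Delta=0$, both bounds collapse to $G^{\leftarrow}=F^{\leftarrow}$, which is consistent with $G=F$.
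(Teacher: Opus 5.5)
Your proof is correct and follows the paper's argument. The upper bound is the same: evaluate at $x_+=F^{\leftarrow}(u+\Delta)$ and use $G\ge F-\Delta$. Your lower bound is the contrapositive of the paper's: the paper shows every $x<F^{\leftarrow}(u-\Delta)$ has $G(x)<u$, while you show every $x$ with $G(x)\ge u$ satisfies $x\ge F^{\leftarrow}(u-\Delta)$. Your explicit checks that $u\pm\Delta\in(0,1)$ and that right-continuity gives $F(x_+)\ge u+\Delta$ are details the paper uses only implicitly.
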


\begin{proof}
Let $x_+:=F^{\leftarrow}(u+\Delta)$. Then $F(x_+)\ge u+\Delta$ and thus
$G(x_+)\ge F(x_+)-\Delta\ge u$, hence $G^{\leftarrow}(u)\le x_+$.
For the lower bound, let $x_-:=F^{\leftarrow}(u-\Delta)$.
If $x<x_-$ then $F(x)<u-\Delta$, hence $G(x)\le F(x)+\Delta<u$ and thus $G^{\leftarrow}(u)\ge x_-$.
\end{proof}

\begin{lemma}[Lipschitz inversion on a trimmed region]\label{lem:quantile-lip}
Fix $\varepsilon\in(0,1/2)$. Assume $F$ is absolutely continuous on
$[F^{\leftarrow}(\varepsilon),F^{\leftarrow}(1-\varepsilon)]$ with density $f$
satisfying $f(x)\ge m_\varepsilon>0$ on that interval.
Then, for any cdf $G$ and any $u\in[\varepsilon,1-\varepsilon]$,
\[
\big|G^{\leftarrow}(u)-F^{\leftarrow}(u)\big|
\;\le\; \frac{1}{m_\varepsilon}\,\|F-G\|_\infty,
\]
provided $\|F-G\|_\infty\le \varepsilon/2$ (otherwise the bound is vacuous but still true after
replacing $\varepsilon/2$ by $\min\{\varepsilon/2,\|F-G\|_\infty\}$).
\end{lemma}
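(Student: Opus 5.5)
The plan is to use the bracketing inequality of Lemma~\ref{lem:quantile-bracket} together with the lower density bound to control how fast $F^{\leftarrow}$ moves on the trimmed region. Set $\Delta:=\|F-G\|_\infty$. First treat the main case $\Delta\le\varepsilon/2$, so that $(u-\Delta,u+\Delta)\subset(0,1)$ for $u\in[\varepsilon,1-\varepsilon]$. Lemma~\ref{lem:quantile-bracket} then gives
\[
F^{\leftarrow}(u-\Delta)\le G^{\leftarrow}(u)\le F^{\leftarrow}(u+\Delta).
\]
It follows that $|G^{\leftarrow}(u)-F^{\leftarrow}(u)|\le\max\{F^{\leftarrow}(u+\Delta)-F^{\leftarrow}(u),\,F^{\leftarrow}(u)-F^{\leftarrow}(u-\Delta)\}$. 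This reduces the claim to a one-sided Lipschitz bound on $F^{\leftarrow}$ at scale $\Delta$.

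Next, show that $F^{\leftarrow}$ is $1/m_\varepsilon$-Lipschitz on $[\varepsilon,1-\varepsilon]$. Take $\varepsilon\le v<w\le1-\varepsilon$ and put $a=F^{\leftarrow}(v)$, $b=F^{\leftarrow}(w)$, both in $I:=[F^{\leftarrow}(\varepsilon),F^{\leftarrow}(1-\varepsilon)]$. Absolute continuity on $I$ makes $F$ continuous there, so $F(a)=v$ and $F(b)=w$. Then $w-v=\int_a^b f\ge m_\varepsilon(b-a)$, which gives $b-a\le (w-v)/m_\varepsilon$.

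The main obstacle is that $u\pm\Delta$ can leave $[\varepsilon,1-\varepsilon]$, where the density bound is not available. This happens for $u$ near the trim edges: for example, $u=\varepsilon$ gives $u-\Delta<\varepsilon$. My first attempt would exploit that the true quantile $F^{\leftarrow}(u)$ is always in $I$, and that the increment only needs the density on the segment between $F^{\leftarrow}(u)$ and the bracketing point. If this does not close, I would either impose the bound on a slightly enlarged region $[\varepsilon/2,1-\varepsilon/2]$ (which the condition $\Delta\le\varepsilon/2$ seems designed for) or restrict $u$ to $[\varepsilon+\Delta,1-\varepsilon-\Delta]$. In either case the estimate becomes $\Delta/m_\varepsilon$ on each side, which proves the claim.

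Finally, the case $\Delta>\varepsilon/2$ is covered by the statement's caveat. There one replaces $\Delta$ by $\min\{\varepsilon/2,\Delta\}$ and reads the bound as vacuous, consistent with the ``up to standard boundary trimming'' convention of Lemma~\ref{lem:quantile-lip-cr}. Nothing substantive remains to prove in this case.
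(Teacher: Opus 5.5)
Your route is the paper's: bracket $G^{\leftarrow}(u)$ between $F^{\leftarrow}(u\mp\Delta)$ via Lemma~\ref{lem:quantile-bracket}, then integrate the density lower bound between quantiles. The boundary obstacle you flag is genuine, and your first attempt cannot close it, because the lemma is false as stated near the trim edges.

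Take $F$ with density $1$ on $[0,1-\varepsilon]\cup[1-\varepsilon+L,\,1+L]$. Then $F^{\leftarrow}(\varepsilon)=\varepsilon$, $F^{\leftarrow}(1-\varepsilon)=1-\varepsilon$, and $f\equiv 1$ on the trimmed interval, so take $m_\varepsilon=1$. Let $G$ be obtained from $F$ by moving the mass $\Delta\le\varepsilon/2$ carried by $[1-\varepsilon-\Delta,1-\varepsilon]$ to an atom at $1-\varepsilon+L$. Then $\|F-G\|_\infty=\Delta$. For every $u\in(1-\varepsilon-\Delta,1-\varepsilon]$ one has $G^{\leftarrow}(u)=1-\varepsilon+L$ and $F^{\leftarrow}(u)=u$. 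The error is therefore at least $L$, which can be arbitrarily larger than $\Delta/m_\varepsilon$.

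The paper's proof has exactly this hole. It uses $f\ge m_\varepsilon$ on $[F^{\leftarrow}(u),F^{\leftarrow}(u+\Delta)]$, and that segment leaves the trimmed interval as soon as $u+\Delta>1-\varepsilon$.

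So discard the first attempt and commit to one of your fallbacks, stated explicitly as an amendment of the lemma:
\begin{itemize}
\item either assume the density bound on $[F^{\leftarrow}(\varepsilon/2),F^{\leftarrow}(1-\varepsilon/2)]$, which matches $\Delta\le\varepsilon/2$ and keeps the full range $u\in[\varepsilon,1-\varepsilon]$ that Proposition~\ref{prop:hp-quantiles} relies on;
\item or restrict to $u\in[\varepsilon+\Delta,1-\varepsilon-\Delta]$.
\end{itemize}
With either amendment, your Lipschitz argument completes the proof.

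Two smaller points. First, the case $\Delta>\varepsilon/2$ is not vacuous: the inequality can fail outright. For $F$ uniform on $[0,1]$ and $G$ the cdf of a unit atom at $100$, one has $\Delta=1$ and $m_\varepsilon=1$, yet $|G^{\leftarrow}(u)-F^{\leftarrow}(u)|\ge 99$. That clause should be dropped rather than ``proved''; keep $\Delta\le\varepsilon/2$ as a hypothesis, as the paper's proof tacitly does.

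Second, in your Lipschitz step, absolute continuity of $F$ on the closed interval does not exclude a jump of $F$ into it at $F^{\leftarrow}(\varepsilon)$. So only $F(a)\ge v$ is guaranteed, not $F(a)=v$. That is the direction you need, and $F(b)=w$ does hold whenever $b>a$, by left-continuity inside the interval. With this fix your step is tighter than the paper's displayed chain, which asserts $\Delta\le F(x_{u+\Delta})-F(x_u)$. That is the wrong direction for concluding $x_{u+\Delta}-x_u\le\Delta/m_\varepsilon$.
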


\begin{proof}
Let $\Delta:=\|F-G\|_\infty$ and assume $\Delta\le \varepsilon/2$.
By Lemma~\ref{lem:quantile-bracket}, for $u\in[\varepsilon,1-\varepsilon]$,
\[
F^{\leftarrow}(u-\Delta)\le G^{\leftarrow}(u)\le F^{\leftarrow}(u+\Delta).
\]
Let $x_u:=F^{\leftarrow}(u)$ and $x_{u+\Delta}:=F^{\leftarrow}(u+\Delta)$.
Then
\[
\Delta \le F(x_{u+\Delta})-F(x_u)=\int_{x_u}^{x_{u+\Delta}} f(t)\,dt \ge m_\varepsilon (x_{u+\Delta}-x_u),
\]
hence $x_{u+\Delta}-x_u\le \Delta/m_\varepsilon$. The lower side is identical.
\end{proof}

\subsection{Proof of Proposition~\ref{prop:hp-quantiles}}\label{app:hp:proof-prop}

\begin{proof}[Proof of Proposition~\ref{prop:hp-quantiles}]
Fix $(j,s)$ and let $F_{j,s}$ be the cdf of $Z\mid(A=j,S=s)$, and $\widehat F_{j,s}$ its empirical cdf
based on $n_{j,s}$ samples. Apply Lemma~\ref{lem:dkw} with confidence level
$\delta'=\delta/(d|\cS|)$ to get that with probability at least $1-\delta'$,
\[
\|\widehat F_{j,s}-F_{j,s}\|_\infty \le
\sqrt{\frac{1}{2n_{j,s}}\log\!\Big(\frac{2}{\delta'}\Big)}
\le
\sqrt{\frac{1}{2n_{\min}}\log\!\Big(\frac{2d|\cS|}{\delta}\Big)}
=: \Delta.
\]
A union bound over all $(j,s)\in[d]\times \cS$ yields an event $\mathcal E$ of probability at least
$1-\delta$ on which the above holds simultaneously for all pairs $(j,s)$.

On $\mathcal E$, for any grid point $u_\ell\in[\varepsilon,1-\varepsilon]$ (i.e.\ for
$\ell\in\{\lceil \varepsilon k\rceil,\dots,\lfloor(1-\varepsilon)k\rfloor\}$), apply
Lemma~\ref{lem:quantile-lip} (with $F=F_{j,s}$ and $G=\widehat F_{j,s}$) to obtain
\[
\big|\widehat Q_{j,s}(u_\ell)-Q_{j,s}(u_\ell)\big|
=
\big|\widehat F_{j,s}^{\leftarrow}(u_\ell)-F_{j,s}^{\leftarrow}(u_\ell)\big|
\le \frac{\Delta}{m_\varepsilon}.
\]
Finally, since
$\log(2d|\cS|/\delta)\le \log(2(k+1)d|\cS|/\delta)$,
this implies the bound stated in Proposition~\ref{prop:hp-quantiles}.
\end{proof}

\subsection{From quantile errors to $\widehat G_{k,2}$: proof of Corollary~\ref{cor:hp-G}}\label{app:hp:proof-cor}

We use the step-cdf construction in Algorithm~\ref{alg:oneshot} (Appendix~\ref{app:algo}):
$\wideF_{j,s,k}(x)=\frac1k\sum_{\ell=1}^k \ind{\wideQ_{j,s,\ell}\le x}$,
$\wideF_{s,k}^{\times}(x)=\sum_{j=1}^d \hat\pi_{j\mid s}\wideF_{j,s,k}(x)$,
and $\wideq^{\times}_{s,\ell}=(\wideF_{s,k}^{\times})^{\leftarrow}(u_\ell)$.

\begin{lemma}[Step-cdf approximation error]\label{lem:stepcdf}
Let $F$ be any cdf and $q_\ell:=F^{\leftarrow}(u_\ell)$ with $u_\ell=(\ell-\frac12)/k$.
Define $F_k(x):=\frac1k\sum_{\ell=1}^k \ind{q_\ell\le x}$.
Then $\|F_k-F\|_\infty\le 1/k$.
The same holds with $F$ replaced by an empirical cdf.
\end{lemma}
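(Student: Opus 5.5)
The plan is to reduce the claim to an exact counting identity, with no regularity on $F$, using the Galois (switching) relation of the left-continuous generalized inverse: for any cdf $F$, any $u\in(0,1)$ and any $x\in\R$,
\[
F^{\leftarrow}(u)\le x \iff u\le F(x).
\]
Both directions rely only on right-continuity and monotonicity of $F$. If $u\le F(x)$, then $x$ lies in the set whose infimum defines $F^{\leftarrow}(u)$, so $F^{\leftarrow}(u)\le x$. Conversely, the infimum defining $F^{\leftarrow}(u)$ is attained by right-continuity, so $F(F^{\leftarrow}(u))\ge u$. Monotonicity then gives $F(x)\ge F(F^{\leftarrow}(u))\ge u$ whenever $x\ge F^{\leftarrow}(u)$. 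Since each $u_\ell=(\ell-\frac12)/k$ lies strictly inside $(0,1)$, every knot $q_\ell$ is a finite real number, so no boundary issues arise.

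Next, fix $x$, set $t:=F(x)\in[0,1]$, and rewrite $F_k(x)$ using the switching relation:
\[
F_k(x)=\frac1k\#\Bigl\{\ell\in[k]:\ \tfrac{\ell-1/2}{k}\le t\Bigr\}=\frac1k\#\bigl\{\ell\in[k]:\ \ell\le kt+\tfrac12\bigr\}=\frac{N}{k},
\]
where $N:=\min\bigl(k,\lfloor kt+\tfrac12\rfloor\bigr)$. The floor is nonnegative because $t\ge0$. The cap at $k$ is never active, since $\lfloor kt+\frac12\rfloor\ge k+1$ would require $kt\ge k+\frac12$, which is impossible for $t\le 1$. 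Hence $N=\lfloor kt+\frac12\rfloor$ is the integer nearest to $kt$, which gives $|N-kt|\le\frac12$. This yields
\[
|F_k(x)-F(x)|\le \frac{1}{2k}\le\frac1k.
\]
Taking the supremum over $x$ proves the lemma, and in fact proves the sharper constant $1/(2k)$.

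The only delicate step is the switching relation itself. It is what removes the ``up to ties'' caveat appearing in Lemma~\ref{lem:cdf-unif}: ties and atoms of $F$ are handled exactly, because $F$ is evaluated at $x$ and not inverted. The proof never uses continuity or strict monotonicity of $F$, so it applies verbatim to an empirical cdf $\widehat F_n$, which is a right-continuous nondecreasing step function with limits $0$ and $1$. This gives the final sentence of the statement.
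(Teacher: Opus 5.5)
Your proof is correct and follows essentially the paper's route: both arguments rest on the switching relation $F^{\leftarrow}(u)\le x \iff u\le F(x)$. The paper uses it to bracket $F(x)\in[u_{\ell^\star},u_{\ell^\star+1}]$, where $\ell^\star$ is the last knot below $x$, while you count the levels $u_\ell\le F(x)$ in closed form. Your version has two small advantages: it states the sharper constant $1/(2k)$ explicitly (the paper's bracketing also gives it, since $\ell^\star/k$ is the midpoint of that interval, but the paper states only $1/k$), and it spells out that no continuity of $F$ is used, which justifies the empirical-cdf clause.
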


\begin{proof}
Fix $x\in\R$ and let $\ell^\star:=\max\{\ell\in\{1,\dots,k\}: q_\ell\le x\}$ (with $\ell^\star=0$ if none).
Then $F_k(x)=\ell^\star/k$.
Since $q_{\ell^\star}\le x<q_{\ell^\star+1}$ (with the convention $q_{k+1}=+\infty$), we have
$F(x)\in[u_{\ell^\star},u_{\ell^\star+1}]$ by basic inverse properties for generalized inverses,
hence $|F(x)-\ell^\star/k|\le 1/k$. Taking the supremum over $x$ gives the claim.
\end{proof}

\begin{lemma}[Stability of the discrete $p=2$ functional]\label{lem:Phi-lip-app}
Fix weights $\alpha_s\ge0$, $\sum_s\alpha_s=1$, and define for an array
$q=(q_{s,\ell})_{s\in\cS,\ell\in[k]}$
\[
\Phi_\alpha(q):=\sum_{s\in\cS}\alpha_s\frac1k\sum_{\ell=1}^k\bigl(q_{s,\ell}-q^\ast_\ell\bigr)^2,
\qquad
q^\ast_\ell:=\sum_{s\in\cS}\alpha_s q_{s,\ell}.
\]
Assume $\max_{s,\ell}|q_{s,\ell}|\vee \max_{s,\ell}|q'_{s,\ell}|\le B$.
Then
\[
|\Phi_\alpha(q)-\Phi_\alpha(q')|\le 16B\,\|q-q'\|_\infty.
\]
Moreover, for two weight vectors $\alpha,\hat\alpha$ and any $q$ with $\max_{s,\ell}|q_{s,\ell}|\le B$,
\[
|\Phi_{\hat\alpha}(q)-\Phi_{\alpha}(q)|\le 16B^2\,\|\hat\alpha-\alpha\|_1.
\]
\end{lemma}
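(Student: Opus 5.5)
We prove the two bounds of Lemma~\ref{lem:Phi-lip-app} by rewriting $\Phi_\alpha$ as a weighted variance and then treating the two perturbations (in the array $q$ and in the weights $\alpha$) separately.

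\textbf{Variance form.} For each $\ell$, the barycenter identity gives
\[
\sum_s\alpha_s(q_{s,\ell}-q^\ast_\ell)^2=\sum_s\alpha_s q_{s,\ell}^2-(q^\ast_\ell)^2 .
\]
Hence $\Phi_\alpha(q)=\frac1k\sum_\ell\bigl(\sum_s\alpha_s q_{s,\ell}^2-(q^\ast_\ell)^2\bigr)$. Both claims then reduce to perturbing, separately for each $\ell$, one weighted second moment and one squared weighted mean. Averaging over $\ell$ with weights $1/k$ preserves any bound that is uniform in $\ell$, so it suffices to work at a fixed $\ell$.

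\textbf{Lipschitz bound in $q$.} We use $|a^2-b^2|\le(|a|+|b|)|a-b|\le 2B|a-b|$ whenever $|a|,|b|\le B$. Applied entrywise, this gives $|\sum_s\alpha_s(q_{s,\ell}^2-q_{s,\ell}'^2)|\le 2B\|q-q'\|_\infty$, since $\sum_s\alpha_s=1$. Next, $|q^\ast_\ell|,|q'^\ast_\ell|\le B$ because these are convex combinations, and $|q^\ast_\ell-q'^\ast_\ell|\le\|q-q'\|_\infty$. The same inequality therefore gives $|(q^\ast_\ell)^2-(q'^\ast_\ell)^2|\le 2B\|q-q'\|_\infty$. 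Adding the two pieces yields the constant $4B$, which is well within the claimed $16B$. If we instead worked directly with the expansion $(q_{s,\ell}-q^\ast_\ell)^2$, the differences would be bounded by $2B$ and the constant would come out as $8B$. Either route suffices.

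\textbf{Lipschitz bound in $\alpha$.} Fix $q$ with $\max_{s,\ell}|q_{s,\ell}|\le B$. The first term is linear in the weights, so
\[
\Bigl|\sum_s(\hat\alpha_s-\alpha_s)q_{s,\ell}^2\Bigr|\le B^2\|\hat\alpha-\alpha\|_1 .
\]
For the second term, the weighted means $\hat q^\ast_\ell=\sum_s\hat\alpha_s q_{s,\ell}$ and $q^\ast_\ell$ satisfy $|\hat q^\ast_\ell-q^\ast_\ell|\le B\|\hat\alpha-\alpha\|_1$.

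\textbf{Expected obstacle.} The only genuine subtlety is that $\hat\alpha$ need not be a probability vector if it is an arbitrary estimate. In Algorithm~\ref{alg:oneshot}, however, $\hat\alpha_s=n_s/n$ is nonnegative and sums to one, so $|\hat q^\ast_\ell|\le B$ still holds and $|(\hat q^\ast_\ell)^2-(q^\ast_\ell)^2|\le 2B^2\|\hat\alpha-\alpha\|_1$. Summing gives $3B^2\|\hat\alpha-\alpha\|_1\le 16B^2\|\hat\alpha-\alpha\|_1$. Should an unnormalized $\hat\alpha$ be allowed, we would replace the convex-combination bound by $|\hat q^\ast_\ell|\le B\|\hat\alpha\|_1\le B(1+\|\hat\alpha-\alpha\|_1)$. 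We would then either restrict to $\|\hat\alpha-\alpha\|_1\le 1$ or note that otherwise the claim follows trivially from the crude bounds $|\Phi|\le 4B^2$ (in the normalized case). We expect the generous constant $16$ to absorb this case analysis.
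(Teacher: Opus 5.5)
Your proof is correct, and on the second bound it is more careful than the paper's. For the first bound, the paper takes the direct-expansion route you mention at the end. With $a=q_{s,\ell}-q^\ast_\ell$ and $b=q'_{s,\ell}-q'^\ast_\ell$ it uses $|a-b|\le 2\|q-q'\|_\infty$ and $|a+b|\le 4B$, giving $8B$; your variance identity gives $4B$.

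For the second bound, the paper writes $\Phi_\alpha(q)=\sum_s\alpha_s\psi_s$ with $\psi_s=\frac1k\sum_\ell(q_{s,\ell}-q^\ast_\ell)^2\le 4B^2$. It then concludes $|\Phi_{\hat\alpha}(q)-\Phi_\alpha(q)|\le\sum_s|\hat\alpha_s-\alpha_s|\psi_s$. This silently treats $\psi_s$ as independent of the weights, although $\Phi_{\hat\alpha}$ is centered at $\hat q^\ast_\ell=\sum_s\hat\alpha_s q_{s,\ell}$. The step is true only because the weighted mean minimizes $c\mapsto\sum_s\alpha_s(q_{s,\ell}-c)^2$. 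Writing $\hat\psi_s$ for the analogue centered at $\hat q^\ast_\ell$, one gets $\Phi_{\hat\alpha}(q)\le\sum_s\hat\alpha_s\psi_s$ and $\Phi_\alpha(q)\le\sum_s\alpha_s\hat\psi_s$. This sandwiches the difference by $4B^2\|\hat\alpha-\alpha\|_1$.

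Your split into a weighted second moment (linear in the weights) and a squared weighted mean handles the center shift explicitly. It gives $3B^2$ with no variational argument. What the repaired paper route buys is generality: it only uses that each summand is a minimum of functions affine in the weights. So it transfers unchanged to $p=1$ with weighted medians, whereas your identity is specific to the Hilbertian case $p=2$.

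One thing to remove from your proposal. Both arguments need $\hat\alpha$ in the simplex: you for the variance identity and for $|\hat q^\ast_\ell|\le B$, the paper for $|q_{s,\ell}-\hat q^\ast_\ell|\le 2B$ and the minimization property. That is how the lemma is meant, since $\hat\alpha_s=n_s/n$.

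Your fallback for unnormalized $\hat\alpha$ cannot be made to work. The identity $\sum_s\hat\alpha_s(q_{s,\ell}-\hat q^\ast_\ell)^2=\sum_s\hat\alpha_s q_{s,\ell}^2-(\hat q^\ast_\ell)^2$ fails when $\sum_s\hat\alpha_s\neq1$. Worse, the inequality itself is false in general. Taking $\hat\alpha=M\alpha$ makes $\Phi_{\hat\alpha}(q)$ grow like $M^3$ whenever some $q^\ast_\ell\neq0$, while $\|\hat\alpha-\alpha\|_1=M-1$. So no constant, $16$ or otherwise, absorbs that case; state the normalization of $\hat\alpha$ as a hypothesis instead.
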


\begin{proof}
The first inequality is the same ``$|a^2-b^2|=|a-b||a+b|$'' argument as in the main text
(expand, bound the barycenter difference by $\|q-q'\|_\infty$, and use $|q_{s,\ell}-q_\ell^\ast|\le 2B$).
For the second, write $\Phi_\alpha(q)=\sum_s \alpha_s \psi_s$ where $\psi_s:=\frac1k\sum_\ell (q_{s,\ell}-q_\ell^\ast)^2$.
Since $|q_{s,\ell}-q_\ell^\ast|\le 2B$, we have $\psi_s\le 4B^2$ for all $s$, and thus
$|\Phi_{\hat\alpha}(q)-\Phi_{\alpha}(q)|\le \sum_s |\hat\alpha_s-\alpha_s|\psi_s\le 4B^2\|\hat\alpha-\alpha\|_1$.
A slightly looser constant $16B^2$ keeps bookkeeping uniform.
\end{proof}

\begin{proof}[Proof of Corollary~\ref{cor:hp-G}]
Work on the event $\mathcal E$ of Proposition~\ref{prop:hp-quantiles}. Define
\[
\eta:=\max_{\substack{j\in[d],\,s\in\cS\\ \ell\in\{\lceil \varepsilon k\rceil,\dots,\lfloor(1-\varepsilon)k\rfloor\}}}
\bigl|\widehat Q_{j,s}(u_\ell)-Q_{j,s}(u_\ell)\bigr|.
\]
On $\mathcal E$, $\eta\lesssim \frac{1}{m_\varepsilon}\sqrt{\frac{1}{n_{\min}}\log\!\big(\frac{kd|\cS|}{\delta}\big)}$
(up to harmless constants/logs).

\medskip\noindent\textbf{Step 1: Control of the reconstructed mixture cdfs.}
Fix $s$. Let $F_{j,s}$ be the population cdf, and define the population mixture cdf
$F_s^\times(x):=\sum_{j=1}^d \pi_{j\mid s}F_{j,s}(x)$ (as in (18) of the main text).
Let $F_{j,s,k}$ be the step-cdf built from the \emph{true} grid quantiles $Q_{j,s}(u_\ell)$, and
$\wideF_{j,s,k}$ the step-cdf built from the \emph{empirical} grid quantiles $\widehat Q_{j,s}(u_\ell)$
(Algorithm~\ref{alg:oneshot}).
By Lemma~\ref{lem:stepcdf}, $\|F_{j,s,k}-F_{j,s}\|_\infty\le 1/k$.
Moreover, on $\mathcal E$, the knot perturbation implies
$\|\wideF_{j,s,k}-F_{j,s,k}\|_\infty\le k^{-1}\cdot \#\{\ell:\ Q_{j,s}(u_\ell)\in[x-\eta,x+\eta]\}$
pointwise; under the regularity assumptions used in Proposition~\ref{prop:discretization}
(uniform modulus/moment control on the trimmed region), this yields
$\|\wideF_{j,s,k}-F_{j,s,k}\|_\infty \le C_\varepsilon \eta$.
Hence
\[
\|\wideF_{j,s,k}-F_{j,s}\|_\infty
\le \|\wideF_{j,s,k}-F_{j,s,k}\|_\infty + \|F_{j,s,k}-F_{j,s}\|_\infty
\le C_\varepsilon\eta + \frac{1}{k}.
\]
Using the mixture definition $\wideF_{s,k}^\times=\sum_j \hat\pi_{j\mid s}\wideF_{j,s,k}$ and
$F_s^\times=\sum_j \pi_{j\mid s}F_{j,s}$, we get
\[
\|\wideF_{s,k}^\times-F_s^\times\|_\infty
\le
\sum_{j=1}^d \hat\pi_{j\mid s}\|\wideF_{j,s,k}-F_{j,s}\|_\infty
+\|\hat\pi_{\cdot\mid s}-\pi_{\cdot\mid s}\|_1
\le C_\varepsilon\eta + \frac1k + \|\hat\pi_{\cdot\mid s}-\pi_{\cdot\mid s}\|_1.
\]
A standard concentration bound for multinomial proportions yields
$\max_s\|\hat\pi_{\cdot\mid s}-\pi_{\cdot\mid s}\|_1\le C\sqrt{\frac{1}{n_{\min}}\log(\frac{d|\cS|}{\delta})}$
with probability at least $1-\delta$; we absorb it into the same stochastic term as $\eta$.

\medskip\noindent\textbf{Step 2: From cdf control to mixture-quantile control.}
Assume (as in Proposition~\ref{prop:discretization}) that each mixture $F_s^\times$ has density
bounded below by $m_\varepsilon^\times>0$ on
$[Q_s^\times(\varepsilon),Q_s^\times(1-\varepsilon)]$.
Then Lemma~\ref{lem:quantile-lip} gives, for all trimmed grid indices,
\[
\max_{\ell\in\{\lceil \varepsilon k\rceil,\dots,\lfloor(1-\varepsilon)k\rfloor\}}
\big|\wideq_{s,\ell}^\times - Q_s^\times(u_\ell)\big|
\le
\frac{1}{m_\varepsilon^\times}\|\wideF_{s,k}^\times-F_s^\times\|_\infty
\le C_\varepsilon\Big(\eta+\frac1k+\sqrt{\frac{1}{n_{\min}}\log\!\frac{d|\cS|}{\delta}}\Big).
\]
Similarly, for $p=2$ the barycenter quantiles satisfy
$q_\ell^{\times\ast}=\sum_s \alpha_s Q_s^\times(u_\ell)$ and
$\wideq_{\ell}^{\times\ast}=\sum_s \hat\alpha_s \wideq_{s,\ell}^\times$, hence the same bound holds for
$\max_\ell|\wideq_{\ell}^{\times\ast}-q_\ell^{\times\ast}|$ (plus the weight term
$\|\hat\alpha-\alpha\|_1$, controlled at the same $n_{\min}^{-1/2}$ scale and absorbed).

\medskip\noindent\textbf{Step 3: Plug into the discrete quadratic form.}
Let $q^\times=(Q_s^\times(u_\ell))_{s,\ell}$ and $\wideq^\times=(\wideq_{s,\ell}^\times)_{s,\ell}$.
By Lemma~\ref{lem:Phi-lip-app}, on the trimmed region and using that quantiles are uniformly bounded
there (a consequence of the moment/modulus assumptions behind Proposition~\ref{prop:discretization}),
\[
\big|\widehat G_{k,2}-G_{k,2}\big|
=
\big|\Phi_{\hat\alpha}(\wideq^\times)-\Phi_{\alpha}(q^\times)\big|
\le C_\varepsilon\Big(\|\wideq^\times-q^\times\|_\infty + \|\hat\alpha-\alpha\|_1\Big)
\le
C_\varepsilon\Big(\eta+\frac1k+\sqrt{\frac{1}{n_{\min}}\log\!\frac{kd|\cS|}{\delta}}\Big).
\]

\medskip\noindent\textbf{Step 4: Add discretization (grid) bias.}
By Proposition~\ref{prop:discretization} (applied to the mixture laws $\nu_s^\times$),
\[
|G_{k,2}-G_2|\le C_\varepsilon\,\omega(1/k).
\]
Combining the last two displays and recalling that $\eta$ is bounded by
Proposition~\ref{prop:hp-quantiles} on $\mathcal E$ yields the claimed inequality
\[
|\widehat G_{k,2}-G_2|
\le C_\varepsilon\Big(\omega(1/k)+\sqrt{\frac{1}{n_{\min}}\log\!\big(\frac{kd|\cS|}{\delta}\big)}\Big).
\]

\medskip
The same argument applies to $(\widehat V_{\mathrm{mix}},\widehat V_{\mathrm{bar}},\widehat R)$:
each is a finite average of squared differences of (mixture/barycenter) quantile arrays on the grid,
hence is Lipschitz in $\|\cdot\|_\infty$ up to constants depending on the same uniform bounds.
\end{proof}

\section{Proofs for Section~\ref{sec:anova}}\label{app:proof:anova}

\begin{proof}[Proof of Theorem~\ref{thm:anova}]
In one dimension and for $p=2$, Lemma~\ref{lem:Wp-quantiles} gives
\[
G_2
=
\sum_{s\in\cS}\alpha_s\,
\|Q_s^\times-Q^{\times\ast}\|_{L^2([0,1])}^2.
\]
For each $s$, decompose
$Q_s^\times-Q^{\times\ast}=(Q_s^\times-Q_s^\ast)+(Q_s^\ast-Q^{\times\ast})$.
Expanding the squared norm and summing over $s$ with weights $\alpha_s$ yields
\[
G_2
=
\underbrace{\sum_s \alpha_s\|Q_s^\times-Q_s^\ast\|_2^2}_{V_{\mathrm{mix}}}
+
\underbrace{\sum_s \alpha_s\|Q_s^\ast-Q^{\times\ast}\|_2^2}_{V_{\mathrm{bar}}}
+
2\sum_s \alpha_s \langle Q_s^\times-Q_s^\ast,\,Q_s^\ast-Q^{\times\ast}\rangle,
\]
which is exactly \eqref{eq:anova-decomp} with the remainder $R$ as stated.

Finally, by Cauchy--Schwarz in $L^2([0,1])$ and then in the weighted sum over $s$,
\[
|R|
\le
2\sum_s \alpha_s \|Q_s^\times-Q_s^\ast\|_2\,\|Q_s^\ast-Q^{\times\ast}\|_2
\le
2\Big(\sum_s\alpha_s\|Q_s^\times-Q_s^\ast\|_2^2\Big)^{1/2}
\Big(\sum_s\alpha_s\|Q_s^\ast-Q^{\times\ast}\|_2^2\Big)^{1/2}
=
2\sqrt{V_{\mathrm{mix}}V_{\mathrm{bar}}}.
\]
\end{proof}

\begin{proof}[Proof of Proposition~\ref{prop:anova-bounds}]
By Theorem~\ref{thm:anova}, $G_2=V_{\mathrm{mix}}+V_{\mathrm{bar}}+R$ with
$-2\sqrt{V_{\mathrm{mix}}V_{\mathrm{bar}}}\le R\le 2\sqrt{V_{\mathrm{mix}}V_{\mathrm{bar}}}$.
Thus
\[
(\sqrt{V_{\mathrm{mix}}}-\sqrt{V_{\mathrm{bar}}})^2
=
V_{\mathrm{mix}}+V_{\mathrm{bar}}-2\sqrt{V_{\mathrm{mix}}V_{\mathrm{bar}}}
\le G_2
\le
V_{\mathrm{mix}}+V_{\mathrm{bar}}+2\sqrt{V_{\mathrm{mix}}V_{\mathrm{bar}}}
=
(\sqrt{V_{\mathrm{mix}}}+\sqrt{V_{\mathrm{bar}}})^2.
\]
\end{proof}

\begin{proof}[Proof of Corollary~\ref{cor:vanishing}]
We use the definitions in Section~\ref{sec:anova}.

\textbf{(1)} If $m(X,S,A)\indep A\mid S$, then for each $(j,s)$, the conditional law of $Z$ given $(A=j,S=s)$
depends only on $s$, hence $\nu_{j,s}=\nu_s$ for all $j$.
Therefore $\nu_s^\times=\sum_j \pi_{j\mid s}\nu_{j,s}=\nu_s$, and the within-group barycenter
$\nu_s^\ast$ (barycenter of identical measures) also equals $\nu_s$. Hence
$W_2(\nu_s^\times,\nu_s^\ast)=0$ for all $s$, so $V_{\mathrm{mix}}=0$ and
Theorem~\ref{thm:anova} gives $G_2=V_{\mathrm{bar}}$.

\textbf{(2)} 
If $m(X,S,A)\perp\!\!\!\perp S \mid A$, then $\nu_{j,s}=\nu_j$ for all $s$.
If moreover $\pi_{j\mid s}=\beta_j$ for all $j,s$, then $\nu_s^{\times}=\sum_j \beta_j \nu_j$ does not depend on $s$,
so $G_2=\sum_s \alpha_s W_2^2(\nu_s^{\times},\nu^{\times *})=0$.

\textbf{(3)} If both independence conditions hold, then $\nu_{j,s}$ is independent of both $j$ and $s$,
so all group and silo conditional score laws coincide. Hence $G_2=0$.
\end{proof}

\section{Additional experiments on COMPAS}
\label{app:compas}

\subsection{Data and benchmark (centralized) quantities}
\label{app:compas:data}

We use the ProPublica COMPAS two-year dataset and restrict attention to two sensitive groups:
African-American and Caucasian.
After filtering, we obtain $n_{\mathrm{AA}}=3696$ and $n_{\mathrm{C}}=2454$ individuals. 
The score is the \emph{general} decile score, made continuous by jittering within each integer bin:
$Z \leftarrow \texttt{decile\_score} - U$, with $U\sim \mathrm{Unif}(0,1)$, which avoids ties and yields
well-defined densities. 
In the pooled benchmark (all data centralized), the score distribution is shifted to the right for
African-American individuals (e.g., mean $4.87$ vs.\ $3.23$) and the observed two-year recidivism rate is higher
($0.51$ vs.\ $0.39$). 

We compute the centralized reference value $U_2$ using a fine quantile grid ($2001$ points), and similarly report
$H_2$ as well as the inter-group distances $W_2$ (Wasserstein) and $C_2$ (Cramér).
On COMPAS, we obtain
$U_2 = 0.7609$, $H_2 = 0.0077$, $W_2(\mathrm{AA},\mathrm{C})=1.7813$, and $C_2(\mathrm{AA},\mathrm{C})=0.1794$. 

\begin{figure}[!htbp]
\centering
\includegraphics[width=.33\textwidth]{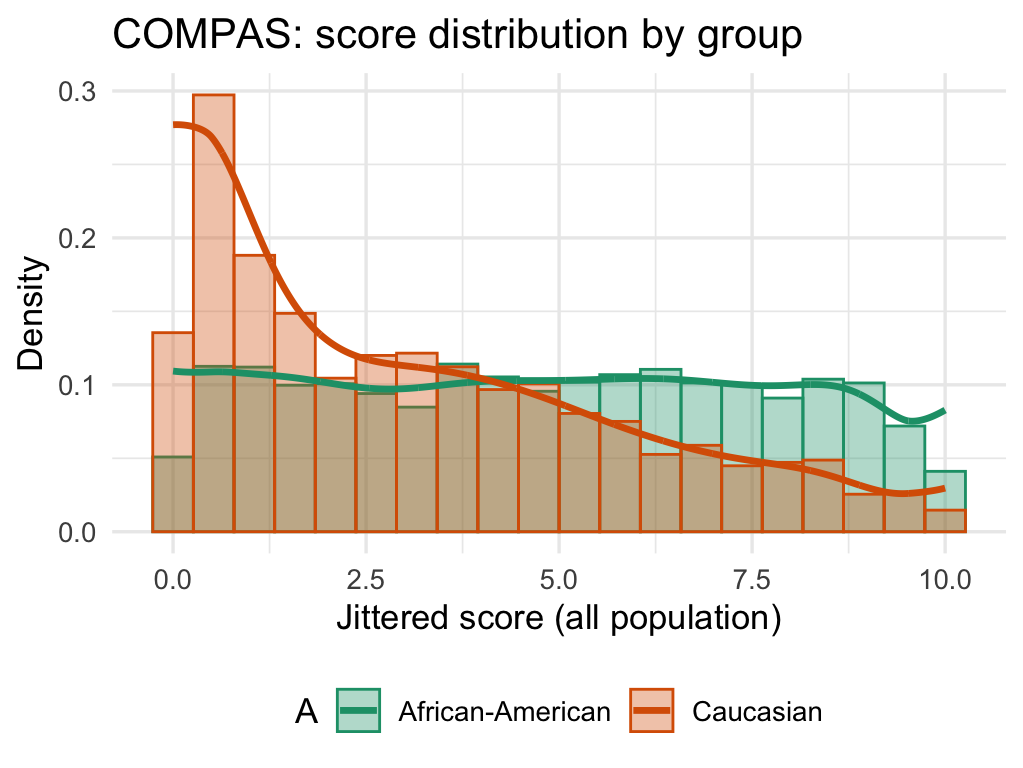}\includegraphics[width=.33\textwidth]{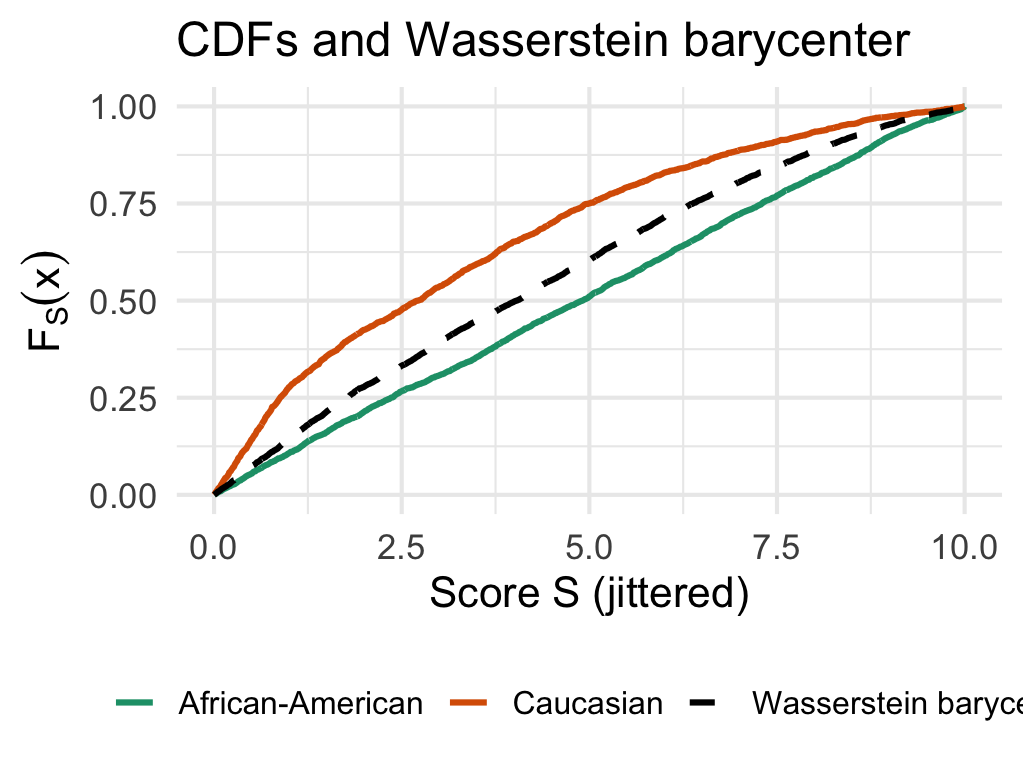}\includegraphics[width=.33\textwidth]{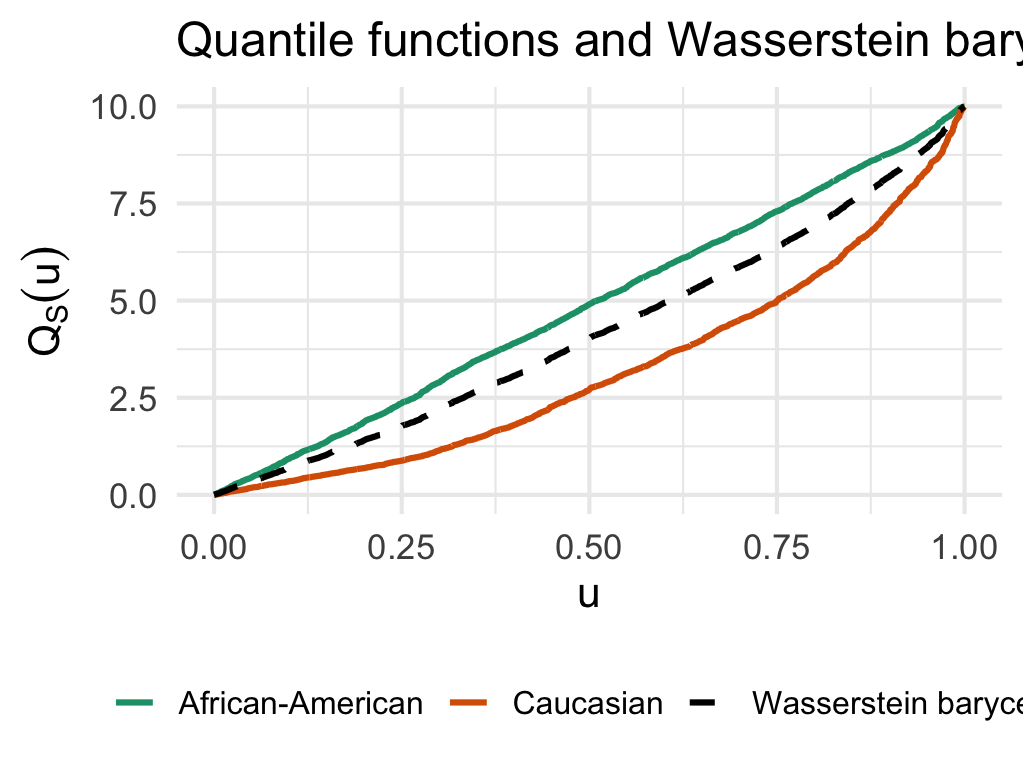}
\caption{\textbf{COMPAS: score distributions and Wasserstein barycenter.}
\emph{Left:} (beta-)kernel density estimates of the jittered score $Z$ by group.
\emph{Middle:} empirical CDFs $F_{\mathrm{AA}}$ and $F_{\mathrm{C}}$ together with the Wasserstein barycenter distribution
(dashed), obtained by inverting the barycenter quantile $Q^\star$.
\emph{Right:} group quantile functions $Q_{\mathrm{AA}}$ and $Q_{\mathrm{C}}$ and their Wasserstein barycenter
$Q^\star = \alpha_{\mathrm{AA}}Q_{\mathrm{AA}} + \alpha_{\mathrm{C}}Q_{\mathrm{C}}$ (dashed).}
\label{fig:compas-plot}
\end{figure}

\paragraph{Continuous (jittered) scores.}
The COMPAS score is originally reported on an integer decile scale.
To avoid ties and to work with smooth densities/quantile curves, we use a jittered version of the score: for each individual, we set $Z := D - U$, where $D\in\{1,\dots,10\}$ is the reported decile and $U\sim \mathrm{Unif}(0,1)$ is drawn independently.
This produces a continuous score on $(0,10)$ while preserving the ordering information within each decile.

Figure~\ref{fig:compas-plot} illustrates the resulting score distributions for the two groups, both in density form (left) and through their CDFs and quantile functions (middle/right). The dashed curves correspond to the 1D Wasserstein barycenter, which provides the natural reference distribution underlying the centralized target $U_2$.

\subsection{Silo allocation mechanism}
\label{app:compas:alloc}

We generate federated scenarios by assigning each individual to one of $d$ silos, under three regimes: \emph{random allocation}, \emph{positive selection bias}, and \emph{negative selection bias}.
A key design choice is to \emph{preserve group-wise silo margins} across regimes, so that the only difference between regimes is the dependence between the score $Z$ and the silo index.

\paragraph{Step 1 (baseline margins).}
For each scenario $b$, we first draw a baseline random silo assignment $\tilde{A}_i \in \{1,\dots,d\}$ i.i.d.\ uniform (with replacement).
This fixes the contingency table of margins
\[
N_{\ell,g} \;=\; \#\{i:\tilde{A}_i=\ell,\; G_i=g\},
\]
where $G_i\in\{\mathrm{AA},\mathrm{C}\}$ is the sensitive group. 

\paragraph{Step 2 (random regime).}
In the \texttt{random} regime, we keep the baseline allocation $A_i=\tilde{A}_i$, hence $Z$ and $A$ are (up to sampling noise) independent. 

\paragraph{Step 3 (selection-bias regimes via a Gaussian copula).}
To introduce controlled dependence between $Z$ and the silo index, we compute a randomized rank transform
\[
R_i \;=\; \frac{\mathrm{rank}(Z_i)}{n+1} \in (0,1),
\]
then form a correlated latent variable
\[
U_i \;=\; \Phi\!\Big(\rho\,\Phi^{-1}(R_i) + \sqrt{1-\rho^2}\,\varepsilon_i\Big),
\qquad \varepsilon_i\sim\mathcal{N}(0,1),
\]
with $\rho\in[0,1)$ controlling the intensity of dependence. 
Finally, within each group $g$, we map $U_i$ to a silo label by cutting $(0,1)$ according to the \emph{baseline margins} $N_{\cdot,g}$:
\[
A_i \;=\; \text{the unique }\ell\text{ such that }\;
U_i \in \Big(\tfrac{\sum_{j<\ell} N_{j,g}}{\sum_j N_{j,g}},\,
\tfrac{\sum_{j\le \ell} N_{j,g}}{\sum_j N_{j,g}}\Big]. 
\]
This construction guarantees that $(N_{\ell,g})$ is identical across regimes, while creating a tunable correlation between $Z$ and $A$.

\paragraph{Positive vs.\ negative bias.}
In \texttt{positive} bias, we use the same $U_i$ for both groups; in \texttt{negative} bias, we flip the copula for one group (equivalently $U_i\mapsto 1-U_i$), so that high-score individuals tend to concentrate in high-index silos for one group and in low-index silos for the other. 
In all cases, we report the empirical Pearson and Spearman correlations $\mathrm{cor}(Z,A)$ as a \emph{diagnostic} of the
realized selection bias. 

See Table~\ref{tab:compas:compare:three} for an illustration on COMPAS.
Our copula-based assignment preserves the group-wise silo margins $(N_{\ell,g})$ by construction, hence the counts $n_{\mathrm{AA}}$ and $n_{\mathrm{C}}$ are the same across the random, positive, and negative regimes.
This makes the comparison ``apples-to-apples'': changes in the per-silo quantities are attributable to the induced dependence between the score $Z$ and the silo index $A$, not to varying sample sizes.

In the \texttt{random} regime, silos are approximately representative subsamples, and per-silo gaps (e.g., $|\Delta \bar{z}|$, $W_2$, and thus $U_2$) fluctuate moderately around their global values.
Under \texttt{positive} selection, both groups are sorted in the same direction, so high-score individuals tend to co-locate in the same silos; this creates systematic within-silo shifts and can increase (or decrease) local distances depending on the silo, while leaving the global benchmark essentially unchanged.
Under \texttt{negative} selection, the sorting is reversed for one group, producing stronger cross-silo heterogeneity: some silos become ``more separated'' (larger $W_2$ and $U_2$) while others become ``less separated'', illustrating how selection bias can distort local fairness assessments and increase the difficulty of reconstruction from small sketches.

\begin{table}[t]
\caption{\textbf{COMPAS: random vs.\ selection-biased silo allocations.}
Global and per-silo summary statistics for African-American (AA) vs.\ Caucasian (C) scores under three regimes:
random assignment (independence), negative selection bias, and positive selection bias
(Gaussian-copula assignment with fixed group-wise margins $(N_{\ell,g})$).
Rows report the mean-score gap $|\Delta\bar{z}|$ and distributional distances ($U_2$, $H_2$, $W_2$).}
\label{tab:compas:compare:three}
\centering
\begin{tabular}{lrrrrrr}
\toprule
metric & Global & Silo 1 & Silo 2 & Silo 3 & Silo 4 & Silo 5\\
\midrule
\multicolumn{7}{l}{\textit{Random assignment (independence)}}\\
$|\Delta\bar{z}|$ & 1.6357 & 1.5583 & 1.5732 & 1.7513 & 1.7481 & 1.5675\\
$U_2$            & 0.7609 & 0.6642 & 0.7069 & 0.8824 & 0.8652 & 0.7296\\
$H_2$            & 0.0077 & 0.0068 & 0.0071 & 0.0092 & 0.0088 & 0.0073\\
$W_2$            & 1.7813 & 1.6769 & 1.7313 & 1.9099 & 1.8858 & 1.7404\\
\addlinespace[0.7ex]

\multicolumn{7}{l}{\textit{Negative selection bias}}\\
$|\Delta\bar{z}|$ & 1.6357 & 5.5613 & 1.8432 & 0.9827 & 3.5526 & 6.7650\\
$U_2$            & 0.7609 & 7.4708 & 0.8175 & 0.2544 & 3.2469 & 11.5793\\
$H_2$            & 0.0077 & 0.0872 & 0.0115 & 0.0037 & 0.0431 & 0.1297\\
$W_2$            & 1.7813 & 5.6780 & 1.9029 & 1.0318 & 3.6327 & 6.8784\\
\addlinespace[0.7ex]

\multicolumn{7}{l}{\textit{Positive selection bias}}\\
$|\Delta\bar{z}|$ & 1.6357 & 0.4483 & 0.9784 & 1.0978 & 0.9667 & 0.5654\\
$U_2$            & 0.7609 & 0.0748 & 0.2634 & 0.3073 & 0.2355 & 0.0745\\
$H_2$            & 0.0077 & 0.0017 & 0.0045 & 0.0044 & 0.0035 & 0.0012\\
$W_2$            & 1.7813 & 0.5479 & 1.0324 & 1.1319 & 1.0104 & 0.6326\\
\bottomrule
\end{tabular}
\end{table}

\subsection{Federated sketch and reconstruction}
\label{app:compas:sketch}

We use midpoint levels $u_\ell=(\ell-\tfrac12)/k$, which avoids the boundary quantiles at $0$ and $1$ and improves robustness to extreme tails.

\subsection{Figures and discussion}
\label{app:compas:figs}

\paragraph{(i) Score distributions.}
We first visualized the two (jittered) score distributions on the pooled COMPAS sample, on Figure~\ref{fig:compas-plot}.
Figure~\ref{fig:original-compas-plot} reports, on original discrete scores (left) histograms by group, (middle) the corresponding CDFs,
and (right) the quantile functions.
The dashed curves show the one-dimensional Wasserstein barycenter, i.e., the natural reference distribution underlying the
centralized target \(U_2\): in 1D, the barycenter is characterized by the barycentric quantile
\(Q^\star(u) = \sum_g w_g\,Q_g(u)\), hence it is natural to overlay it both at the CDF and quantile level. 

\begin{figure}[!htbp]
\centering
\includegraphics[width=.33\textwidth]{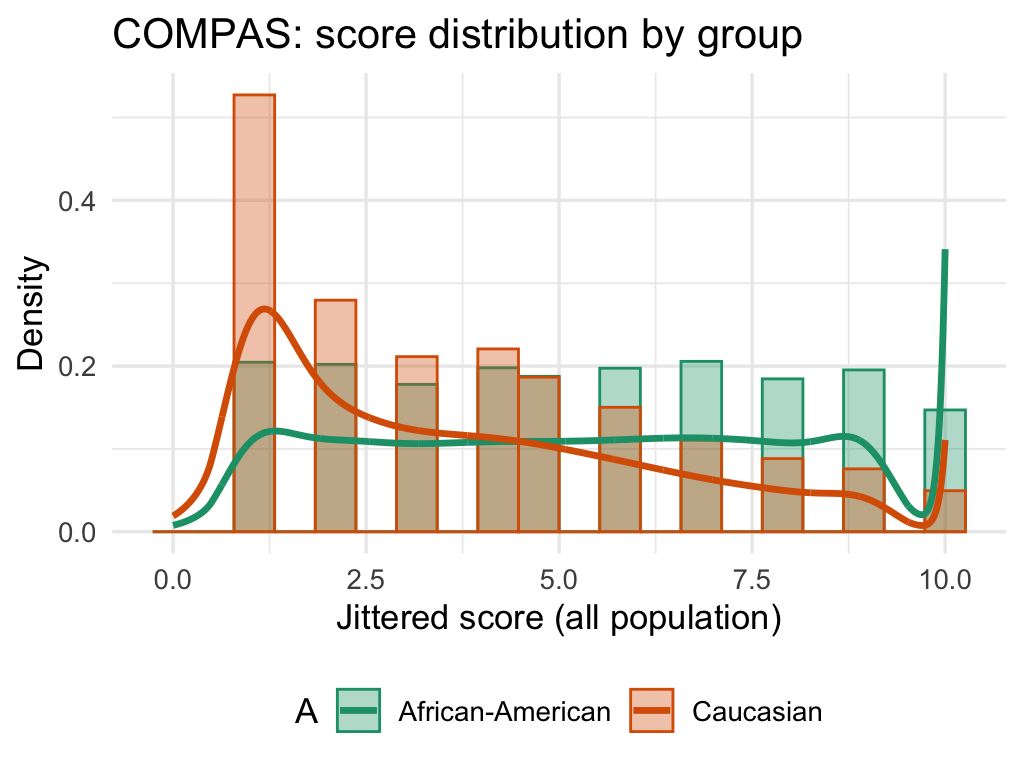}\includegraphics[width=.33\textwidth]{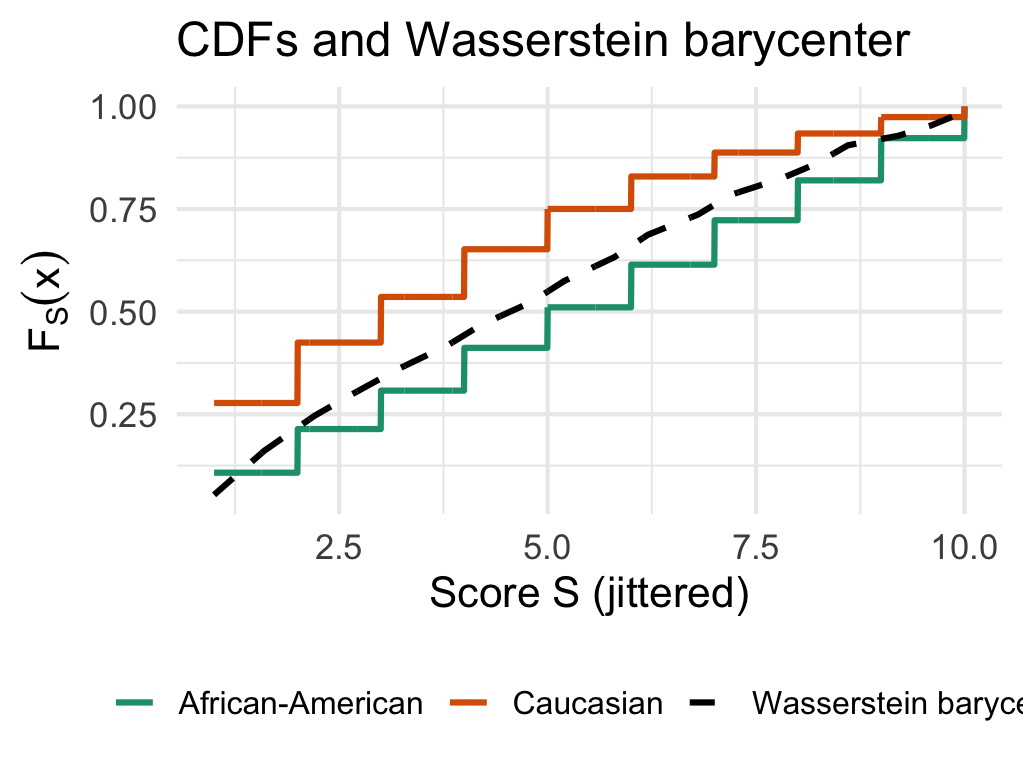}\includegraphics[width=.33\textwidth]{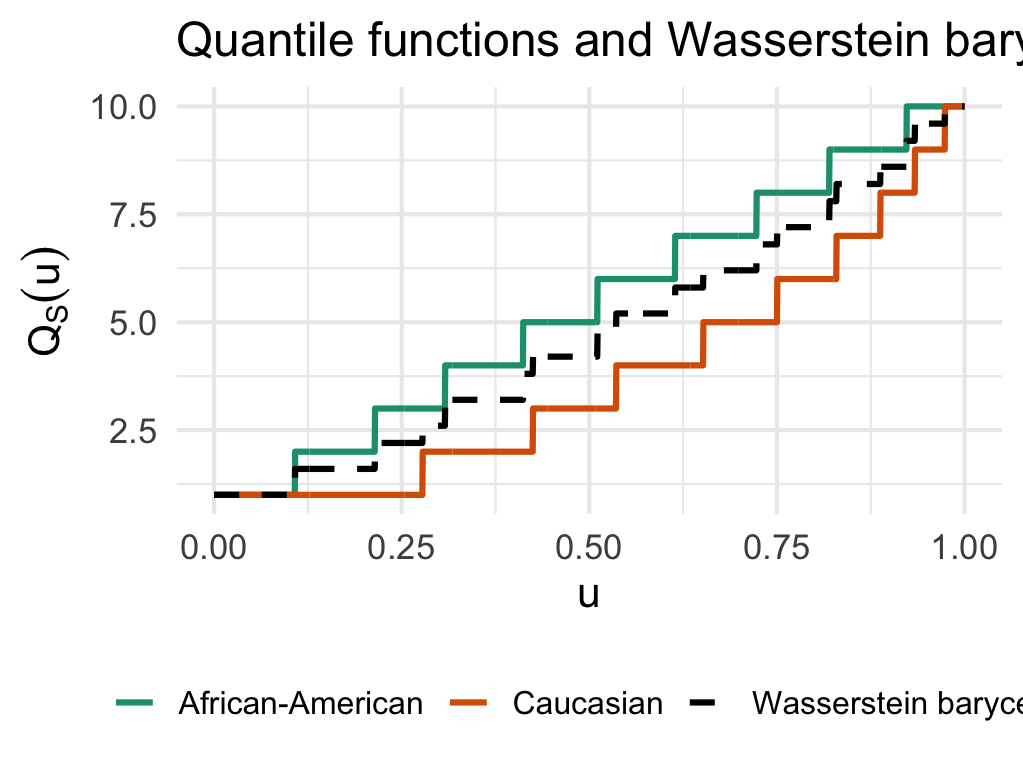}
\caption{\textbf{COMPAS: original score distributions and Wasserstein barycenter.}
\emph{Left:} histograms of score $Z$ by group.
\emph{Middle:} empirical CDFs $F_{\mathrm{AA}}$ and $F_{\mathrm{C}}$ together with the Wasserstein barycenter distribution
(dashed), obtained by inverting the barycenter quantile $Q^\star$.
\emph{Right:} group quantile functions $Q_{\mathrm{AA}}$ and $Q_{\mathrm{C}}$ and their Wasserstein barycenter
$Q^\star = \alpha_{\mathrm{AA}}Q_{\mathrm{AA}} + \alpha_{\mathrm{C}}Q_{\mathrm{C}}$ (dashed).}
\label{fig:original-compas-plot}
\end{figure}

\paragraph{(ii) Convergence in \(k\).}
We then study how the sketch size \(k\) (the number of quantiles transmitted per \((\text{group},\text{silo})\)) affects the
accuracy of the federated estimator \(\widehat U_2(k)\).
Figure~\ref{fig:compas-mae-k} reports the mean absolute error
\(\mathrm{MAE}(k)=\mathbb{E}\big|\widehat{U}_2(k)-U_2\big|\) as a function of \(k\), for several values of \(d\) and for each
allocation regime (random vs.\ selection bias).
Across regimes, \(\widehat U_2(k)\) stabilizes rapidly: modest grids already recover \(U_2\) accurately, with diminishing returns
beyond a few dozen quantiles.

To complement averages, Figure~\ref{fig:compas-box-k} shows the full distribution of \(\widehat U_2(k)\) over Monte Carlo
allocations for a fixed \(d\). The estimator concentrates near the centralized benchmark \(U_2\) even for moderate \(k\),
illustrating that the protocol is robust to scenario-to-scenario variability once the sketch is not too small. 

\begin{figure}[!htbp]
\centering
\includegraphics[width=.99\textwidth]{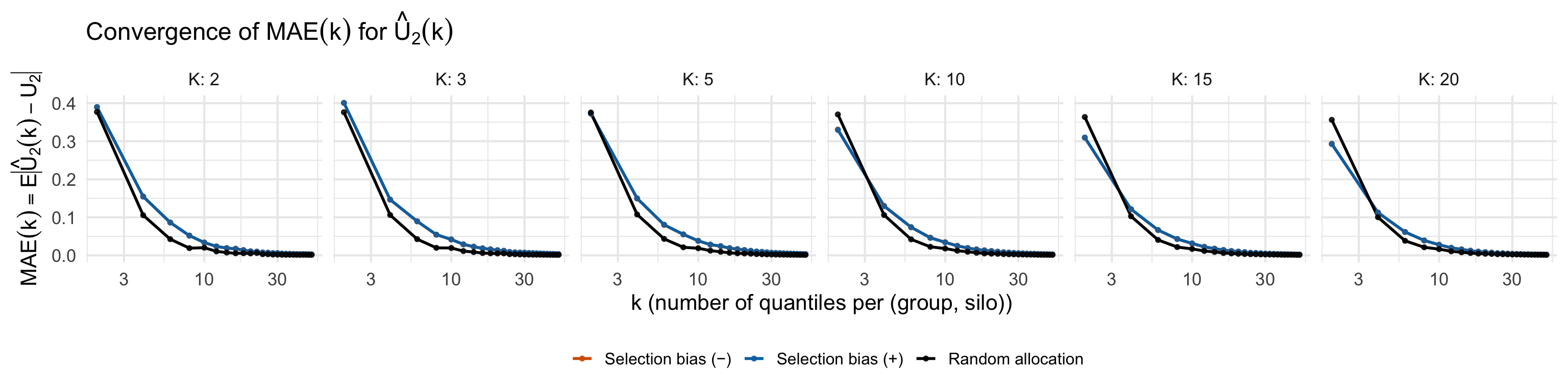}
\caption{\textbf{COMPAS: convergence in $k$.}
Mean absolute error $\mathrm{MAE}(k)=\mathbb{E}\,|\widehat U_2(k)-U_2|$ as a function of the number of quantiles $k$
(sent per group and per silo) on a log scale, for several numbers of silos $d$ and for different allocation regimes
(random vs.\ selection bias).
Across regimes, $\widehat U_2(k)$ stabilizes quickly, with diminishing returns beyond a few dozen quantiles.}
\label{fig:compas-mae-k}
\end{figure}

\paragraph{(iii) Reliability profiles \(p_{\mathrm{ok}}(k)\).}
Beyond MAE, we report the probability of meeting a target relative accuracy
\[
p_{\mathrm{ok}}(k)=\mathbb{P}\!\left(\frac{|\widehat{U}_2(k)-U_2|}{U_2}\le \tau\right),
\qquad (\tau=1\%).
\]
Figure~\ref{fig:compas-pok-k} makes clear that increasing \(k\) quickly drives the bulk of runs into the target-accuracy region,
and that selection bias typically requires slightly larger sketches to achieve the same reliability. 

A compact summary is provided by Figure~\ref{fig:compas-k95-heatmap}, which reports
\(k_{95}\), the smallest \(k\) such that \(p_{\mathrm{ok}}(k)\ge 95\%\), as a function of \(d\) and the allocation regime.
The heatmap highlights, in one glance, the additional sketch size needed under selection bias compared to random splits. 

\begin{figure}[!htbp]
\centering
\includegraphics[width=.99\textwidth]{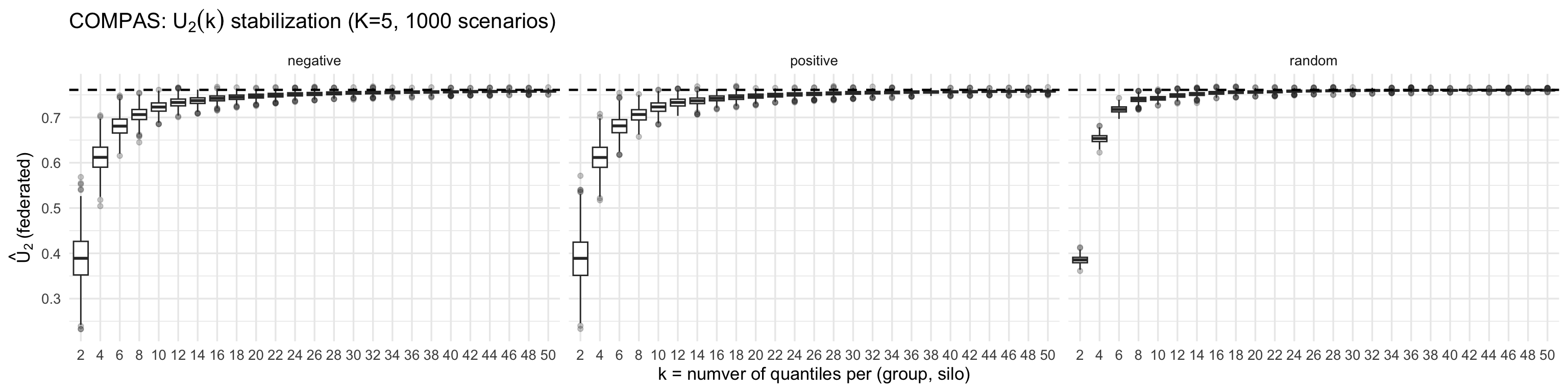}
\caption{\textbf{COMPAS: fast stabilization of $\widehat U_2(k)$.}
Boxplots of the federated estimator $\widehat U_2(k)$ over repeated allocations (Monte Carlo scenarios),
for a fixed number of silos $d$ and varying $k$.
The dashed horizontal line is the centralized benchmark $U_2$ computed on the pooled data with a fine grid.
Even for moderate $k$, the distribution of $\widehat U_2(k)$ concentrates near $U_2$.}
\label{fig:compas-box-k}
\end{figure}

\paragraph{(iv) Impact of the number of silos \(d\).}
At fixed \(k\), increasing \(d\) makes local \((\text{group},\text{silo})\) samples smaller on average, hence local quantiles
noisier; this effect is visible through the upward shift of errors / slower reliability gains as \(d\) grows in
Figures~\ref{fig:compas-mae-k} and \ref{fig:compas-pok-k}, and it is summarized by the larger \(k_{95}\) values in
Figure~\ref{fig:compas-k95-heatmap}. Practically, this highlights the natural trade-off between federation granularity (large \(d\))
and sketch resolution (larger \(k\)).

\begin{figure}[!htbp]
\centering
\includegraphics[width=.99\textwidth]{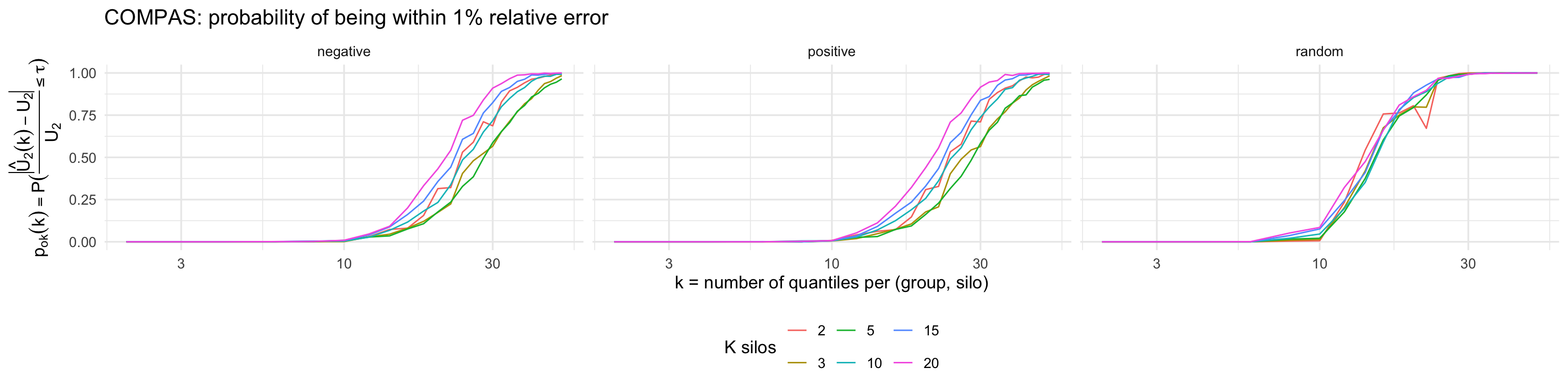}
\caption{\textbf{COMPAS: reliability profile.}
Probability of meeting a relative-error target
$p_{\mathrm{ok}}(k)=\mathbb{P}\!\left(\frac{|\widehat U_2(k)-U_2|}{U_2}\le \tau\right)$ (here $\tau=1\%$)
as a function of $k$ (on a log scale), for several $d$ and allocation regimes.
This complements MAE curves by showing how quickly the bulk of runs reaches the target accuracy.}
\label{fig:compas-pok-k}
\end{figure}

\paragraph{(v) Error vs.\ selection intensity.}
Finally, we relate accuracy to the realized score--silo dependence (selection intensity), measured by
\(|\mathrm{cor}_Z(Z,A)|\) (Spearman).
Figure~\ref{fig:compas-error-vs-corr} shows that stronger selection (larger dependence) tends to worsen estimation accuracy at fixed
\(k\), consistent with more heterogeneous within-silo score distributions, which amplifies discretization and reconstruction errors.

\paragraph{(vi) Centralized discretization vs.\ federated estimates.}
To disentangle discretization from federation noise, Figure~\ref{fig:compas-central-vs-fed} compares the centralized discretized
target \(U_2(k)\) (computed on pooled data with a \(k\)-point grid) to the federated estimate \(\widehat U_2(k)\).
The closeness of the two curves for moderate \(k\) suggests that, once communication is sufficient, the remaining gap to the
fine-grid reference \(U_2\) is primarily explained by discretization rather than by federation noise. 

\begin{figure}[!htbp]
\centering
\includegraphics[width=.99\textwidth]{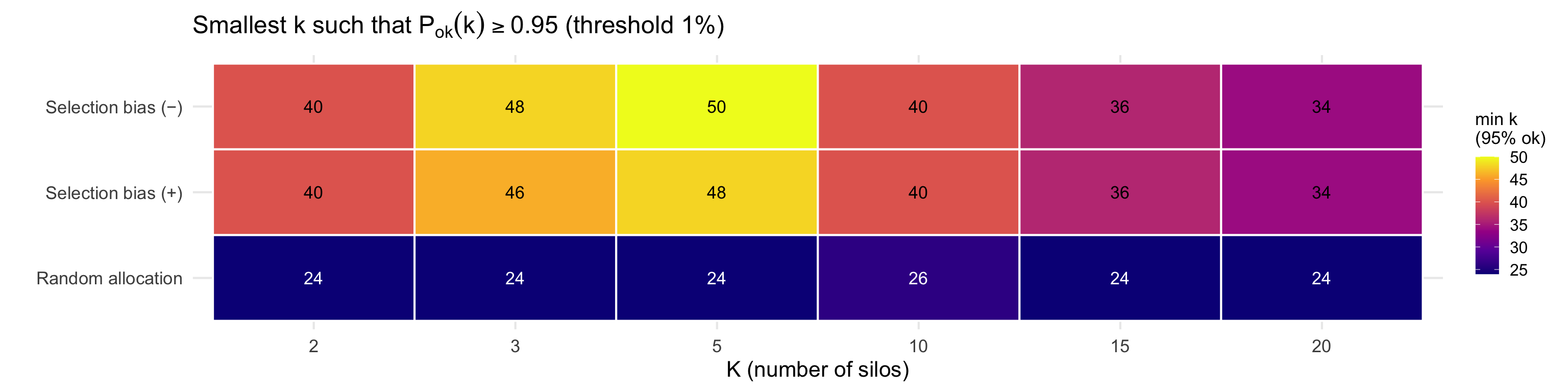}
\caption{\textbf{COMPAS: minimal sketch size for a target reliability.}
Heatmap of $k_{95}$, the smallest number of quantiles $k$ such that
$p_{\mathrm{ok}}(k)\ge 95\%$ (with $\tau=1\%$), as a function of the number of silos $d$
and the allocation regime.
Selection bias requires larger sketches to achieve the same reliability.}
\label{fig:compas-k95-heatmap}
\end{figure}


\begin{figure}[!htbp]
\centering
\includegraphics[width=.99\textwidth]{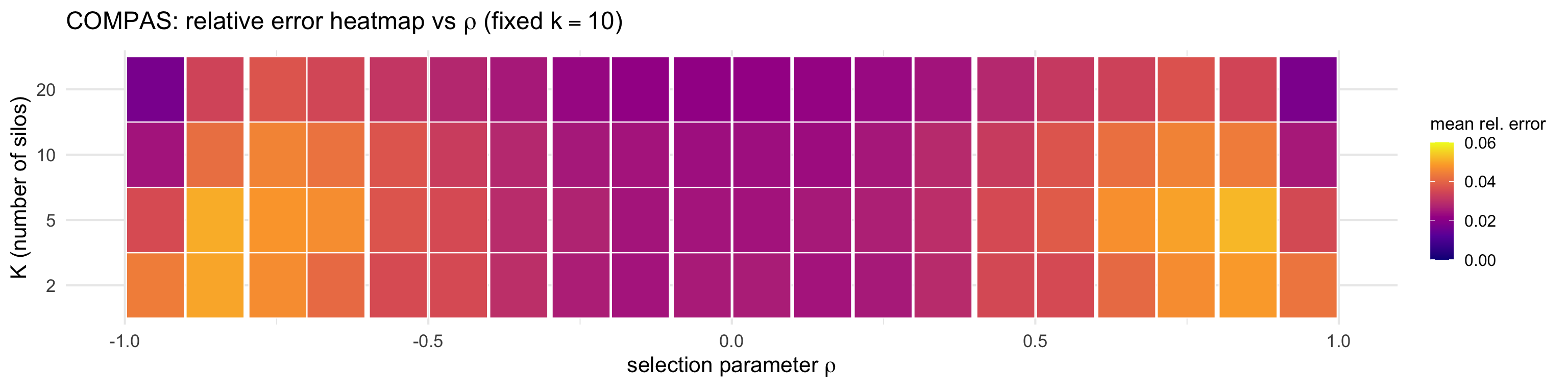}

\includegraphics[width=.99\textwidth]{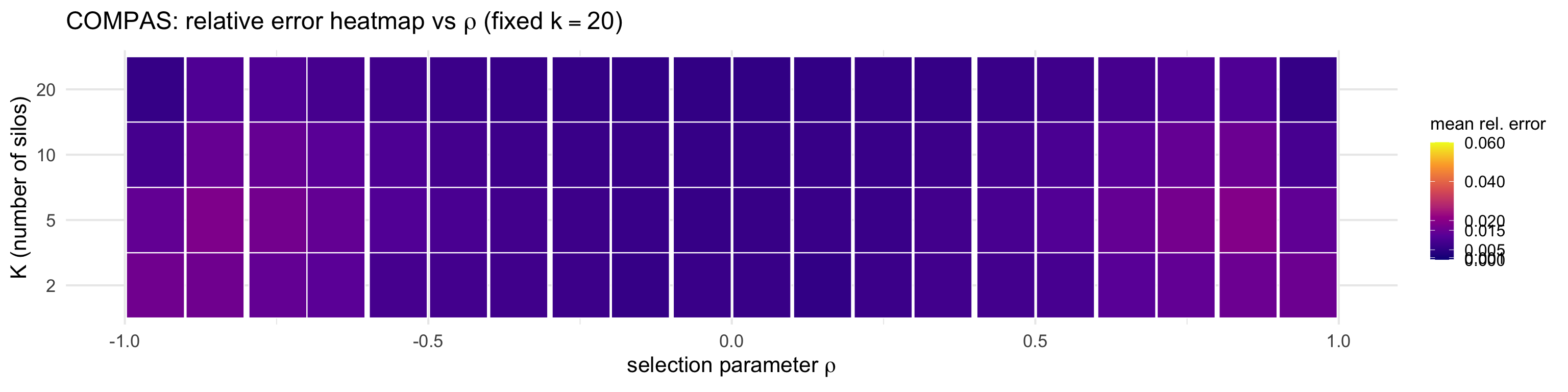}
\caption{\textbf{COMPAS: error versus selection intensity.} ($k=10$ {\em top} and $k=20$ {\em bottom})
Relative error $\frac{|\widehat U_2(k)-U_2|}{U_2}$ versus the realized score--silo dependence,
measured by $|\mathrm{cor}_Z(Z,A)|$ (Spearman correlation).
Stronger selection (higher dependence) tends to increase estimation error at fixed sketch size,
consistent with more heterogeneous within-silo score distributions.}
\label{fig:compas-error-vs-corr}
\end{figure}

\paragraph{(vii) Centralized discretization vs.\ federated estimates.}
To disentangle the effects, we compare the centralized discretized target $U_2(k)$ (computed directly from the pooled sample on a
midpoint grid) to the federated estimates $\widehat{U}_2(k)$.
In these experiments, the two curves are close for moderate $k$, suggesting that (once communication is sufficient) the remaining gap
is primarily explained by discretization rather than federation noise. 

\begin{figure}[!htbp]
\centering
\includegraphics[width=.99\textwidth]{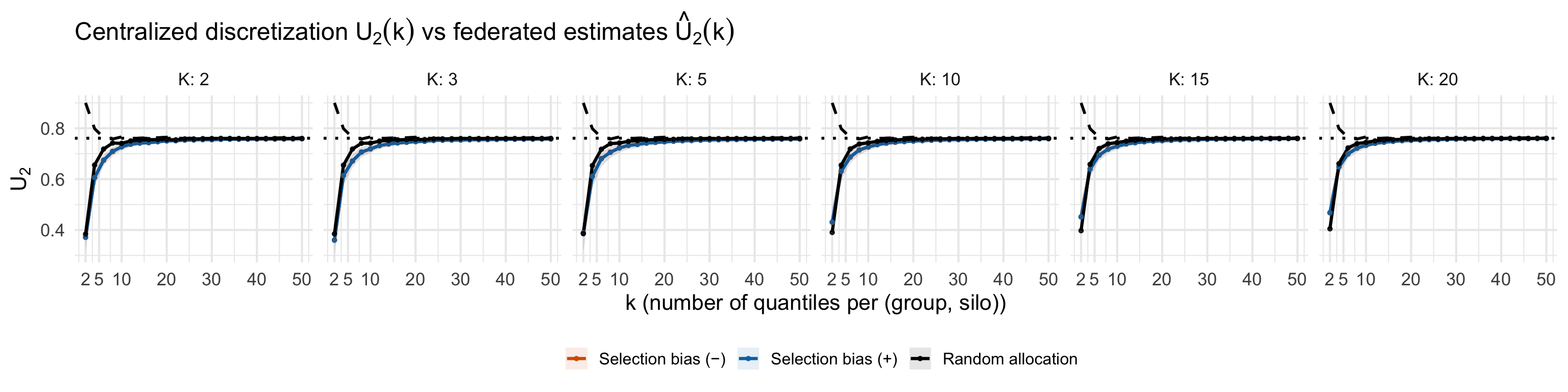}
\caption{\textbf{COMPAS: centralized discretization vs.\ federated estimation.}
Comparison of the centralized discretized target $U_2(k)$ (computed from pooled data on a $k$-point grid)
and the federated estimator $\widehat U_2(k)$ (mean over scenarios, with variability bands).
The horizontal reference is $U_2$ computed on a fine grid.
The closeness of the curves for moderate $k$ indicates that discretization and sketch resolution are the main drivers,
beyond federation noise.}
\label{fig:compas-central-vs-fed}
\end{figure}

\paragraph{(viii) Communication budget vs.\ error.}
With two groups, each silo transmits $2(k+1)$ scalars (namely $k$ quantile values plus one count per group),
so the total communication budget is $2d(k+1)$ scalars.
Figure~\ref{fig:compas-budget} plots relative error versus this budget, yielding a direct accuracy--communication trade-off:
for fixed \(d\), increasing \(k\) quickly improves accuracy; for fixed \(k\), larger \(d\) may increase noise through smaller local
samples, highlighting again the granularity--resolution trade-off. 

\begin{figure}[!htbp]
\centering
\includegraphics[width=.99\textwidth]{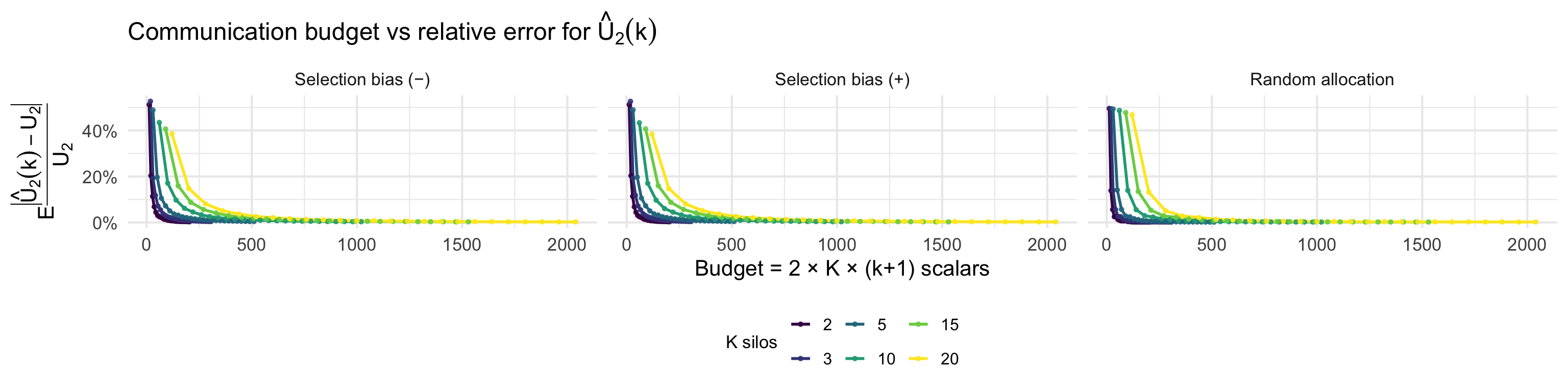}
\caption{\textbf{COMPAS: accuracy--communication trade-off.}
Relative error $\mathbb{E}\!\left[\frac{|\widehat U_2(k)-U_2|}{U_2}\right]$ versus a simple communication proxy
$\text{budget}=2d(k+1)$ scalars (two groups, $d$ silos, and $(k+1)$ quantiles per group and silo).
For fixed $d$, increasing $k$ quickly improves accuracy; for fixed $k$, larger $d$ may increase noise through smaller local samples,
highlighting a practical trade-off between federation granularity and sketch resolution.}
\label{fig:compas-budget}
\end{figure}

\section{Additional experiments on Adult}
\label{app:adult}

We use the Adult (Census Income) dataset from the UCI Machine Learning Repository \cite{BeckerKohavi1996Adult},
a widely used benchmark in the fairness literature \cite{Feldman2015DisparateImpact,Zafar2017FairnessConstraints,Agarwal2018Reductions}.

\paragraph{Data.}
We focus on a binary sensitive attribute derived from \texttt{race} by restricting individuals labeled as ``\texttt{Black}'' or
``\texttt{White}'', denoted (for consistency with the main text) as \emph{AA} and \emph{C}, respectively.
The prediction task is income classification (\texttt{income} $>$ 50K), and we work with a continuous score
$Z\in(0,1)$ defined as the predicted probability $\widehat p(X)$ from a logistic regression model trained without the sensitive
attribute (and, in our default specification, without \texttt{sex} as well).
This yields a smooth score distribution suitable for quantile sketches.

\paragraph{Federated silo interpretation.}
We reuse the federated protocol and the three allocation regimes from Appendix~\ref{app:compas} (random assignment; positive and
negative score--silo dependence via a Gaussian-copula construction with fixed group-wise margins).
Each silo releases only group counts and $k$ quantiles per group, and the server reconstructs global quantile functions to estimate
the distributional fairness targets (here reported through $|\Delta\bar y|$, $U_2$, $H_2$, and $W_2$; see Appendix~\ref{app:compas}
for definitions).

\begin{table}[t]
\caption{\textbf{Adult: random vs.\ selection-biased silo allocations.}
Global and per-silo summary statistics for AA vs.\ C scores under three regimes: random allocation (independence),
negative selection bias, and positive selection bias (Gaussian-copula assignment with fixed group-wise margins).
Rows report the mean-score gap $|\Delta\bar{z}|$ and distributional distances ($U_2$, $H_2$, $W_2$).}
\label{tab:adult:compare:three}
\centering
\begin{tabular}{lrrrrrr}
\toprule
metric & Global & Silo 1 & Silo 2 & Silo 3 & Silo 4 & Silo 5\\
\midrule
\multicolumn{7}{l}{\textit{Random assignment (independence)}}\\
$|\Delta\bar{z}|$ & 0.1338 & 0.1245 & 0.1458 & 0.1410 & 0.1366 & 0.1203\\
$U_2$            & 0.0029 & 0.0025 & 0.0034 & 0.0035 & 0.0032 & 0.0022\\
$H_2$            & 0.0021 & 0.0019 & 0.0024 & 0.0023 & 0.0021 & 0.0017\\
$W_2$            & 0.1821 & 0.1705 & 0.1989 & 0.1945 & 0.1869 & 0.1611\\
\addlinespace[0.7ex]

\multicolumn{7}{l}{\textit{Negative selection bias}}\\
$|\Delta\bar{z}|$ & 0.1338 & 0.6052 & 0.2813 & 0.0769 & 0.1402 & 0.4825\\
$U_2$            & 0.0029 & 0.0501 & 0.0111 & 0.0008 & 0.0022 & 0.0159\\
$H_2$            & 0.0021 & 0.0502 & 0.0123 & 0.0010 & 0.0024 & 0.0150\\
$W_2$            & 0.1821 & 0.6600 & 0.3348 & 0.0946 & 0.1734 & 0.5555\\
\addlinespace[0.7ex]

\multicolumn{7}{l}{\textit{Positive selection bias}}\\
$|\Delta\bar{z}|$ & 0.1338 & 0.0110 & 0.0468 & 0.0751 & 0.1252 & 0.0927\\
$U_2$            & 0.0029 & 0.0001 & 0.0006 & 0.0008 & 0.0013 & 0.0005\\
$H_2$            & 0.0021 & 0.0001 & 0.0006 & 0.0010 & 0.0015 & 0.0005\\
$W_2$            & 0.1821 & 0.0316 & 0.0735 & 0.0940 & 0.1372 & 0.1018\\
\bottomrule

\end{tabular}
\end{table}

\subsection{Figures and discussion}
\label{app:adult:figs}

\paragraph{(i) Convergence in \(k\).}
We study how the sketch size \(k\) affects the accuracy of \(\widehat U_2(k)\).
Figure~\ref{fig:adult:mae-k} reports the mean absolute error
\(\mathrm{MAE}(k)=\mathbb{E}\big|\widehat{U}_2(k)-U_2\big|\) as a function of \(k\), for several values of \(d\) and for each
allocation regime. Across regimes, \(\widehat U_2(k)\) stabilizes rapidly, with diminishing returns beyond a few dozen quantiles.

\begin{figure}[!htbp]
\centering
\includegraphics[width=.99\textwidth]{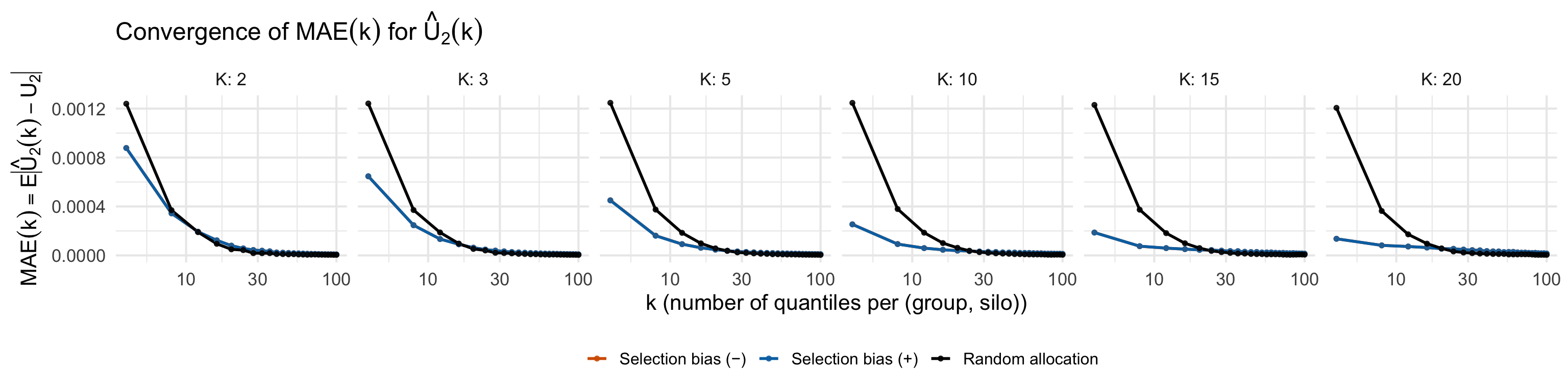}
\caption{\textbf{Adult: convergence in $k$.}
Mean absolute error $\mathrm{MAE}(k)=\mathbb{E}\,|\widehat U_2(k)-U_2|$ as a function of the number of quantiles $k$
(sent per group and per silo) on a log scale, for several numbers of silos $d$ and for different allocation regimes.}
\label{fig:adult:mae-k}
\end{figure}

\paragraph{(ii) Distribution over scenarios.}
Figure~\ref{fig:adult:box-k} shows the distribution of \(\widehat U_2(k)\) across Monte Carlo allocations for a fixed \(d\).
The estimator concentrates near the centralized benchmark \(U_2\) for moderate \(k\), illustrating robustness to scenario variability.

\begin{figure}[!htbp]
\centering
\includegraphics[width=.99\textwidth]{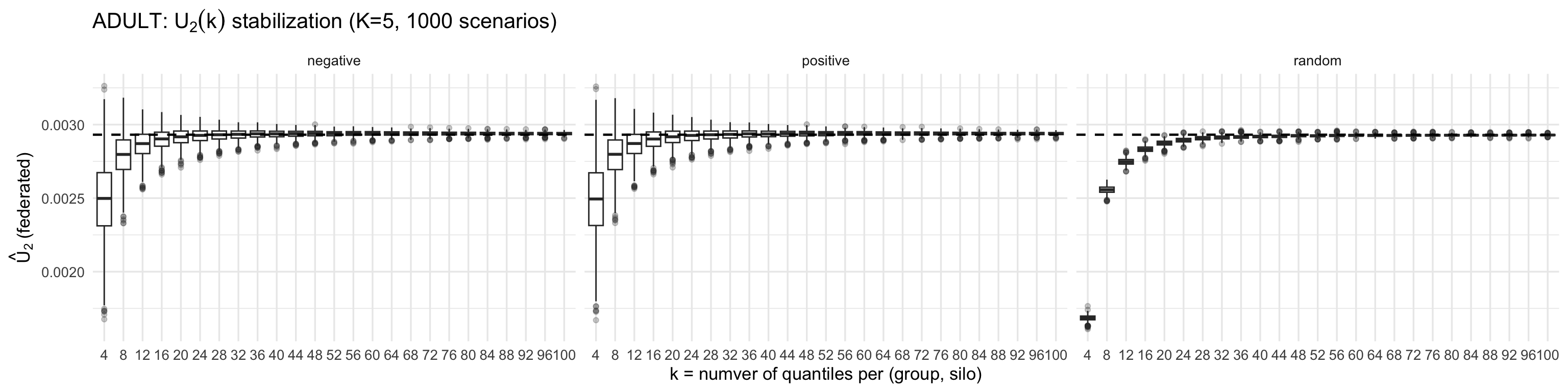}
\caption{\textbf{Adult: stabilization of $\widehat U_2(k)$.}
Boxplots of $\widehat U_2(k)$ over repeated allocations for a fixed $d$ and varying $k$.
The dashed horizontal line is the centralized benchmark $U_2$.}
\label{fig:adult:box-k}
\end{figure}

\paragraph{(iii) Reliability profiles \(p_{\mathrm{ok}}(k)\).}
We also report
\[
p_{\mathrm{ok}}(k)=\mathbb{P}\!\left(\frac{|\widehat{U}_2(k)-U_2|}{U_2}\le \tau\right),\qquad (\tau=1\%).
\]
Figure~\ref{fig:adult:pok-k} shows that increasing \(k\) quickly pushes most runs into the target-accuracy region.
Figure~\ref{fig:adult:k95-heatmap} summarizes this via \(k_{95}\), the smallest \(k\) such that \(p_{\mathrm{ok}}(k)\ge 95\%\).

\begin{figure}[!htbp]
\centering
\includegraphics[width=.99\textwidth]{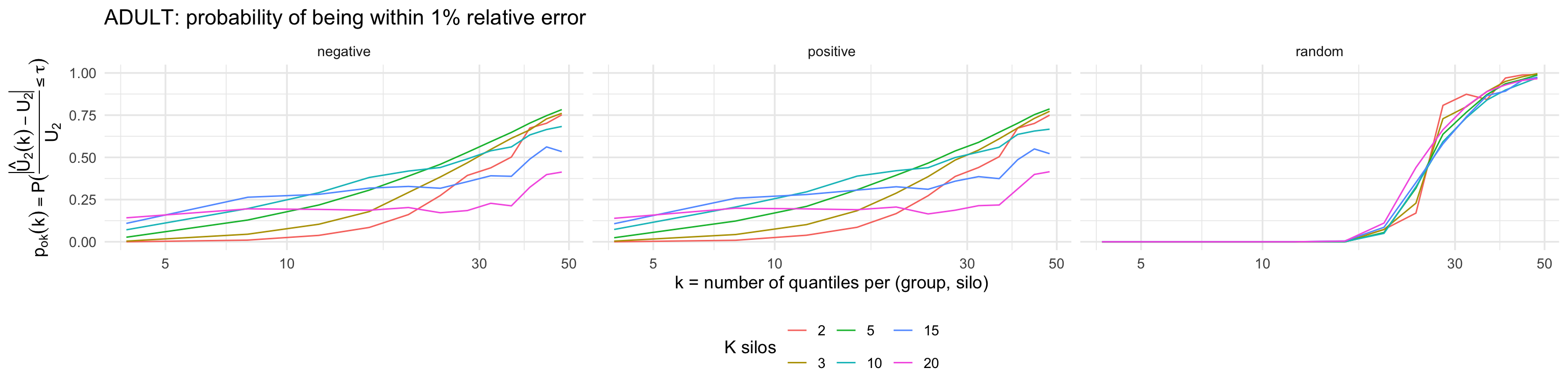}
\caption{\textbf{Adult: reliability profile.}
$p_{\mathrm{ok}}(k)=\mathbb{P}\!\left(\frac{|\widehat U_2(k)-U_2|}{U_2}\le \tau\right)$ (here $\tau=1\%$)
as a function of $k$ (log scale), for several $d$ and allocation regimes.}
\label{fig:adult:pok-k}
\end{figure}

\begin{figure}[!htbp]
\centering
\includegraphics[width=.99\textwidth]{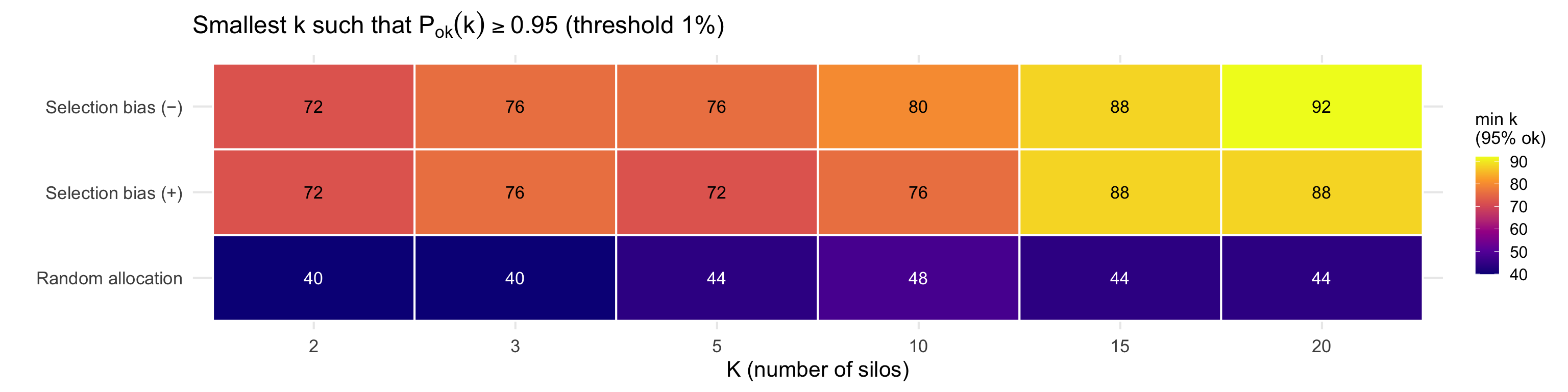}
\caption{\textbf{Adult: minimal sketch size for a target reliability.}
Heatmap of $k_{95}$, the smallest $k$ such that $p_{\mathrm{ok}}(k)\ge 95\%$ (with $\tau=1\%$), as a function of $d$
and the allocation regime.}
\label{fig:adult:k95-heatmap}
\end{figure}

\paragraph{(iv) Error vs.\ selection intensity.}
We relate accuracy to realized score--silo dependence measured by $|\mathrm{cor}_Z(Z,A)|$ (Spearman).
Figure~\ref{fig:adult:error-vs-corr} shows that stronger selection tends to increase error at fixed \(k\), consistent with more
heterogeneous within-silo score distributions.

\begin{figure}[!htbp]
\centering
\includegraphics[width=.99\textwidth]{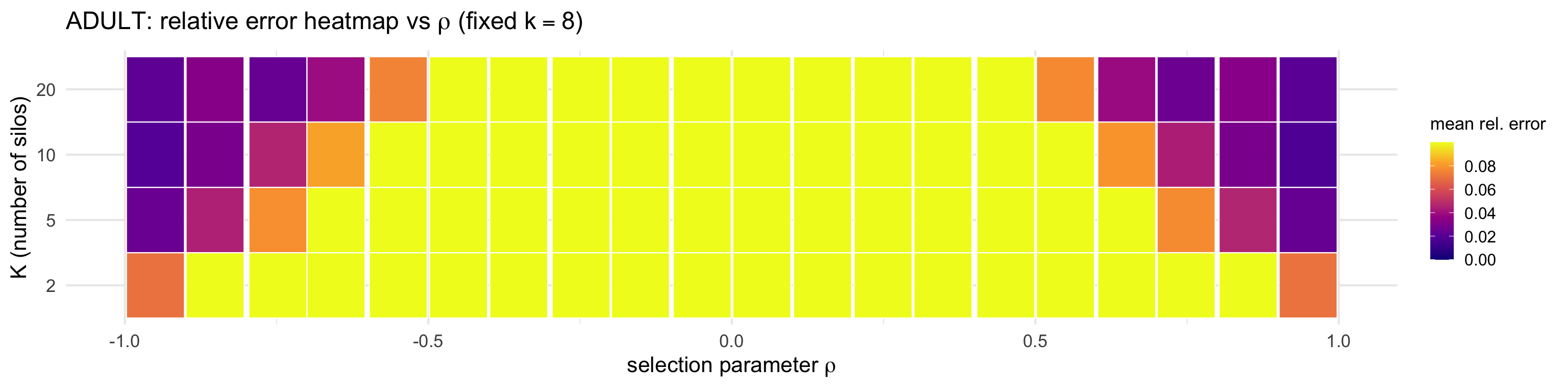}\\[-0.2em]
\includegraphics[width=.99\textwidth]{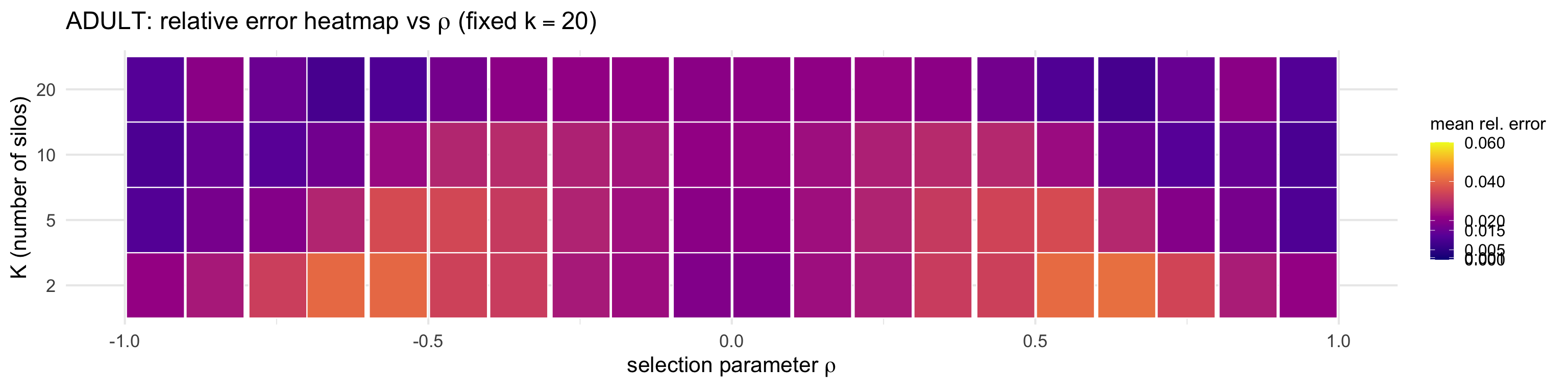}
\caption{\textbf{Adult: error versus selection intensity.} ($k=10$ top, $k=20$ bottom)
Relative error $\frac{|\widehat U_2(k)-U_2|}{U_2}$ versus $|\mathrm{cor}_Z(Z,A)|$ (Spearman).}
\label{fig:adult:error-vs-corr}
\end{figure}

\paragraph{(v) Centralized discretization vs.\ federated estimates.}
To separate discretization from federation noise, Figure~\ref{fig:adult:central-vs-fed} compares the centralized discretized
target \(U_2(k)\) (pooled data on a \(k\)-point grid) with the federated estimate \(\widehat{U}_2(k)\).
For moderate \(k\), the curves are close, indicating that the remaining gap to the fine-grid reference \(U_2\) is primarily driven
by discretization.

\begin{figure}[!htbp]
\centering
\includegraphics[width=.99\textwidth]{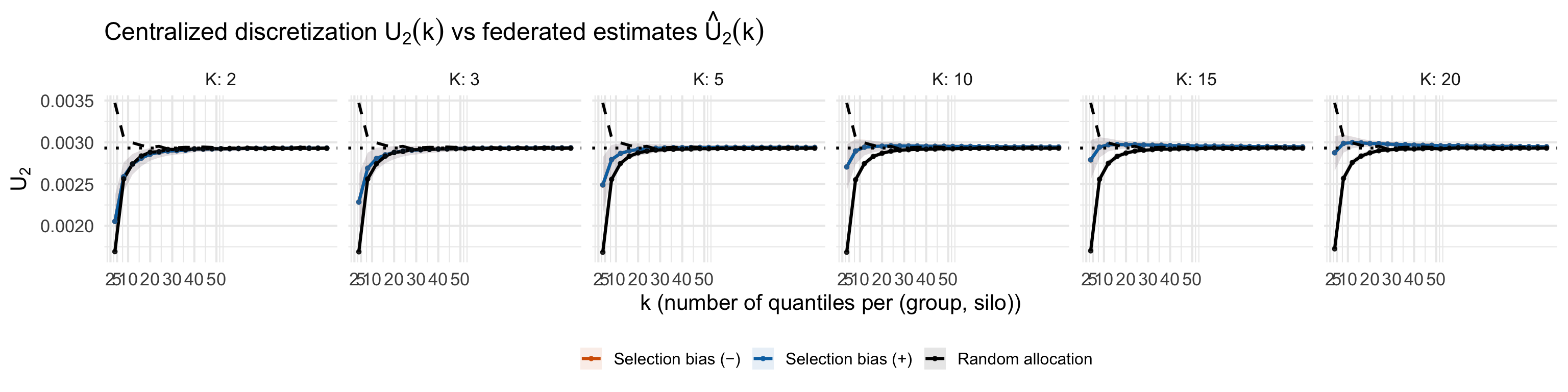}
\caption{\textbf{Adult: centralized discretization vs.\ federated estimation.}
Centralized $U_2(k)$ (pooled data on a $k$-grid) versus federated $\widehat U_2(k)$ (mean over scenarios with variability bands).
The horizontal reference is $U_2$ computed on a fine grid.}
\label{fig:adult:central-vs-fed}
\end{figure}

\paragraph{(vi) Communication budget vs.\ error.}
With two groups, each silo transmits $2(k+1)$ scalars, so the total budget is $2d(k+1)$.
Figure~\ref{fig:adult:budget} reports the relative error versus this proxy, yielding a direct accuracy--communication trade-off.

\begin{figure}[!htbp]
\centering
\includegraphics[width=.99\textwidth]{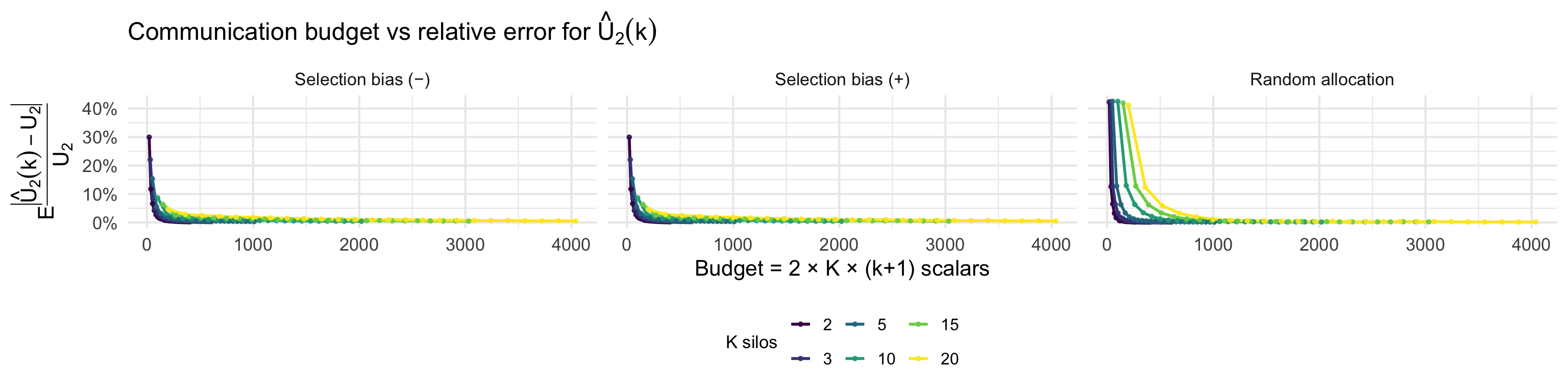}
\caption{\textbf{Adult: accuracy--communication trade-off.}
$\mathbb{E}\!\left[\frac{|\widehat U_2(k)-U_2|}{U_2}\right]$ versus $\text{budget}=2d(k+1)$.}
\label{fig:adult:budget}
\end{figure}

\section{Additional experiments on Law}
\label{app:law}

We use the Law School dataset derived from the LSAC National Longitudinal Bar Passage Study \cite{Wightman1998NLBPS},
which has been frequently used as a benchmark for fairness-aware learning and evaluation
\cite{LeQuy2022SurveyDatasetsFairness,Narasimhan2020PairwiseFairness}.

\paragraph{Data.}
We consider a binary sensitive attribute by restricting to individuals labeled ``\texttt{black}'' or ``\texttt{white}'', denoted
(as in the main text) as \emph{AA} and \emph{C}, respectively.
We take as outcome the bar passage indicator and define a continuous score $Z\in(0,1)$ as the predicted probability of passing
the bar from a logistic regression model trained on non-sensitive covariates (e.g., LSAT, UGPA, and other available features),
excluding the sensitive attribute.
This yields smooth group-wise score distributions suitable for quantile-based sketches.

\paragraph{Federated silo interpretation.}
We reuse the federated protocol and the three allocation regimes from Appendix~\ref{app:compas}:
random assignment (independence), and two selection-biased regimes obtained by introducing score--silo dependence through a
Gaussian-copula construction with fixed group-wise margins.
Each silo releases only group counts and $k$ quantiles per group, and the server reconstructs global quantile functions to estimate
the distributional fairness targets (reported below through $|\Delta\bar y|$, $U_2$, $H_2$, and $W_2$; see Appendix~\ref{app:compas}
for definitions).

\begin{table}[t]
\caption{\textbf{Law: random vs.\ selection-biased silo allocations.}
Global and per-silo summary statistics for AA (\texttt{black}) vs.\ C (\texttt{white}) scores under three regimes:
random assignment (independence), negative selection bias, and positive selection bias (Gaussian-copula assignment with fixed
group-wise margins).
Rows report the mean-score gap $|\Delta\bar{z}|$ and distributional distances ($U_2$, $H_2$, $W_2$).}
\label{tab:law:compare:three}
\centering
\begin{tabular}{lrrrrrr}
\toprule
metric & Global & Silo 1 & Silo 2 & Silo 3 & Silo 4 & Silo 5\\
\midrule
\multicolumn{7}{l}{\textit{Random assignment (independence)}}\\
$|\Delta\bar{z}|$ & 0.2175 & 0.2442 & 0.2071 & 0.2223 & 0.1948 & 0.2202\\
$U_2$            & 0.0036 & 0.0043 & 0.0036 & 0.0036 & 0.0030 & 0.0037\\
$H_2$            & 0.0064 & 0.0085 & 0.0065 & 0.0066 & 0.0063 & 0.0065\\
$W_2$            & 0.2453 & 0.2712 & 0.2402 & 0.2478 & 0.2204 & 0.2477\\
\addlinespace[0.7ex]

\multicolumn{7}{l}{\textit{Negative selection bias}}\\
$|\Delta\bar{z}|$ & 0.2175 & 0.3480 & 0.1695 & 0.0654 & 0.0045 & 0.1358\\
$U_2$            & 0.0036 & 0.0202 & 0.0023 & 0.0002 & 0.0000 & 0.0002\\
$H_2$            & 0.0064 & 0.0373 & 0.0088 & 0.0013 & 0.0000 & 0.0006\\
$W_2$            & 0.2453 & 0.3821 & 0.1917 & 0.0734 & 0.0261 & 0.1606\\
\addlinespace[0.7ex]

\multicolumn{7}{l}{\textit{Positive selection bias}}\\
$|\Delta\bar{z}|$ & 0.2175 & 0.1954 & 0.0998 & 0.0736 & 0.0371 & 0.0225\\
$U_2$            & 0.0036 & 0.0071 & 0.0009 & 0.0002 & 0.0000 & 0.0000\\
$H_2$            & 0.0064 & 0.0137 & 0.0028 & 0.0012 & 0.0003 & 0.0001\\
$W_2$            & 0.2453 & 0.2093 & 0.1154 & 0.0869 & 0.0424 & 0.0313\\
\bottomrule
\end{tabular}
\end{table}

\subsection{Figures and discussion}
\label{app:law:figs}

\paragraph{(i) Convergence in \(k\).}
We study how the sketch size \(k\) affects the accuracy of \(\widehat U_2(k)\).
Figure~\ref{fig:law:mae-k} reports the mean absolute error
\(\mathrm{MAE}(k)=\mathbb{E}\big|\widehat{U}_2(k)-U_2\big|\) as a function of \(k\), for several values of \(d\) and for each
allocation regime. Across regimes, \(\widehat U_2(k)\) stabilizes rapidly, with diminishing returns beyond a few dozen quantiles.

\begin{figure}[!htbp]
\centering
\includegraphics[width=.99\textwidth]{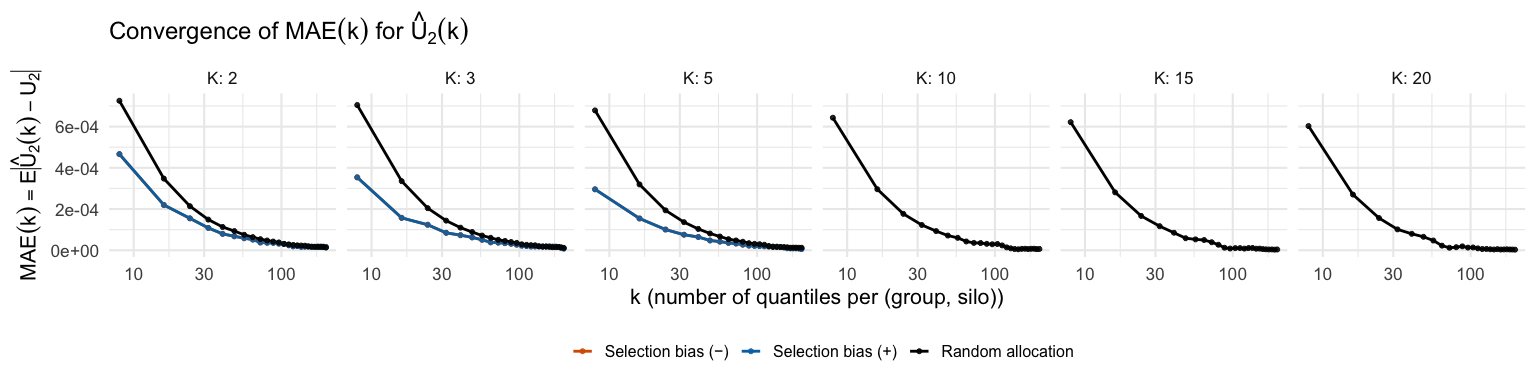}
\caption{\textbf{Law: convergence in $k$.}
Mean absolute error $\mathrm{MAE}(k)=\mathbb{E}\,|\widehat U_2(k)-U_2|$ as a function of the number of quantiles $k$
(sent per group and per silo) on a log scale, for several numbers of silos $d$ and for different allocation regimes.}
\label{fig:law:mae-k}
\end{figure}

\paragraph{(ii) Distribution over scenarios.}
Figure~\ref{fig:law:box-k} shows the distribution of \(\widehat U_2(k)\) across Monte Carlo allocations for a fixed \(d\).
The estimator concentrates near the centralized benchmark \(U_2\) for moderate \(k\), illustrating robustness to scenario variability.

\begin{figure}[!htbp]
\centering
\includegraphics[width=.99\textwidth]{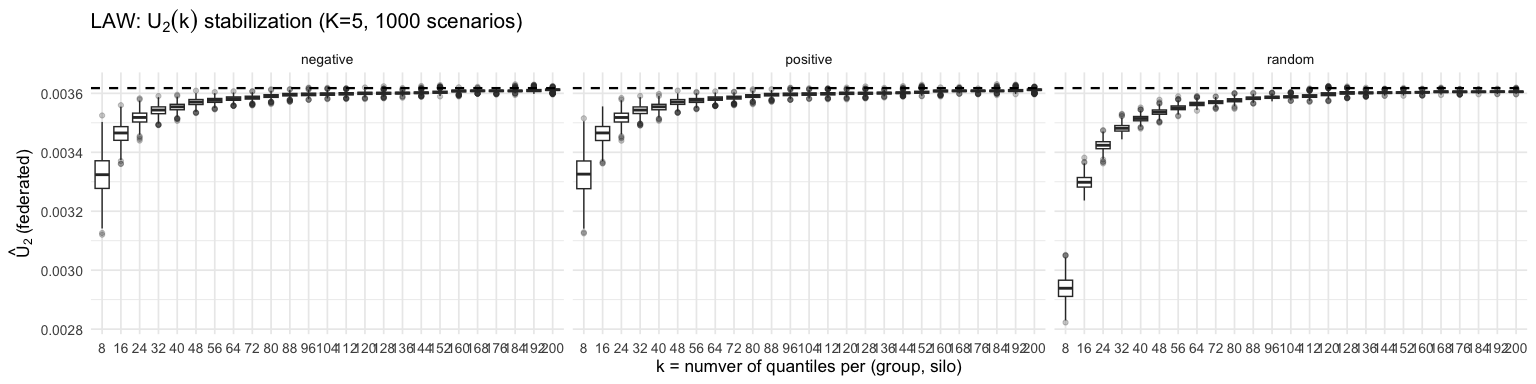}
\caption{\textbf{Law: stabilization of $\widehat U_2(k)$.}
Boxplots of $\widehat U_2(k)$ over repeated allocations for a fixed $d$ and varying $k$.
The dashed horizontal line is the centralized benchmark $U_2$.}
\label{fig:law:box-k}
\end{figure}

\paragraph{(iii) Reliability profiles \(p_{\mathrm{ok}}(k)\).}
We also report
\[
p_{\mathrm{ok}}(k)=\mathbb{P}\!\left(\frac{|\widehat{U}_2(k)-U_2|}{U_2}\le \tau\right),\qquad (\tau=1\%).
\]
Figure~\ref{fig:law:pok-k} shows that increasing \(k\) quickly pushes most runs into the target-accuracy region.
Figure~\ref{fig:law:k95-heatmap} summarizes this via \(k_{95}\), the smallest \(k\) such that \(p_{\mathrm{ok}}(k)\ge 95\%\).

\begin{figure}[!htbp]
\centering
\includegraphics[width=.99\textwidth]{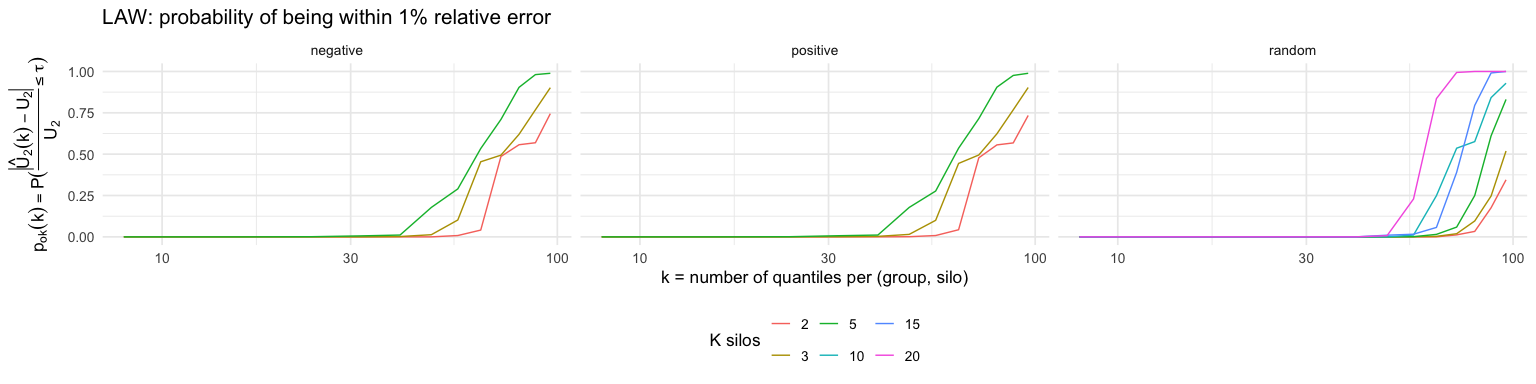}
\caption{\textbf{Law: reliability profile.}
$p_{\mathrm{ok}}(k)=\mathbb{P}\!\left(\frac{|\widehat U_2(k)-U_2|}{U_2}\le \tau\right)$ (here $\tau=1\%$)
as a function of $k$ (log scale), for several $d$ and allocation regimes.}
\label{fig:law:pok-k}
\end{figure}

\begin{figure}[!htbp]
\centering
\includegraphics[width=.99\textwidth]{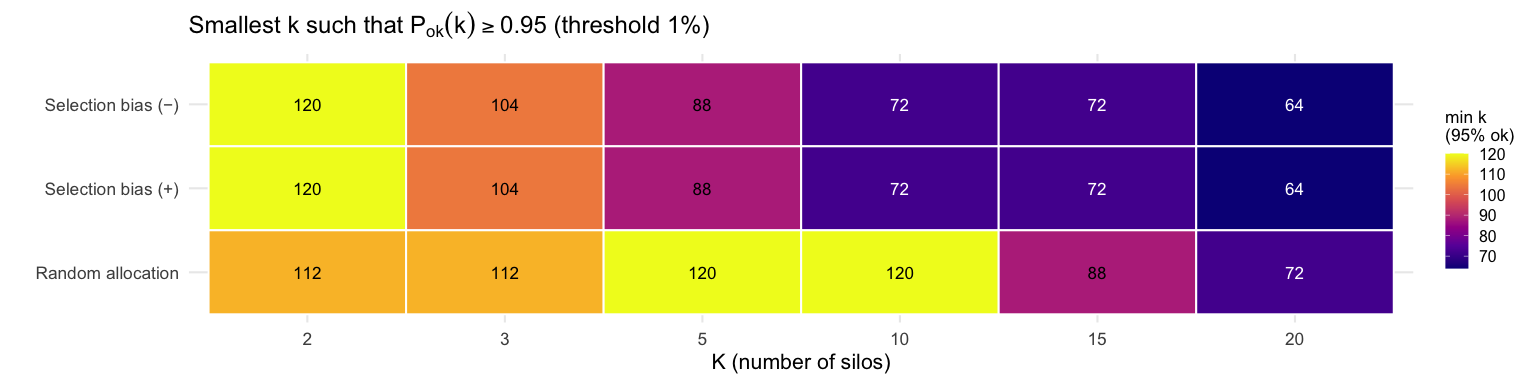}
\caption{\textbf{Law: minimal sketch size for a target reliability.}
Heatmap of $k_{95}$, the smallest $k$ such that $p_{\mathrm{ok}}(k)\ge 95\%$ (with $\tau=1\%$), as a function of $d$
and the allocation regime.}
\label{fig:law:k95-heatmap}
\end{figure}

\paragraph{(iv) Impact of the number of silos \(d\).}
At fixed \(k\), increasing \(d\) makes local \((\text{group},\text{silo})\) samples smaller on average, hence local quantiles
noisier; this effect is visible through the upward shift of errors / slower reliability gains as \(d\) grows in
Figures~\ref{fig:law:mae-k} and \ref{fig:law:pok-k}, and it is summarized by the larger \(k_{95}\) values in
Figure~\ref{fig:law:k95-heatmap}. This highlights the trade-off between federation granularity (large \(d\)) and sketch resolution
(larger \(k\)).

\end{document}